\documentclass{article}

\PassOptionsToPackage{hidelinks}{hyperref}
\usepackage[utf8]{inputenc}
\usepackage[T1]{fontenc}
\usepackage{xspace}
\usepackage[accepted]{icml2026}
\usepackage{hyperref}

\usepackage{times}
\usepackage{soul}
\usepackage[utf8]{inputenc}
\usepackage{caption}
\usepackage{graphicx}
\usepackage{microtype}
\usepackage{lineno}
\usepackage{lipsum,booktabs}
\usepackage{amsmath,mathrsfs,amssymb,amsfonts,bm}
\usepackage{rotating}
\usepackage{pdflscape}
\usepackage{tcolorbox}
\usepackage{wrapfig}

\usepackage[hidelinks]{hyperref}
\usepackage{url,dsfont,nicefrac}
\hypersetup{
    colorlinks,
    breaklinks,
    linkcolor = blue,
    citecolor = blue,
    urlcolor  = black,
}
\usepackage{multirow}
\usepackage{makecell}
\usepackage{anyfontsize}
\usepackage{bbding}
\usepackage{paralist}
\usepackage{enumerate}
\allowdisplaybreaks
\usepackage{appendix}
\usepackage{multirow,makecell,tabularx}
\usepackage{threeparttable}
\usepackage{pifont}
\usepackage{tikz}
\usepackage{wrapfig}

\usepackage[ruled,linesnumbered,algo2e]{algorithm2e}
\usepackage{algorithmic,algorithm}
\renewcommand{\algorithmicrequire}{ \textbf{Input:}}
\renewcommand{\algorithmicensure}{ \textbf{Output:}}
\newlength\myindent
\renewcommand{\hat}{\widehat}

\def \D {\mathcal{D}}
\def \E {\mathbb{E}}

\def \R {\mathbb{R}}

\def \X {\mathcal{X}}

\def \P {\mathbb{P}}

\def \thetah {\hat{\theta}}

\usepackage{mathtools}
\usepackage{autobreak}
\let\norm\undefined 
\newcommand\norm[1]{\left\| #1 \right\|}

\newcommand\abs[1]{\left| #1 \right|}

\newcommand\ceil[1]{\lceil #1 \rceil}
\newcommand*\diff{\mathop{}\!\mathrm{d}}

\newcommand\ind[1]{\mathds{1}_{\{#1\}}}
\newcommand\sbr[1]{\left( #1 \right)}
\newcommand\mbr[1]{\left[ #1 \right]}
\newcommand\bbr[1]{\left\{ #1 \right\}}

\newcommand\given[1][]{\:#1\vert\:}

\DeclareMathOperator*{\argmin}{arg\,min}

\def \define {\triangleq}

\numberwithin{equation}{section}

\usepackage{amsthm}

\newtheorem{myThm}{Theorem}

\newtheorem{myLemma}{Lemma}

\theoremstyle{definition}
\newtheorem{myAssum}{Assumption}

\usepackage{subcaption}
\usepackage{color}

\definecolor{wine_red}{RGB}{228,48,64}
\definecolor{DSgray}{cmyk}{0,1,0,0}

\usepackage{prettyref}

\newcommand{\savehyperref}[2]{\texorpdfstring{\hyperref[#1]{#2}}{#2}}
\newrefformat{eq}{\savehyperref{#1}{Eq.~\textup{(\ref*{#1})}}}
\newrefformat{eqn}{\savehyperref{#1}{Eq.~(\ref*{#1})}}
\newrefformat{lem}{\savehyperref{#1}{Lemma~\ref*{#1}}}
\newrefformat{lemma}{\savehyperref{#1}{Lemma~\ref*{#1}}}
\newrefformat{def}{\savehyperref{#1}{Definition~\ref*{#1}}}
\newrefformat{line}{\savehyperref{#1}{Line~\ref*{#1}}}
\newrefformat{thm}{\savehyperref{#1}{Theorem~\ref*{#1}}}
\newrefformat{table}{\savehyperref{#1}{Table~\ref*{#1}}}
\newrefformat{corr}{\savehyperref{#1}{Corollary~\ref*{#1}}}
\newrefformat{cor}{\savehyperref{#1}{Corollary~\ref*{#1}}}
\newrefformat{sec}{\savehyperref{#1}{Section~\ref*{#1}}}
\newrefformat{subsec}{\savehyperref{#1}{Section~\ref*{#1}}}
\newrefformat{app}{\savehyperref{#1}{Appendix~\ref*{#1}}}
\newrefformat{appendix}{\savehyperref{#1}{Appendix~\ref*{#1}}}
\newrefformat{assum}{\savehyperref{#1}{Assumption~\ref*{#1}}}
\newrefformat{ex}{\savehyperref{#1}{Example~\ref*{#1}}}
\newrefformat{fig}{\savehyperref{#1}{Figure~\ref*{#1}}}
\newrefformat{alg}{\savehyperref{#1}{Algorithm~\ref*{#1}}}
\newrefformat{remark}{\savehyperref{#1}{Remark~\ref*{#1}}}
\newrefformat{conj}{\savehyperref{#1}{Conjecture~\ref*{#1}}}
\newrefformat{prop}{\savehyperref{#1}{Proposition~\ref*{#1}}}
\newrefformat{proto}{\savehyperref{#1}{Protocol~\ref*{#1}}}
\newrefformat{prob}{\savehyperref{#1}{Problem~\ref*{#1}}}
\newrefformat{property}{\savehyperref{#1}{Property~\ref*{#1}}}
\newrefformat{claim}{\savehyperref{#1}{Claim~\ref*{#1}}}
\newrefformat{que}{\savehyperref{#1}{Question~\ref*{#1}}}
\newrefformat{op}{\savehyperref{#1}{Open Problem~\ref*{#1}}}
\newrefformat{fn}{\savehyperref{#1}{Footnote~\ref*{#1}}}
\newrefformat{require}{\savehyperref{#1}{Requirement~\ref*{#1}}}

\icmltitlerunning{Robust Length Prediction: A Perspective from Heavy-Tailed Prompt-Conditioned Distributions}

\begin{document}

\twocolumn[
\icmltitle{Robust Length Prediction: A Perspective from Heavy-Tailed Prompt-Conditioned Distributions}

\begin{icmlauthorlist}
    \icmlauthor{Jing Wang}{keylab,ai}
    \icmlauthor{Yu-Yang Qian}{keylab,ai}
    \icmlauthor{Ke Xue}{keylab,ai}
    \icmlauthor{Chao Qian}{keylab,ai}
    \icmlauthor{Peng Zhao}{keylab,ai}
    \icmlauthor{Zhi-Hua Zhou}{keylab,ai}
\end{icmlauthorlist}

\icmlaffiliation{keylab}{State Key Laboratory of Novel Software Technology, Nanjing University, Nanjing 210023, China}
\icmlaffiliation{ai}{School of Artificial Intelligence, Nanjing University, Nanjing 210023, China}

\icmlcorrespondingauthor{Zhi-Hua Zhou}{zhouzh@lamda.nju.edu.cn}
\icmlkeywords{Large Language Models, Length Prediction, Heavy Tail Distribution, Output Length Inference}

\vskip 0.3in
]

\printAffiliationsAndNotice{}

\begin{abstract}
    Output-length prediction is important for efficient LLM serving, as it directly affects batching, memory reservation, and scheduling. For prompt-only length prediction, most existing methods use a one-shot sampled length as the label, implicitly treating each prompt as if it had one true target length. We show that this is unreliable: even under a fixed model and decoding setup, the same prompt induces a \emph{prompt-conditioned output length distribution}, not a deterministic scalar, and this distribution is consistent with \emph{heavy-tailed} behavior. Motivated by this, we cast length prediction as robust estimation from heavy-tailed prompt-conditioned length distributions. We propose prompt-conditioned length distribution (ProD) methods, which construct training targets from multiple independent generations of the same prompt. Two variants are developed to reuse the served LLM's hidden states: \mbox{ProD-M}, which uses a median-based target for robust point prediction, and ProD-D, which uses a distributional target that preserves prompt-conditioned uncertainty. We provide theoretical justifications by analyzing the estimation error under a surrogate model. Experiments across diverse scenarios show consistent gains in prediction quality.
\end{abstract}

\section{Introduction}
\label{sec:introduction}

Large language models (LLMs) have achieved remarkable success across a wide range of tasks, including chatbots~\citep{NeurIPS'22:ChatGPT}, mathematical reasoning~\citep{Nature'25:DeepSeek-R1}, and agentic tasks~\citep{SCIS'25:LLMAgentSurvey}. As the inference cost of LLMs grows with increasing parameter size, the \emph{generation length} associated with different prompts emerges as a key factor for efficiency. For example, in serving scenarios~\citep{NeurIPS'23:sglang}, knowing the output length in advance enables better KV cache reservation and improved scheduling, directly enhancing throughput and reducing latency. In agentic workflows~\citep{LM'24:agent-network} where each action triggers one or more LLM calls, predicting the output length allows the orchestrator to improve end-to-end latency. These scenarios make \emph{length prediction} a key building block for efficient LLM inference.

From a broader perspective, modern LLM serving is a \emph{learning-augmented scheduling} problem: the system must allocate time-shared computational resources across concurrent requests, including GPU memory, compute slots, batch positions, etc. The quality of these allocation decisions hinges on learned predictions about future resource demands. This viewpoint is precisely captured by the CoRE-learning framework~\citep{NSR'24:core}, which argues that when computational resources are time-shared, learning and scheduling become inseparable; the learning component must produce signals that directly inform scheduling decisions, not merely optimize standalone prediction accuracy. Output length prediction is a concrete and central instantiation of this paradigm: it provides the serving scheduler with a per-request estimate of computational cost, bridging the gap between a learned predictor and the resource allocation it is meant to support.

Given this trend, \emph{length prediction} for LLMs has attracted growing attention. Previous works have devoted considerable effort to predicting output length. $S^3$~\citep{NeurIPS'23:s3} trains an auxiliary model to predict the output sequence length before generation. LTR~\citep{NeurIPS'24:ltr} employs a learning-to-rank model to predict the relative order of output lengths within a batch. TRAIL~\citep{ICLR'25:embedding-scheduling} attaches a lightweight classifier to the target LLM's intermediate embeddings to predict output length in an online manner, and EGTP~\citep{ICLR'26:EGTP} further improves representation quality through entropy-weighted informative token representations for more accurate length prediction.

\begin{figure*}[t]
    \centering
    \begin{subfigure}[t]{0.32\linewidth}
        \centering
        \includegraphics[width=\linewidth]{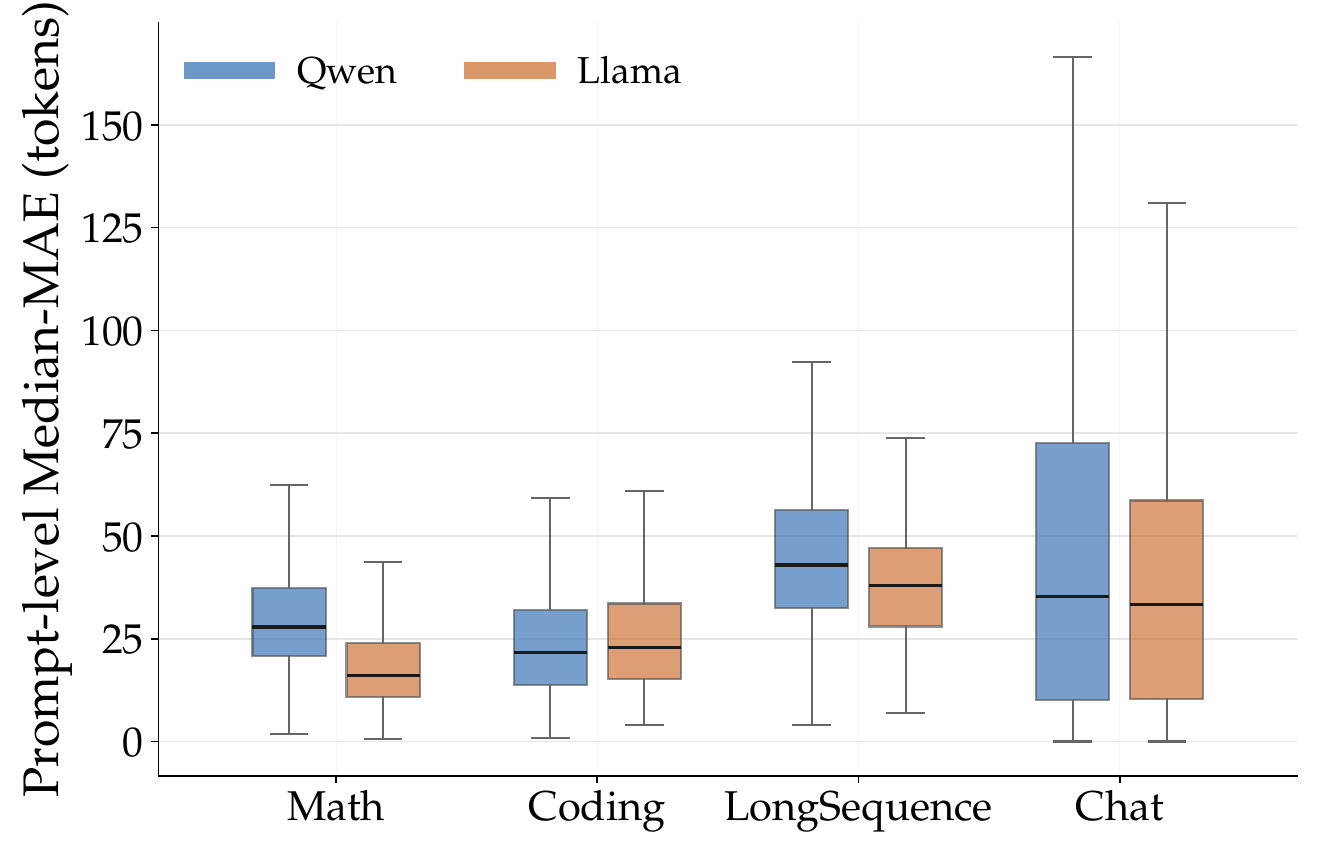}
        \caption{Noise radius across 8 settings.}
        \label{fig:key_observations_noise_floor}
    \end{subfigure}
    \hfill
    \begin{subfigure}[t]{0.32\linewidth}
        \centering
        \includegraphics[width=\linewidth]{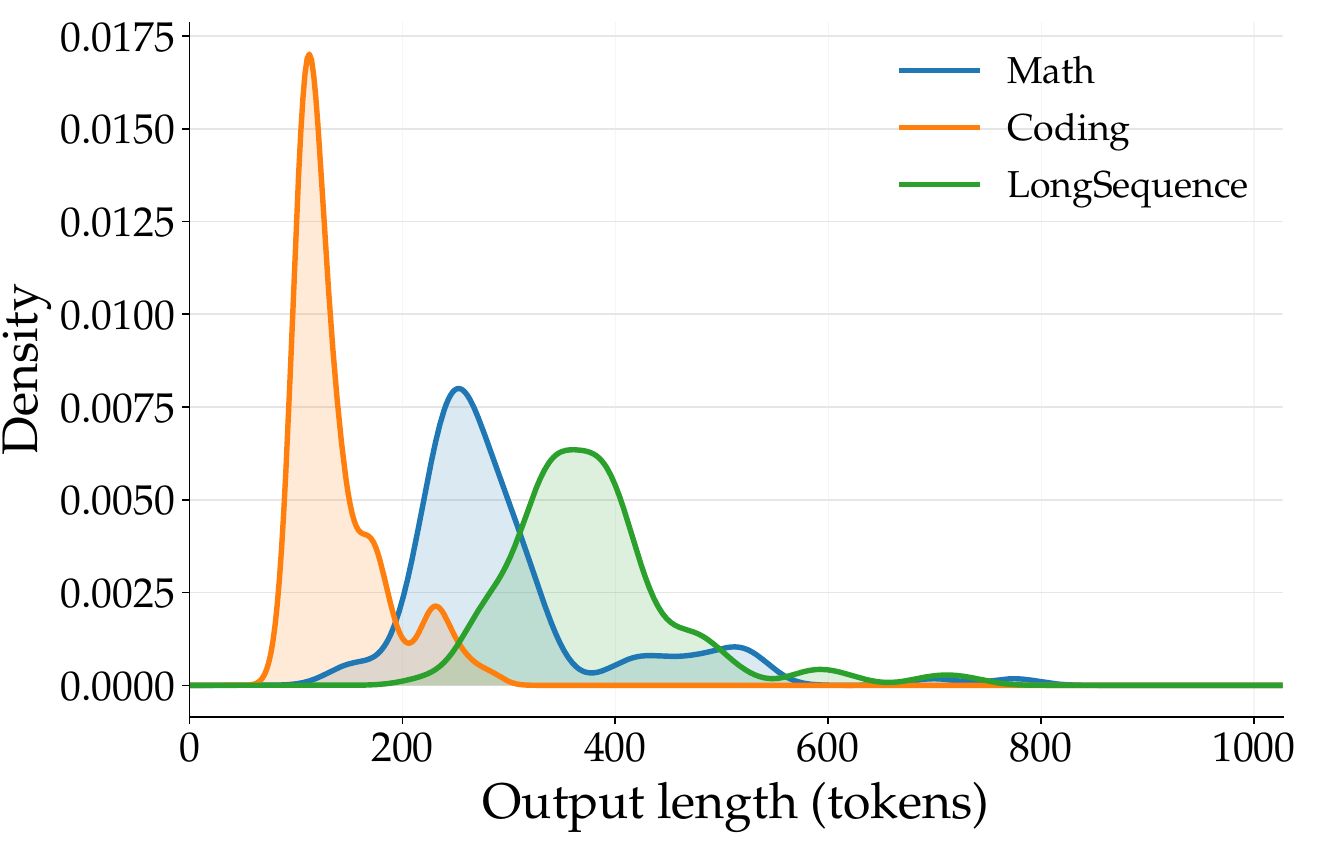}
        \caption{Heavy-tail examples on Qwen.}
        \label{fig:key_observations_heavy_tail_qwen}
    \end{subfigure}
    \hfill
    \begin{subfigure}[t]{0.32\linewidth}
        \centering
        \includegraphics[width=\linewidth]{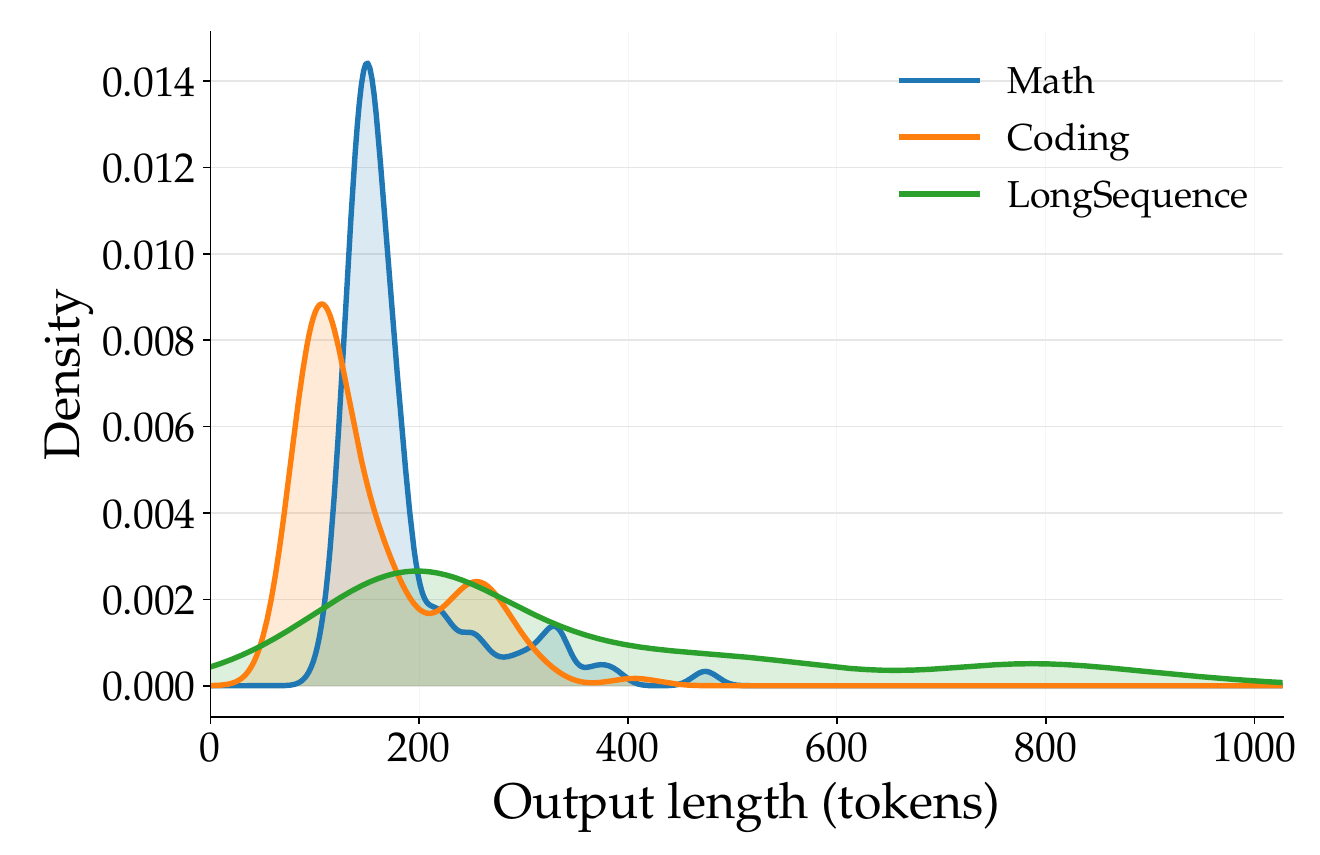}
        \caption{Heavy-tail examples on Llama.}
        \label{fig:key_observations_heavy_tail_llama}
    \end{subfigure}
    \caption{\textbf{Key observations about prompt-conditioned output length.} Figure (a) summarizes prompt-level median-centered noise radius across the Math, Coding, LongSequence, and Chat scenarios via repeated-sampling Median-MAE. Figures (b) and (c) show representative repeated-sampling length distributions for Math, Coding, and LongSequence prompts under Qwen and Llama. Additional per-setting supporting plots are provided in Appendix~\ref{sec:app_key_observations}.}
    \label{fig:key_observations}
\end{figure*}

Despite achieving remarkable empirical success, these methods typically treat output length prediction as a deterministic scalar, i.e., use a one-shot sampled length as the label. However, for LLM decoding, generations from the same prompt and fixed configuration can produce noticeably different output lengths \emph{in the form of a distribution}, revealing an inherent variance. Moreover, the prompt-conditioned output length distribution is often consistent with a clear \emph{heavy-tail} tendency, where occasional long generations shift the sample mean far from the stable central value. Consequently, training a predictor with a one-shot sampled length conflicts with the \emph{distributional nature} of the LLM's output length, yielding a statistically misaligned objective.

In this work, we reveal a fundamental yet overlooked fact: \emph{output length is drawn from a prompt-conditioned distribution rather than being a deterministic scalar}. As shown in Figure~\ref{fig:key_observations}, under a fixed model and decoding configuration, the same prompt can yield completions whose lengths vary by multiples of each other. This fundamentally changes the statistical nature of the prediction problem: the prompt-conditioned length distribution is consistent with heavy-tailed behavior, so occasional extremely long generations can shift the sample mean far from the stable central tendency, making single-sample supervision an unreliable and statistically misaligned training signal.

To tackle this problem, we reframe length prediction through the lens of \emph{Prompt-conditioned length Distributions} (\textbf{ProD}). We show that under the MAE criterion, the Bayes-optimal prediction is the conditional median, and propose two complementary supervision strategies built on repeated sampling. \emph{(i)} \textbf{ProD-M}: \emph{Median supervision} draws multiple independent generations per prompt and uses their median as the training label, compressing the heavy-tailed distribution into a single robust point target. \emph{(ii)} \textbf{ProD-D}: \emph{Distributional supervision} projects the sampled lengths onto a binned histogram and uses it as a soft training target, requiring the predictor to match the full length distribution. Both variants reuse the served LLM's last-layer hidden states as input representations, requiring no auxiliary model or additional inference cost. Theoretical analysis indicates that increasing the repeated-sampling budget exponentially reduces the failure probability of the estimation bound, justifying the soundness of our method. Experiments on Qwen-2.5-7B and Llama-3-8B across four benchmarks show that our method reduces average prediction error by up to $25\%$ over previous SOTA methods.

\paragraph{Organization.} Section~\ref{sec:approach} introduces our approach. Section~\ref{sec:experiments} empirically evaluates the proposed method.  Section~\ref{sec:related_work} reviews the related work and provides further discussion.  Finally, Section~\ref{sec:conclusion} concludes the paper.

\section{Proposed Approach}
\label{sec:approach}
In this section, we first identify two key observations in the length-prediction problem. We then present the problem formulation and develop two components for robust length prediction: \textit{(i)} estimation by repeated sampling and \textit{(ii)} prediction by robust statistics.

\subsection{Key Observations}
Output-length prediction under a fixed served model and decoding configuration is fundamentally a stochastic conditional inference problem, rather than deterministic regression to one length label. To make this concrete, we study Qwen-2.5-7B and Llama-3-8B across four scenarios: Math, Coding, LongSequence, and Chat. Figure~\ref{fig:key_observations} highlights two recurring facts. First, once the served model and decoding setup are fixed within a setting, repeated generations of the same prompt still exhibit a median-centered noise radius. Second, even for a fixed prompt, the induced length distribution often has a clear heavy tail rather than a single deterministic target. Appendix~\ref{sec:app_key_observations} gives the full setting details, sample counts, and caveats. In the following, we discuss the two key observations.

\paragraph{Prompt-conditioned length has noise radius.}Figure~\ref{fig:key_observations_noise_floor} reports prompt-level Median-MAE from $16$ repeated generations per prompt, aggregated over all eight model--scenario settings. The setting-wise medians are already tens of tokens in every case: for Math they are $27.8$ and $16.1$ tokens on Qwen and Llama, for Coding $21.7$ and $23.0$, for LongSequence $42.9$ and $38.0$, and for Chat $35.3$ and $33.4$. Thus, the same prompt induces a substantial stochastic spread in output length. After normalizing by the prompt-wise median length, the median noise ratio still remains non-trivial, from $11.5\%$ on Qwen/Math to $18.2\%$ on Llama/LongSequence. Math is the most stable regime in this summary, whereas LongSequence and Chat carry several of the largest medians and upper tails. The prompt-only predictor is therefore not recovering a deterministic label, but estimating a \emph{prompt-conditioned output length distribution}.

\paragraph{Prompt-conditioned length is heavy-tailed.} Figures~\ref{fig:key_observations_heavy_tail_qwen} and~\ref{fig:key_observations_heavy_tail_llama} show six frozen representative prompts from the Math, Coding, and LongSequence scenarios. Their max-to-median ratios are $2.91\times$, $2.42\times$, and $1.98\times$ for Qwen, and $2.99\times$, $3.27\times$, and $4.03\times$ for Llama. These prompts, therefore, consistent with a clear heavy-tail tendency rather than tight concentration around one typical length. Appendix~\ref{sec:app_key_observations} keeps these caveats explicit and extends the same light/heavy diagnostic to all eight settings, including the less regular Chat regime. This matters for supervision design: when the tail is occasionally long, a small number of unusually long generations can move the conditional mean substantially, whereas the conditional median remains much more stable.

These two observations motivate us to formulate the prompt-conditioned length prediction problem below. The first observation rules out treating output length as a deterministic regression target, so the predictor should target a central functional of the conditional distribution. The second observation further suggests that this functional should be robust to occasional long generations, which makes conditional-median estimation the natural starting point under MAE.

\subsection{Formulation: Prompt-Conditioned Distribution}
\label{sec:problem_formulation}
We investigate conditional length prediction for autoregressive large language models under a fixed served model and decoding configuration. For the $i$-th request, let $x_i \in \X$ denote the input prompt. During decoding, after revealing the first $t$ output tokens $\{y_i^s\}_{s=1}^t$, the currently observed context is $z_i^t \define \bbr{x_i, \{y_i^s\}_{s=1}^t},~ z_i^0 = x_i$. Let $\phi(z_i^t)\in\R^d$ denote the representation of the available history at state $z_i^t$. Starting from state $z_i^t$ and continuing decoding until EOS, let $L_i^t$ denote the remaining number of tokens to be generated. During LLM decoding, $L_i^t$ is generally random even for the same state $z_i^t$. We assume that there exists a representation $\phi(\cdot)$ such that $\phi(z_i^t)$ is a sufficient statistic for length prediction, in the sense that $\forall i, t$, the length distribution is conditional on $\phi(z_i^t)$, i.e., $P(L_i^t \mid \phi(z_i^t))$. The prompt-only setting is the special case $t=0$, for which $z_i^0= x_i$ and $L_i \define L_i^0$.

We consider the static case where the predictor only observes the prompt representation $\phi(x_i)$ and outputs a scalar estimate $\hat L_i$ of the total output length. We evaluate such predictors using conditional mean absolute error (MAE):
\begin{equation}
\mathcal{R}_{\mathrm{pt}}(\hat L_i \mid \phi(x_i))
\define
\E_{L_i \sim p^\star(\cdot \mid \phi(x_i))}
\mbr{|L_i-\hat L_i|}.
\label{eq:point_risk}
\end{equation}
This objective directly matches the first observation: the predictor must summarize a stochastic prompt-conditioned distribution rather than recover a single ground-truth label. It also aligns with the second observation: when the conditional distribution can be heavy-tailed, a robust central estimate is operationally more stable than a supervision target that is overly sensitive to rare long generations.

\subsection{Estimation by Repeated Sampling}

We now instantiate the prompt-only estimator motivated above. The logic has four steps. Under MAE, the correct population target is a conditional median. A single realized output length is only one stochastic draw from that target distribution. Repeated sampling lets us replace that noisy single draw with a more stable sample median. We then use a simplified linear surrogate to isolate why this supervision choice improves estimation reliability.

We begin with the population target. For any integer-valued candidate prediction $a$, define the conditional point risk as $\mathcal{R}_{\mathrm{pt}}(a \given \phi(x_i))\define \E\mbr{\abs{L_i-a}\given \phi(x_i)}$. Since $L_i$ is integer-valued, a direct discrete-difference calculation gives
\begin{equation*}
    \mathcal{R}_{\mathrm{pt}}(a+1 \given \phi(x_i))
    -
    \mathcal{R}_{\mathrm{pt}}(a \given \phi(x_i))
    =
    2\,\P\mbr{L_i \le a \given \phi(x_i)} - 1.
\end{equation*}
Hence, $\mathcal{R}_{\mathrm{pt}}(a \given \phi(x_i))$ decreases as long as $\P(L_i \le a \given \phi(x_i)) < 1/2$ and increases once $\P(L_i \le a \given \phi(x_i)) > 1/2$. Therefore, among all predictors measurable with respect to $\phi(x_i)$, the Bayes-optimal point target under MAE is a conditional median of $L_i$ given $\phi(x_i)$.

The main supervision difficulty is that a single realized output length is only one stochastic sample from this conditional distribution. Using that single sample directly as a training label injects sampling randomness into supervision, exactly the noise source highlighted by Observation~1. To better align the supervision signal with the conditional-median target above, we draw $r$ independent generations for each prompt $x_i$, obtain the corresponding output lengths $\{L_{i,1},\dots,L_{i,r}\}$, and use their sample median $\bar{L}_i \define \operatorname{median}\mbr{L_{i,1},\dots,L_{i,r}}$ as the supervision label. This design is also consistent with Observation~2: when prompt-conditioned length distributions can be consistent with heavy-tailed behavior, the sample median is a more robust empirical target than either a single sampled length or an average dominated by a few unusually long generations. Repeated sampling is used only to construct robust labels during training; it is not part of the predictor input or inference procedure at test time.

To isolate the statistical value of repeated-sampling median supervision, we analyze a simplified linear surrogate estimator. Specifically, we model the output length as $L_i = \phi(x_i)^\top \theta_* + \eta_i$, where $\theta_* \in \R^d$ is an unknown parameter satisfying $\norm{\theta_*}_2 \le S$. We also assume bounded representations, i.e., $\norm{\phi(x_i)}_2 \le 1$ for all $x_i \in \D$. This linear model is not intended to exactly match the deployed nonlinear predictor; rather, it serves as a tractable analytical surrogate for separating the supervision effect of repeated sampling from the full modeling complexity of the deployed system. Motivated by Observation~2, we allow for noise that may not be strongly concentrated. The following assumption is a proof-convenient sufficient condition for the analysis, rather than a literal claim that every empirical distribution exactly follows this form:
\begin{myAssum}\label{assumption:noise_symmetric}
The noise $\eta$ is symmetric around zero and satisfies
\begin{equation*}
    \E\mbr{\abs{\eta}^{1+\epsilon} \mid x} \le v,
\end{equation*}
for some $\epsilon \in (0,1]$ and $v > 0$.
\end{myAssum}
Although our statistical target under MAE is the conditional median rather than the conditional mean, we retain symmetry as a proof-convenient sufficient condition: it implies a zero conditional median and enables robust control of the repeated-sampling median labels.

Given $N$ training prompts, we construct one median label $\bar{L}_i$ for each prompt and fit a ridge estimator:
\begin{equation*}
    \thetah_N
    =
    \argmin_{\theta}
    \lambda \norm{\theta}_2^2
    +
    \sum_{i=1}^{N}
    \sbr{\phi(x_i)^\top \theta - \bar{L}_i}^2,
\end{equation*}
where $\lambda > 0$ is the regularization coefficient. The following lemma quantifies the estimation effect of using repeated-sampling median labels in this surrogate setting.

\begin{myThm}[Expected absolute error bound]
    \label{theorem:estimation_error}
    For any $\delta \in (0,1)$, define $C \define (4v)^{\frac{1}{1+\epsilon}}, \rho_\delta \define 2C\ln\sbr{\frac{8N}{\delta}} + 4C^{-\epsilon}v$. Then, with probability at least $1-\delta-4Ne^{-r/8}$, for all $x \in \X$:
    \begin{equation*}
        \abs{\phi(x)^\top \theta_* - \phi(x)^\top \thetah_N}
        \le
        \beta_{N} \norm{\phi(x)}_{V_{N}^{-1}}.
    \end{equation*}
    where $\beta_N
    \define
    \sqrt{
        \rho_\delta^2 N^{\frac{1-\epsilon}{1+\epsilon}}
        +
        2C\rho_\delta d\, N^{\frac{1-\epsilon}{1+\epsilon}}
        \log\sbr{1+\frac{N}{\lambda d}}
    } +\sqrt{\lambda}S
    $. In particular, if $r \ge 8\log\sbr{4N/\delta}$, then the above bound holds with probability at least $1-2\delta$.
\end{myThm}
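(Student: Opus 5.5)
We treat the median labels as noisy linear responses, $\bar L_i = \phi(x_i)^\top\theta_* + \bar\eta_i$ with $\bar\eta_i \define \operatorname{median}[\eta_{i,1},\dots,\eta_{i,r}]$. The proof then follows the standard self-normalized ridge analysis. Write $V_N \define \lambda I + \sum_{i=1}^N \phi(x_i)\phi(x_i)^\top$. Then $\thetah_N - \theta_* = V_N^{-1}\sum_i \phi(x_i)\bar\eta_i - \lambda V_N^{-1}\theta_*$. By Cauchy--Schwarz in the $V_N$ geometry,
\begin{equation*}
\abs{\phi(x)^\top(\thetah_N-\theta_*)} \le \norm{\phi(x)}_{V_N^{-1}}\Bigl(\norm{\textstyle\sum_i \phi(x_i)\bar\eta_i}_{V_N^{-1}} + \sqrt{\lambda}S\Bigr),
\end{equation*}
since $\norm{\lambda\theta_*}_{V_N^{-1}}\le\sqrt\lambda\norm{\theta_*}_2\le\sqrt\lambda S$. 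The task therefore reduces to showing $\norm{\sum_i\phi(x_i)\bar\eta_i}_{V_N^{-1}}$ is at most the square-root term in $\beta_N$.

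\textbf{Step 1 (median labels are bounded w.h.p.).} By Markov's inequality and Assumption~\ref{assumption:noise_symmetric}, $\P(\abs{\eta}>C)\le v/C^{1+\epsilon}=1/4$ for $C=(4v)^{1/(1+\epsilon)}$. The median exceeds $C$ in absolute value only if at least $r/2$ of the $r$ samples do. Each sample does so with probability at most $1/4$, so Hoeffding's inequality gives $\P(\abs{\bar\eta_i}>C)\le 2e^{-r/8}$ or similar. A union bound over $N$ prompts, with constants arranged as $4Ne^{-r/8}$, shows that all $\abs{\bar\eta_i}\le C$ on an event $\mathcal{E}$.

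Symmetry of $\eta$ makes $\bar\eta_i$ symmetric, hence mean zero, and conditionally independent across $i$ given the $x_i$. For the heavy-tail scaling we will also use that $\bar\eta_i$ inherits a $(1+\epsilon)$-moment bound of order $v$. The plan is to use truncation at level $C$ and argue that the truncated median is still symmetric, so truncation introduces no bias. The term $4C^{-\epsilon}v$ in $\rho_\delta$ suggests a truncation-bias bound $\E[\abs{\bar\eta}\ind{\abs{\bar\eta}>C}]\le v/C^\epsilon$ is kept anyway.

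\textbf{Step 2 (concentration of the self-normalized sum).} On $\mathcal{E}$, the $\bar\eta_i$ are bounded by $C$, symmetric, and have second moment at most $C^{1-\epsilon}v$. We first bound each scalar projection with a Bernstein/Freedman inequality, producing a term $2C\ln(8N/\delta)$ plus the moment term, i.e.\ $\rho_\delta$. We then sum using the elliptical potential lemma, $\sum_i \norm{\phi(x_i)}^2_{V_{i-1}^{-1}}\le 2d\log(1+N/(\lambda d))$. Expanding $\norm{S_N}^2_{V_N^{-1}}$ recursively, as in the heavy-tailed linear bandit analyses (e.g.\ truncated-ridge / MOM arguments), gives
\begin{equation*}
\norm{S_N}^2_{V_N^{-1}}\le \rho_\delta^2 N^{\frac{1-\epsilon}{1+\epsilon}} + 2C\rho_\delta d N^{\frac{1-\epsilon}{1+\epsilon}}\log\bigl(1+\tfrac{N}{\lambda d}\bigr).
\end{equation*}
The factor $N^{(1-\epsilon)/(1+\epsilon)}$ comes from balancing the variance proxy $C^{1-\epsilon}v$ against the range $C$. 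This step holds with probability $1-\delta$, and a union with $\mathcal{E}^c$ gives the stated $1-\delta-4Ne^{-r/8}$. Finally, $r\ge 8\log(4N/\delta)$ makes $4Ne^{-r/8}\le\delta$.

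\textbf{Main obstacle.} Step 2 is the hard part. The recursive expansion of $\norm{S_N}_{V_N^{-1}}$ has a cross term $\sum_i \bar\eta_i\,\phi(x_i)^\top V_{i-1}^{-1}S_{i-1}$, which is a martingale that must be controlled uniformly. I would first try Freedman's inequality with a peeling/union bound over $N$ steps, which explains the $\ln(8N/\delta)$. The difficulty is in reproducing exactly the $N^{(1-\epsilon)/(1+\epsilon)}$ factor and the constants. If that fails, a looser bound would follow from the crude $\abs{\bar\eta_i}\le C$ together with a standard sub-Gaussian self-normalized bound.
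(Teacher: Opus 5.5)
Your reduction and Step 1 are sound. Your event $\mathcal{E}=\bbr{\abs{\bar\eta_i}\le C\ \forall i}$ is in fact cleaner than the paper's bad events $\abs{\alpha_i\bar\eta_i}>C\norm{\alpha}_{1+\epsilon}$: it does not involve the data-dependent coefficients, and it costs only $2Ne^{-r/8}$. The paper pays $4Ne^{-r/8}$ because it controls bad events separately in its two lemmas.

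The genuine gap is your primary Step 2. The bound on $\norm{S_N}^2_{V_N^{-1}}$, with $S_N\define\sum_i\bar\eta_i\phi(x_i)$, is asserted rather than derived, and the mechanism you sketch is not what produces it.
\begin{itemize}
\item In the paper, Sherman--Morrison gives $\norm{S_N}^2_{V_N^{-1}}\le-\norm{\alpha}_2^2+2\sum_i\alpha_i\bar\eta_i+\sum_i\beta_i^2\bar\eta_i^2$, with $\alpha_i=a_i\phi(x_i)^\top V_{i-1}^{-1}S_{i-1}$ and $\beta_i=\norm{\phi(x_i)}_{V_{i-1}^{-1}}$. The negative term, which your sketch omits, absorbs the cross term, so no uniform or peeling control of that martingale is needed.
\item The two sums are bounded by $\rho_\delta\norm{\alpha}_{1+\epsilon}$ and $C\rho_\delta\norm{\beta}_{1+\epsilon}^2$ via adapted CRMM Lemmas 10--11. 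These need $\E\abs{\bar\eta_i}^{1+\epsilon}\le 2v$, which you only assert as ``order $v$''. It follows from $\P(\abs{\bar\eta}>u)\le\frac{r}{\ceil{r/2}}\P(\abs{\eta}>u)$ and the tail-integral formula.
\item The factor $N^{\frac{1-\epsilon}{1+\epsilon}}$ comes from H\"older, $\norm{\alpha}_{1+\epsilon}\le N^{\frac{1-\epsilon}{2(1+\epsilon)}}\norm{\alpha}_2$ and $\norm{\beta}_{1+\epsilon}^2\le N^{\frac{1-\epsilon}{1+\epsilon}}\norm{\beta}_2^2$. This is followed by $-u^2+2au\le a^2$ and the potential lemma.
\end{itemize}
The factor does not come from balancing variance against range. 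With $C$ fixed, the variance proxy is $2C^{1-\epsilon}v=C^2/2$, so there is nothing to balance.

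That last point is why your fallback is not looser: it proves the theorem outright, with better probability.
\begin{itemize}
\item Let $\tilde\eta_i\define\bar\eta_i\ind{\abs{\bar\eta_i}\le C}$. Conditionally on $x_1,\dots,x_N$, these are independent, symmetric, and valued in $[-C,C]$, hence mean zero and $C$-sub-Gaussian by Hoeffding's lemma. They coincide with $\bar\eta_i$ on $\mathcal{E}$.
\item The self-normalized bound of Abbasi-Yadkori et al.\ with $\det V_N\le(\lambda+N/d)^d$ gives, with probability at least $1-\delta$, $\norm{S_N}^2_{V_N^{-1}}\le C^2d\log\sbr{1+\frac{N}{\lambda d}}+2C^2\ln\frac1\delta$ on $\mathcal{E}$.
\item Since $C^{1+\epsilon}=4v$, we have $4C^{-\epsilon}v=C$, so $\rho_\delta=C\sbr{2\ln\frac{8N}{\delta}+1}\ge C$. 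Together with $N^{\frac{1-\epsilon}{1+\epsilon}}\ge1$, this gives $\rho_\delta^2N^{\frac{1-\epsilon}{1+\epsilon}}\ge C\rho_\delta\ge 2C^2\ln\frac1\delta$.
\item Likewise $2C\rho_\delta dN^{\frac{1-\epsilon}{1+\epsilon}}\log\sbr{1+\frac{N}{\lambda d}}\ge C^2d\log\sbr{1+\frac{N}{\lambda d}}$.
\end{itemize}
So the radicand in $\beta_N$ dominates, and the claim holds with probability at least $1-\delta-2Ne^{-r/8}$.

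This route uses only symmetry and your Step 1. It needs neither the median moment lemma, the CRMM lemmas, nor H\"older. It also shows that, under the symmetric-noise assumption, the $N^{\frac{1-\epsilon}{1+\epsilon}}$ factor in $\beta_N$ is slack. Drop the Freedman/peeling plan and present the fallback as the proof.
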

This Theorem clarifies the role of repeated sampling in our analysis from two perspectives. First, the failure probability contains the additional term $4N e^{-r/8}$, which decays exponentially with the repeated-sampling budget $r$. In particular, when $r \ge 8\log(4N/\delta)$, this term is absorbed into the main confidence level, and the bound holds with probability at least $1-2\delta$. Thus, increasing the repeated-sampling budget directly improves the reliability of the supervision labels and, in turn, the resulting estimation guarantee. Second, the prediction error scales with $\norm{\phi(x)}_{V_N^{-1}}$, the standard self-normalized uncertainty term. When the feature of a test prompt is better covered by the training feature covariance $V_N$, this quantity becomes smaller, leading to a tighter prediction error bound. The proof is provided in Appendix~\ref{sec:app_theory}.

\subsection{Prediction by Robust Statistics}
Since our goal is to isolate the value of repeated sampling itself, we keep the predictor architecture fixed and vary only how repeated observations are converted into training signals. Specifically, we let $\phi(x_i)$ be the last-layer hidden state of the last prompt token, following the lightweight probing style of TRAIL~\citep{ICLR'25:embedding-scheduling}. On top of this representation, we use the same two-layer MLP for both variants: the first layer maps $\phi(x_i)\in\R^d$ to a $512$-dimensional hidden vector with a ReLU nonlinearity, and the second layer outputs logits over a shared grid of $K$ length bins. Let $q_\theta(\cdot\mid x_i)\define \operatorname{softmax}(g_\theta(\phi(x_i))) \in \Delta^{K-1}$, where $g_\theta$ denotes this shared predictor.

\textbf{ProD-M.} For each training prompt $x_i$, we draw $r$ independent generations and obtain the corresponding output lengths $\{L_{i,1},\dots,L_{i,r}\}$. We then compress these repeated observations into a single robust point target via the sample median $\bar{L}_i=\operatorname{median}(L_{i,1},\dots,L_{i,r})$. After discretizing $\bar{L}_i$ onto the same $K$-bin grid, we obtain a one-hot target vector $y_i^{\mathrm{med}}\in\{0,1\}^K$ and train the classifier with standard cross-entropy:
\begin{equation*}
\mathcal{L}_{\mathrm{med}}(\theta)
\define
-\sum_{i}\sum_{k=1}^{K} y_i^{\mathrm{med}}(k)\log q_\theta(k\mid x_i).
\end{equation*}
This variant treats repeated sampling as a way to denoise supervision before learning: instead of fitting one stochastic realization, it learns from a prompt-level target that is better aligned with the conditional median under MAE.

\begin{table*}[t]
    \centering
    \caption{\textbf{Prompt-only length prediction results.} Test MAE (tokens) between each method's prediction and the prompt-level median output length under $16$-sample repeated sampling. Noise Radius is the prompt-level median-centered noise radius. Lower is better; the best trainable result per column is \textbf{bold}. The exact protocol is in Appendix~\ref{sec:app_benchmark_protocol}.}
    \label{tab:main_prompt_only_skeleton}

\begin{small}
    \setlength{\tabcolsep}{3pt}
    \renewcommand{\arraystretch}{1.04}
    \begin{tabular}{lcccccccccc}
        \toprule
        \multirow{2}{*}[-.5ex]{Method} & \multicolumn{5}{c}{Qwen} & \multicolumn{5}{c}{Llama} \\
        \cmidrule(lr){2-6}\cmidrule(l){7-11}
        & Math & Coding & LongSeq & Chat & Avg & Math & Coding & LongSeq & Chat & Avg \\
        \midrule
        Constant Median & 59.59 & 56.41 & 146.11 & 264.90 & 131.75 & 32.07 & 37.83 & 95.69 & 213.31 & 94.73 \\
        $S^3$ & 41.60 & 57.28 & 72.21 & 185.84 & 89.23 & 24.41 & 41.28 & 50.01 & 142.28 & 64.50 \\
        TRAIL-mean & 44.04 & 44.70 & 72.70 & 143.50 & 76.24 & 26.29 & 37.93 & 50.99 & 116.84 & 58.01 \\
        TRAIL-last & 35.13 & 36.63 & 57.68 & 127.47 & 64.23 & 24.51 & 35.74 & 41.04 & 95.03 & 49.08 \\
        EGTP & 49.50 & 57.14 & 69.84 & 276.27 & 113.19 & 28.25 & 37.98 & 46.73 & 148.08 & 65.26 \\
        \textbf{ProD-M} & 30.80 & 32.08 & 55.54 & 124.53 & 60.74 & 20.32 & 29.96 & 38.13 & 94.59 & 45.75 \\
        \textbf{ProD-D} & \textbf{30.35} & \textbf{31.76} & \textbf{51.41} & \textbf{113.60} & \textbf{56.78} & \textbf{19.57} & \textbf{26.61} & \textbf{37.68} & \textbf{93.39} & \textbf{44.31} \\
        Noise Radius & 32.94 & 25.18 & 57.18 & 56.05 & 42.84 & 20.39 & 27.07 & 56.38 & 45.71 & 37.38 \\
        \bottomrule
    \end{tabular}
\end{small}

\end{table*}

\textbf{ProD-D.} The second variant keeps the same representation, bin grid, and predictor head, but does not collapse the repeated observations into a single scalar. Instead, we project all repeated lengths $\{L_{i,1},\dots,L_{i,r}\}$ onto the same $K$ bins and form a prompt-level empirical distribution
\begin{equation*}
p_i^{\mathrm{dist}}(k)
\define
\frac{1}{r}\sum_{j=1}^{r}\mathbf{1}\!\left[b(L_{i,j})=k\right],
\end{equation*}
where $b(\cdot)$ maps a length to its bin index. We then use this empirical histogram as a soft target and train the same classifier by soft cross-entropy,
\begin{equation*}
\mathcal{L}_{\mathrm{dist}}(\theta)
\define
-\sum_{i}\sum_{k=1}^{K} p_i^{\mathrm{dist}}(k)\log q_\theta(k\mid x_i).
\end{equation*}
Compared with ProD-M, this variant preserves more of the prompt-conditioned uncertainty revealed by repeated sampling: it still uses repeated generations only at training time, but it asks the predictor to match the observed length distribution rather than only its center.

When a single scalar prediction is needed, both variants extract the median of the predictive distribution $q_\theta(\cdot\mid x_i)$: we compute the cumulative sum of the predicted bin probabilities, locate the bin where the cumulative mass crosses $0.5$, and linearly interpolate within that bin to obtain a continuous point estimate $\hat{L}_i$. Prior methods typically decode via the argmax bin center or the expectation of the bin distribution; using the median instead yields a more robust point estimate, as the median is less sensitive to heavy-tailed or skewed predicted distributions.

\section{Experiments}
\label{sec:experiments}

\subsection{Experimental Setup}
\paragraph{System and hardware.}
The evaluation is conducted on a server equipped with two Intel Xeon Platinum 8358 32-Core processors (turbo frequency up to 3.4 GHz), two NVIDIA Tesla A100 80GB GPUs, and 1024 GiB of RAM. The system runs Ubuntu 20.04 with CUDA 12.2, and uses vLLM~$0.15.1$~\citep{SOSP'23:pagedattention} as the serving framework.

\paragraph{Served models and benchmark scenarios.}
We use Qwen-2.5-7B~\citep{arXiv'24:qwen2.5} and Llama-3-8B~\citep{MetaAI:llama3} as the served models. The main prompt-only benchmark is organized around four anchor scenarios: Math (GSM8K~\citep{arXiv'21:training-verifiers-math}), Coding (MBPP~\citep{arXiv'21:program-synthesis-llm}), LongSequence (LongBench~\citep{ACL'24:longbench}), and Chat (LMSYS-Chat-1M~\citep{ICLR'24:lmsys-chat-1m}). We use the official train/test splits for GSM8K ($7473/1319$) and MBPP ($374/500$). For LongBench and LMSYS-Chat-1M, we derive deterministic train/test splits from the repeated-sampling source manifests. This yields LongBench splits of $3789/961$ on Qwen and $3780/970$ on Llama, and LMSYS-Chat-1M splits of $4070/930$ on both models. 

\begin{table*}[t]
    \centering
    \caption{\textbf{Single-sample supervision ablation (single-label evaluation).} All predictors are trained with one sampled output length per prompt and evaluated against the same single-label target. We report mean $\pm$ std test MAE over $8$ trials. Lower is better; the best per column is \textbf{bold}.}
    \label{tab:single_vs_single_main}

\begin{small}
    \setlength{\tabcolsep}{2.5pt}
    \renewcommand{\arraystretch}{1.04}
    \resizebox{\textwidth}{!}{%
    \begin{tabular}{lcccccccc}
        \toprule
        \multirow{2}{*}[-.5ex]{Method} & \multicolumn{4}{c}{Qwen} & \multicolumn{4}{c}{Llama} \\
        \cmidrule(lr){2-5}\cmidrule(l){6-9}
        & Math & Coding & LongSeq & Chat & Math & Coding & LongSeq & Chat \\
        \midrule
        $S^3$ & 54.79 $\pm$ 0.8 & 63.38 $\pm$ 1.1 & 96.00 $\pm$ 6.6 & 201.33 $\pm$ 4.7 & 33.31 $\pm$ 1.3 & 52.57 $\pm$ 2.9 & 82.97 $\pm$ 3.8 & 154.78 $\pm$ 4.1 \\
        \shortstack[l]{TRAIL-mean} & 59.74 $\pm$ 3.0 & 59.21 $\pm$ 2.8 & 99.58 $\pm$ 6.2 & 174.14 $\pm$ 5.6 & 34.59 $\pm$ 1.4 & 46.71 $\pm$ 3.1 & 87.13 $\pm$ 4.4 & 137.98 $\pm$ 6.5 \\
        \shortstack[l]{TRAIL-last} & 54.11 $\pm$ 8.2 & 56.44 $\pm$ 7.4 & 88.07 $\pm$ 4.9 & 154.45 $\pm$ 6.0 & 32.48 $\pm$ 1.1 & 45.29 $\pm$ 4.1 & 76.75 $\pm$ 2.9 & 112.84 $\pm$ 5.3 \\
        EGTP & 67.30 $\pm$ 7.5 & 62.89 $\pm$ 1.0 & 96.84 $\pm$ 6.1 & 278.12 $\pm$ 5.3 & 36.39 $\pm$ 1.6 & 49.55 $\pm$ 2.9 & 81.37 $\pm$ 3.6 & 163.95 $\pm$ 5.0 \\
        \textbf{ProD-M} & \textbf{47.10 $\pm$ 0.8} & \textbf{43.43 $\pm$ 1.1} & \textbf{86.00 $\pm$ 5.2} & \textbf{147.18 $\pm$ 4.6} & \textbf{29.32 $\pm$ 1.0} & \textbf{40.86 $\pm$ 3.8} & \textbf{73.22 $\pm$ 3.2} & \textbf{109.73 $\pm$ 3.9} \\
        \bottomrule
    \end{tabular}
    }
\end{small}

\end{table*}

\begin{table*}[t]
    \centering
    \caption{\textbf{Single-sample supervision ablation (median-target evaluation).} All predictors are trained with one sampled output length per prompt, but evaluated against the $16$-sample repeated-sampling median target. We report mean $\pm$ std test MAE over $8$ trials. Lower is better; the best per column is \textbf{bold}.}
    \label{tab:single_vs_median_main}

\begin{small}
    \setlength{\tabcolsep}{2.5pt}
    \renewcommand{\arraystretch}{1.04}
    \resizebox{\textwidth}{!}{%
    \begin{tabular}{lcccccccc}
        \toprule
        \multirow{2}{*}[-.5ex]{Method} & \multicolumn{4}{c}{Qwen} & \multicolumn{4}{c}{Llama} \\
        \cmidrule(lr){2-5}\cmidrule(l){6-9}
        & Math & Coding & LongSeq & Chat & Math & Coding & LongSeq & Chat \\
        \midrule
        $S^3$ & 43.91 $\pm$ 0.5 & 57.56 $\pm$ 0.2 & 75.65 $\pm$ 2.0 & 186.75 $\pm$ 3.6 & 25.76 $\pm$ 0.7 & 44.69 $\pm$ 2.8 & 55.57 $\pm$ 1.2 & 146.21 $\pm$ 3.8 \\
        \shortstack[l]{TRAIL-mean} & 49.84 $\pm$ 2.8 & 52.82 $\pm$ 2.6 & 80.64 $\pm$ 3.3 & 158.33 $\pm$ 5.3 & 27.16 $\pm$ 0.3 & 36.61 $\pm$ 1.8 & 60.07 $\pm$ 3.7 & 129.79 $\pm$ 4.4 \\
        \shortstack[l]{TRAIL-last} & 42.46 $\pm$ 9.6 & 49.47 $\pm$ 8.0 & 67.91 $\pm$ 1.2 & 138.46 $\pm$ 6.3 & 24.89 $\pm$ 0.2 & 35.23 $\pm$ 1.6 & 48.11 $\pm$ 0.9 & 102.48 $\pm$ 2.6 \\
        EGTP & 58.56 $\pm$ 9.2 & 56.90 $\pm$ 0.3 & 77.14 $\pm$ 3.0 & 266.55 $\pm$ 3.1 & 29.70 $\pm$ 1.1 & 40.80 $\pm$ 2.1 & 53.55 $\pm$ 1.0 & 156.07 $\pm$ 4.5 \\
        \textbf{ProD-M} & \textbf{33.56 $\pm$ 0.4} & \textbf{35.01 $\pm$ 1.2} & \textbf{65.44 $\pm$ 1.3} & \textbf{130.60 $\pm$ 2.6} & \textbf{20.79 $\pm$ 0.2} & \textbf{29.05 $\pm$ 1.0} & \textbf{43.45 $\pm$ 1.4} & \textbf{98.95 $\pm$ 1.3} \\
        \bottomrule
    \end{tabular}
    }
\end{small}

\end{table*}

\paragraph{Baselines.}
We compare against three external baselines: $S^3$~\citep{NeurIPS'23:s3}, EGTP~\citep{ICLR'26:EGTP}, and TRAIL~\citep{ICLR'25:embedding-scheduling}. We evaluate two offline TRAIL variants: TRAIL-Mean, which averages the final-layer hidden states across all input tokens in the prompt, and TRAIL-Last, which uses only the final-layer hidden state of the last input token as input to the length predictor. Our repeated-sampling approach yields two predictors: ProD-M, trained on the prompt-level median output length, and ProD-D, trained on a bin-projected empirical length distribution from the same repeated-sampling pool. We also report two reference lines: Noise Radius, the prompt-level median-centered noise radius, and Constant Median, which always predicts the train-split median length.

\paragraph{Metrics and protocol.}
We report mean absolute error (MAE) as the primary metric: for each test prompt, we compute the absolute difference between the predicted output length and the ground-truth label, and average over the entire test set. To construct stable supervision targets, we collect $16$ generations per prompt on both train and test splits using different random seeds; the ground-truth label for each prompt is defined as the median of these $16$ output lengths. The Noise Radius is accordingly defined as the average absolute deviation of the $16$ test-side generations from their per-prompt median. Although ProD-D is trained from a binned length-distribution label, it is evaluated against the same median-based target at test time, and all predictors are single-shot at inference. All methods use the official chat template from~\citet{ICLR'26:EGTP} and temperature $0.8$; further configuration details are in Appendix~\ref{sec:app_benchmark_protocol}.

\begin{figure*}[t]
    \centering
    \begin{subfigure}[t]{0.32\linewidth}
        \centering
        \includegraphics[width=\linewidth]{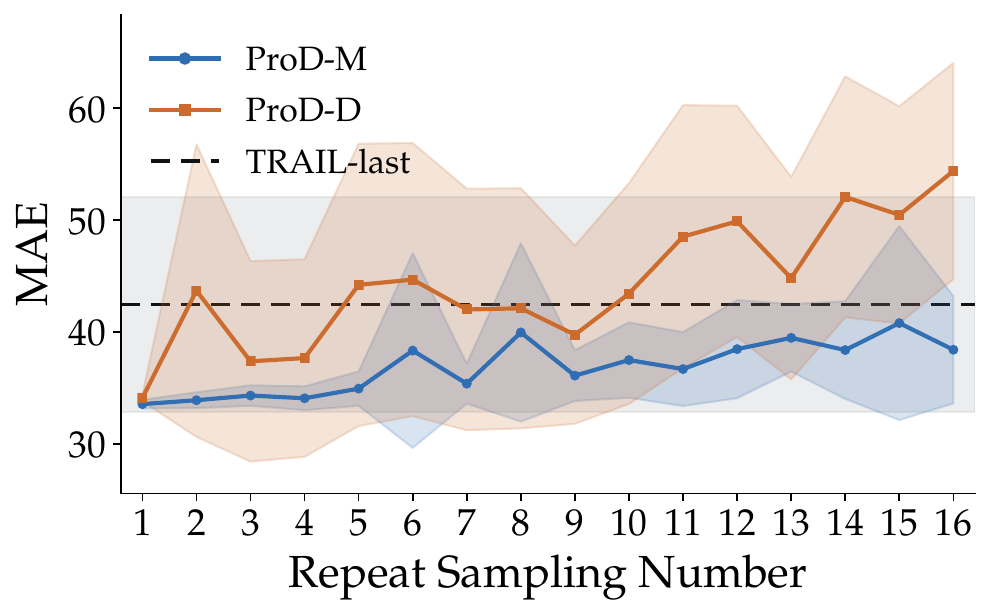}
        \caption{Qwen: Math}
    \end{subfigure}
    \hfill
    \begin{subfigure}[t]{0.32\linewidth}
        \centering
        \includegraphics[width=\linewidth]{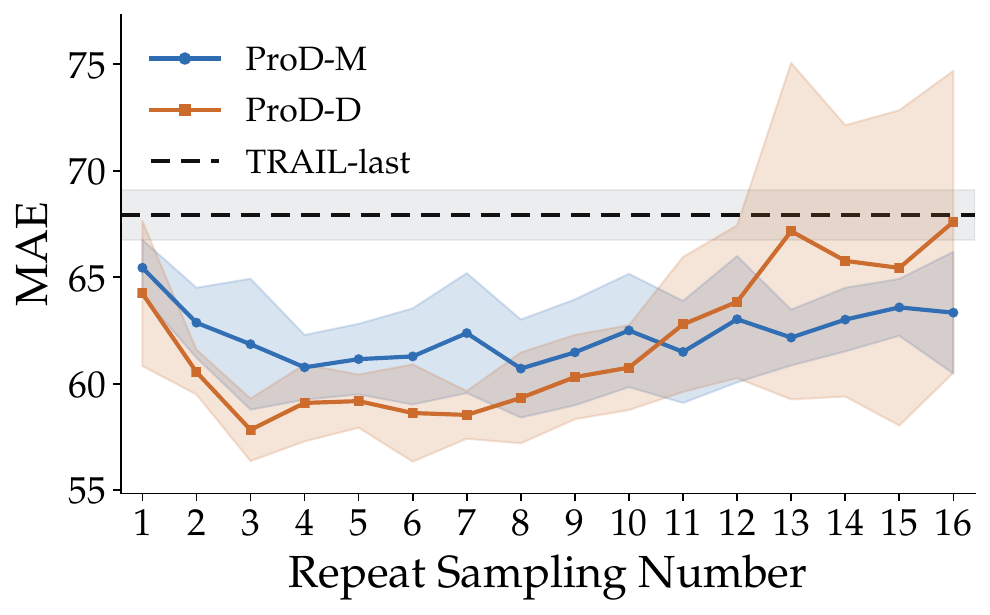}
        \caption{Qwen: LongSequence}
    \end{subfigure}
    \hfill
    \begin{subfigure}[t]{0.32\linewidth}
        \centering
        \includegraphics[width=\linewidth]{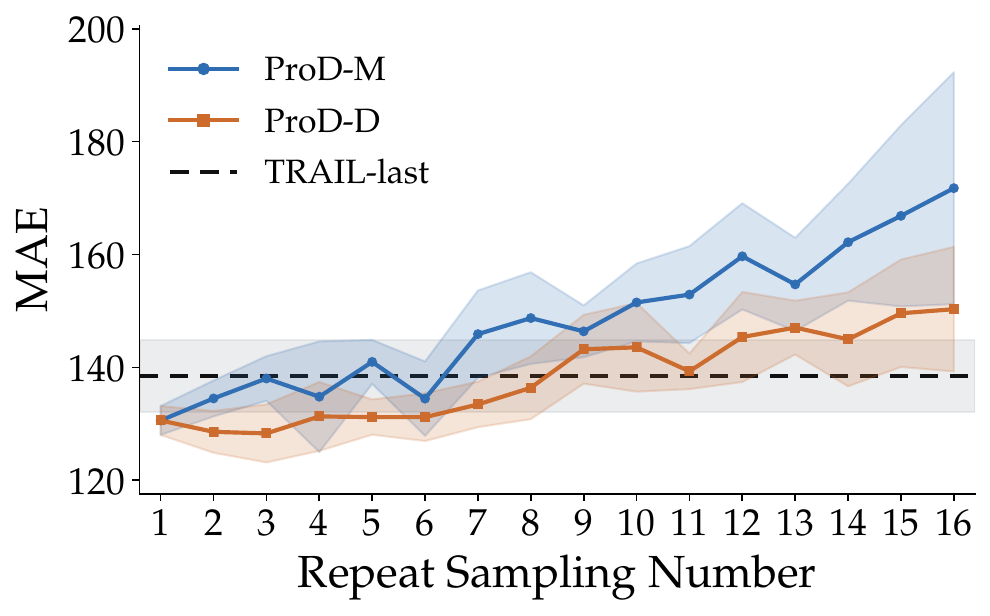}
        \caption{Qwen: Chat}
    \end{subfigure}

    \vspace{0.4em}

    \begin{subfigure}[t]{0.32\linewidth}
        \centering
        \includegraphics[width=\linewidth]{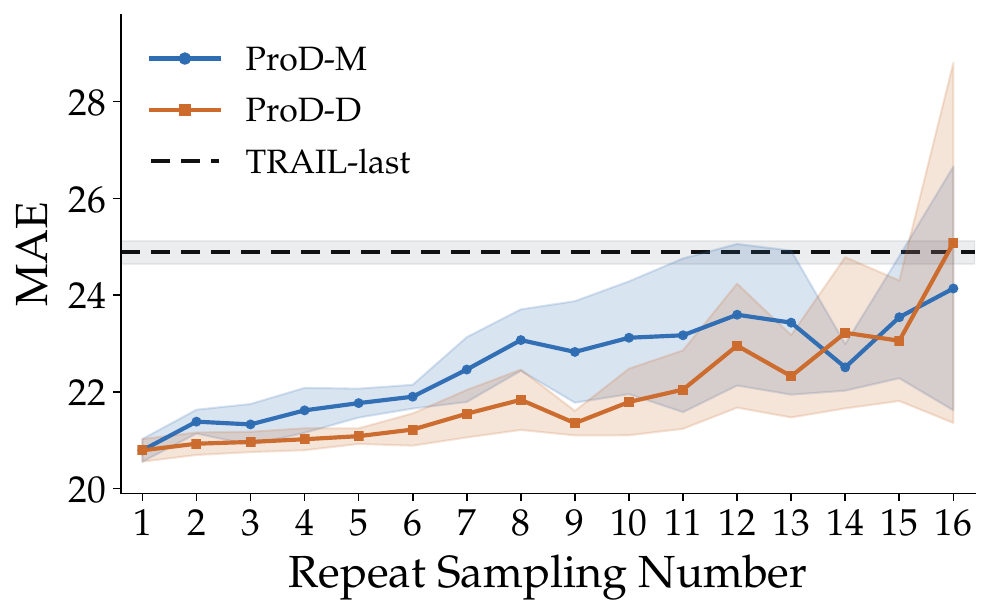}
        \caption{Llama: Math}
    \end{subfigure}
    \hfill
    \begin{subfigure}[t]{0.32\linewidth}
        \centering
        \includegraphics[width=\linewidth]{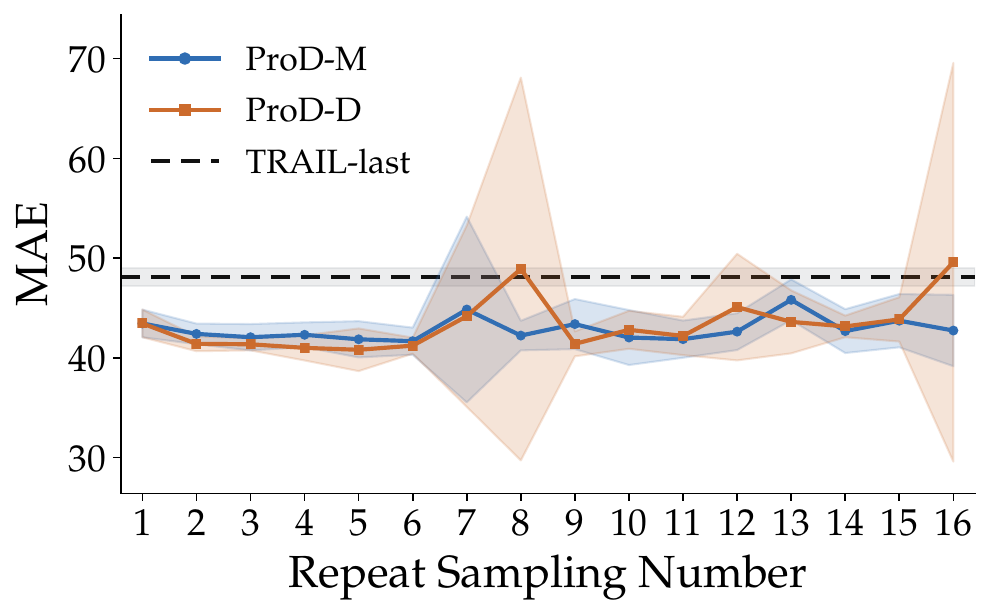}
        \caption{Llama: LongSequence}
    \end{subfigure}
    \hfill
    \begin{subfigure}[t]{0.32\linewidth}
        \centering
        \includegraphics[width=\linewidth]{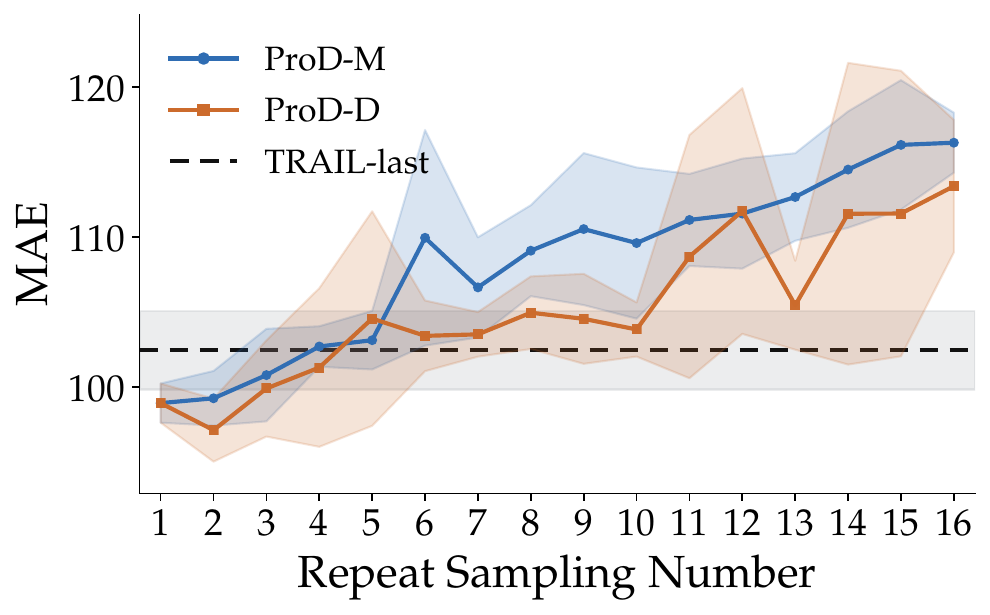}
        \caption{Llama: Chat}
    \end{subfigure}
    \caption{\textbf{Budget fairness: test MAE vs.\ repeat sampling number under a fixed inference budget.} As the repeat sampling number $k$ increases, only $\lceil B/k \rceil$ unique training prompts are retained. ProD-M and ProD-D are the repeated-sampling predictors; TRAIL-Last is the full-coverage single-sample baseline. All curves report mean $\pm$ std over $8$ trials. Coding is deferred to the appendix.}
    \label{fig:budget_fairness_repeat_curve_main}
\end{figure*}

\subsection{Output Length Prediction Evaluation}
We first evaluate prompt-only output length prediction. For a fair comparison, all methods, including both baselines and ours, are trained and evaluated under the same $16$-sample repeated-sampling protocol: for each prompt, we collect $16$ generations with different random seeds, and define the ground-truth label as the median of these $16$ output lengths. The reported metric is mean absolute error (MAE), i.e., the absolute difference between each method's prediction and the per-prompt median label, averaged over the test set. We additionally report median-centered noise radius reference, defined as the average absolute deviation of the $16$ test-side generations from their per-prompt median. This measures how much randomness the decoding process itself introduces, and we use it to analyze how close a predictor is to the best it can practically get: once a predictor's MAE reaches or falls below this level, the remaining error mostly comes from decoding randomness rather than the predictor itself. Table~\ref{tab:main_prompt_only_skeleton} summarizes the results; detailed hyperparameter grids, split manifests, and provenance notes are in Appendix~\ref{sec:app_benchmark_protocol}.

Across both models, ProD-D achieves the lowest average MAE: $56.78$ on Qwen and $44.31$ on Llama, outperforming the strongest external baseline TRAIL-Last ($64.23$ / $49.08$) by $11.6\%$ and $9.7\%$, respectively. On several scenarios ProD-D even falls below the Noise Radius, for example, $30.35$ vs.\ $32.94$ on Qwen-Math and $37.68$ vs.\ $56.38$ on Llama-LongSequence—indicating that repeated-sampling supervision enables the predictor to estimate the population median more accurately than the finite-sample average deviation. ProD-M, which uses the same repeated-sampling pool but learns from a simpler median label, is consistently second-best and still surpasses all external baselines. We also note that EGTP underperforms in most settings. In our implementation, we find that the entropy-weighted token selection tends to concentrate on a few early prompt tokens, whose hidden states may not encode the full prompt context; this likely explains its gap relative to methods that use the last-token embedding. Chat is the hardest scenario for all methods: even ProD-D has MAE $113.60$ on Qwen and $93.39$ on Llama, roughly $2{\times}$ the corresponding Noise Radius ($56.05$ / $45.71$). This gap reflects the high output-length uncertainty of open-ended conversational prompts.

\subsection{Single-sample Supervision Ablation}
To isolate the effect of repeated-sampling supervision, we retrain all predictor families using only a single sampled output length per prompt as the label, and repeat the full pipeline $8$ times to report mean $\pm$ std MAE. In this single-sample regime, ProD-D is omitted: a single length per prompt cannot induce a non-degenerate distribution target, so the distributional variant collapses to a relabeled point predictor. We therefore compare only ProD-M against the external baselines, providing the strictest test of how much noise is injected once repeated-sampling target construction is removed.

We report two complementary evaluations. Table~\ref{tab:single_vs_single_main} evaluates each single-sample-supervised predictor against the corresponding single-label test target, so both training and evaluation share the same one-shot noise. Table~\ref{tab:single_vs_median_main} keeps the same single-sample supervision on the training side but evaluates against the $16$-sample repeated-sampling median target, removing the high-variance test label and thereby isolating how much training-side target quality matters even when evaluation is stabilized.

Compared with Table~\ref{tab:main_prompt_only_skeleton}, single-sample supervision causes a clear degradation across both models. For example, ProD-M rises from $60.74$ to $80.93$ average MAE on Qwen and from $45.75$ to $63.28$ on Llama once repeated-sampling targets are replaced by one-shot labels (Table~\ref{tab:single_vs_single_main}). Table~\ref{tab:single_vs_median_main} confirms that the same trend persists even when evaluation switches back to the more stable median target. These results indicate that the gain of repeated-sampling supervision comes not merely from the predictor architecture, but from the robustness of the target construction itself.

\subsection{Budget Fairness and Repeat Curve}
Repeated-sampling supervision improves prediction quality, but it also consumes more inference during data collection. A natural fairness question arises: \emph{if the total number of training-side inferences is held fixed, does allocating part of that budget to repeated sampling still pay off compared to simply covering more unique prompts?}

To answer this, we fix the total training-observation budget and vary the repeat sampling number from $1$ to $16$. When the repeat sampling number is $k$, we retain only $\lceil B/k \rceil$ unique prompts (where $B$ is the canonical training-set size), each sampled $k$ times, so the total inference count stays budget-aligned. For example, on GSM8K ($B = 7473$), repeat sampling number $1$ means normal training on all $7473$ prompts with one sample each, while repeat sampling number $7$ reduces prompt coverage to only ${\sim}1068$ unique prompts with $7$ samples each. We compare ProD-M and ProD-D under this regime against TRAIL-Last trained on the full prompt set with single samples. All curves are evaluated against the $16$-sample median target and report mean $\pm$ std over $8$ trials.

Figure~\ref{fig:budget_fairness_repeat_curve_main} shows that repeated sampling remains valuable even under a fixed inference budget. As the repeat sampling number increases, the number of unique training prompts drops proportionally, yet the repeated-sampling predictors consistently outperform or match TRAIL-Last, which is trained on the full prompt set with single samples.

The effect is most pronounced on LongSequence: on Qwen, ProD-D with $3$ repeated samples (seeing only ${\sim}1263$ unique prompts) achieves $57.83$ MAE, well below the full-coverage TRAIL-Last baseline ($67.91$); on Llama, ProD-D with $5$ repeated samples reaches $40.80$, compared with $48.11$ for TRAIL-Last. A similar trend holds for Chat: ProD-D with $3$ repeated samples reaches $128.31$ on Qwen vs.\ $138.46$ for TRAIL-Last, and with $2$ repeated samples reaches $97.15$ on Llama vs.\ $102.48$ for TRAIL-Last.

On Math, the best results appear at repeat sampling number $1$, so this scenario does not exhibit an interior optimum. Nevertheless, even when the repeat sampling number reaches $7$ on GSM8K, reducing the training set from $7473$ to about $1068$ unique prompts, the repeated predictors remain competitive with the full-coverage TRAIL-Last baseline. Overall, these results confirm that repeated sampling acts as an effective budget-reallocation mechanism: concentrating repeated observations on fewer prompts can recover, or even surpass, the accuracy of single-sample methods trained on the full prompt set.

\section{Related Work and More Discussions}
\label{sec:related_work}

\paragraph{Output Length Prediction for LLM Serving.}
Output-length prediction is primarily motivated by the efficiency demands of LLM serving. In batched inference, shorter requests that finish early must wait for the longest request in the batch to complete, creating idle GPU cycles (``bubbles'')~\citep{NeurIPS'23:llmempowered}. Meanwhile, many serving frameworks reserve KV cache space based on the maximum possible output length to prevent out-of-memory failures; this wastes memory, constrains the batch size, and ultimately degrades GPU utilization and throughput~\citep{NeurIPS'23:s3}. Moreover, most serving systems default to first-come-first-served (FCFS) scheduling, under which a single long-running request can cause severe head-of-line blocking~\citep{NeurIPS'24:ltr}. These considerations have driven a growing line of research on output-length prediction.

\paragraph{Prompt-based Length Prediction.} Early work focuses on estimating output length directly from the input prompt before decoding begins. PO~\citep{NeurIPS'23:llmempowered} explores self-perception-based prediction by prompting or instruction-tuning LLMs to estimate their own response length prior to generation. $S^3$~\citep{NeurIPS'23:s3} trains a lightweight DistilBERT proxy to classify each request into predefined length buckets, using the predicted bucket for memory reservation and scheduling. Magnus~\citep{ICWS'24:Magnus} combines user-input length with application- and user-level semantic features via LaBSE embeddings and a random-forest regressor. SSJF~\citep{arXiv'24:SSJF} trains a BERT-based proxy predictor that formulates prompt-only output-length estimation as either regression or bucketized classification, and applies the prediction to shortest-job-first scheduling. LTR~\citep{NeurIPS'24:ltr} reformulates the problem as learning to rank, training a small OPT-based scorer with ListMLE to predict the relative order of request lengths rather than their absolute values, thereby realizing a more efficient scheduling policy SJF beyond FCFS.

\paragraph{Embedding-based Length Prediction.}More recent work leverages intermediate representations from the served LLM itself to build lighter-weight or online predictors. TRAIL~\citep{ICLR'25:embedding-scheduling} reuses the target LLM's intermediate embeddings for prompt-only length prediction and further refines the remaining-length estimate online during decoding. ELIS~\citep{arXiv'25:elis} couples an encoder-based response-length predictor with an iterative shortest-remaining-time-first scheduler that repeatedly re-prioritizes requests by estimated remaining output length. \citet{ACL'25:token-end-graph} model layerwise hidden states as a graph and apply a GCN-based regressor to predict the number of remaining tokens. EGTP~\citep{ICLR'26:EGTP} aggregates prompt representations via entropy-guided pooling over the served LLM's hidden states for output-length prediction, while its progressive variant PLP extends this idea to remaining-length prediction during decoding.

\noindent \textbf{More discussion.} We note the concurrent work of \citet{arXiv'26:uncertainty-scheduling}, which we became aware of after completing the first version of this paper.
They also notice that LLM output length is better modeled as a stochastic distribution with heavy-tailed characteristics. Indeed, we appreciate their even more detailed empirical examination of heavy-tailed length distributions. 
Nonetheless, the two works differ fundamentally in methodology: their approach fits a parametric distribution to repeated output lengths and learns the distribution parameters, whereas ours uses repeated sampling to construct supervision targets directly from the empirical prompt-level length distribution, with a particular focus on robust target design for prompt-only prediction. 

\section{Conclusion}
\label{sec:conclusion}

In this paper, we observe a fundamental yet overlooked mismatch in output-length prediction for LLM serving: for the LLM decoding process, prompt-conditioned output length is drawn from a distribution rather than being a deterministic scalar. Existing methods typically treat one sampled completion length as the ground-truth label, targeting either stronger representations or finer prediction heads while leaving the supervision target itself unexamined. Moreover, the prompt-conditioned length distribution often shows patterns consistent with heavy-tailed behavior, where occasional long generations shift the single-sample label far from the stable central tendency, yielding a statistically misaligned training signal. To this end, we propose the \emph{prompt-conditioned length distribution} (ProD) method, which constructs robust training targets from repeated independent generations of the same prompt and instantiates two variants that balance supervision robustness with inference efficiency: ProD-M compresses repeated observations into a median-based point target aligned with the Bayes-optimal prediction under MAE, while ProD-D projects sampled lengths onto a binned histogram that preserves the full prompt-conditioned uncertainty. At inference time, both variants reuse the served LLM's last-layer hidden states and remain single-shot. Under a simplified linear surrogate, we further provide a theoretical guarantee showing that increasing the repeated-sampling budget exponentially reduces the failure probability of the estimation bound. Experiments on Qwen-2.5-7B and Llama-3-8B across math, coding, long-sequence, and chat scenarios demonstrate that ProD-D achieves the lowest average MAE on both backbones, reducing prediction error by up to $25\%$ over previous state-of-the-art methods, and that this advantage persists even under fixed total inference budgets, confirming that the gain stems from the supervision target itself rather than from additional predictor complexity.

This work focuses on the prompt-only setting. Our formulation in Section~\ref{sec:problem_formulation} already accommodates the general case where the predictor refines its remaining-length estimate iteratively as decoding progresses. For the next step, we will extend our approach to this iterative regime and integrate the resulting predictions into serving scenarios~\citep{SOSP'23:pagedattention} for learning-augmented scheduling.

\bibliographystyle{icml2026}
\bibliography{core}


\newpage
\appendix
\onecolumn
\section{Experimental Details}
\label{sec:app_exp}

\subsection{Detailed Metrics}
\paragraph{Metrics for median-centered noise radius.}
Repeated sampling from the same prompt induces a distribution over output lengths rather than a single deterministic value. We refer to the average absolute deviation of these repeated lengths from their median as the \emph{noise radius}, which measures the randomness introduced by the decoding process itself. When a predictor's MAE approaches or falls below this level, the remaining error is mainly due to decoding randomness rather than the predictor's own limitation. For prompt $i$, suppose we collect $R$ repeated generations with output lengths $\{L_{i,r}\}_{r=1}^R$. We define its noise radius using the median-based mean absolute error (Median-MAE). Let $m_i \define \mathrm{median}\mbr{\bbr{L_{i,r}}_{r=1}^{R}}$
denote the sample median length for prompt $i$. The corresponding median-based MAE is defined as
\begin{equation*}
  \mathrm{MAE}^{(\mathrm{median})}_i
  ~=~ \frac{1}{R}\sum_{r=1}^{R}\abs{L_{i,r}-m_i}.
\end{equation*}

We use the median, rather than the mean or the average pairwise MAE, because output-length distributions are often heavy-tailed. The median provides a more robust measure of central tendency and is less sensitive to a small number of unusually long generations. Therefore, Median-MAE more faithfully captures the intrinsic prompt-level variability and serves as a more stable estimate of the noise radius.

\begin{figure}[!h]
    \centering
    \begin{subfigure}[t]{0.32\linewidth}
        \centering
        \includegraphics[width=\linewidth]{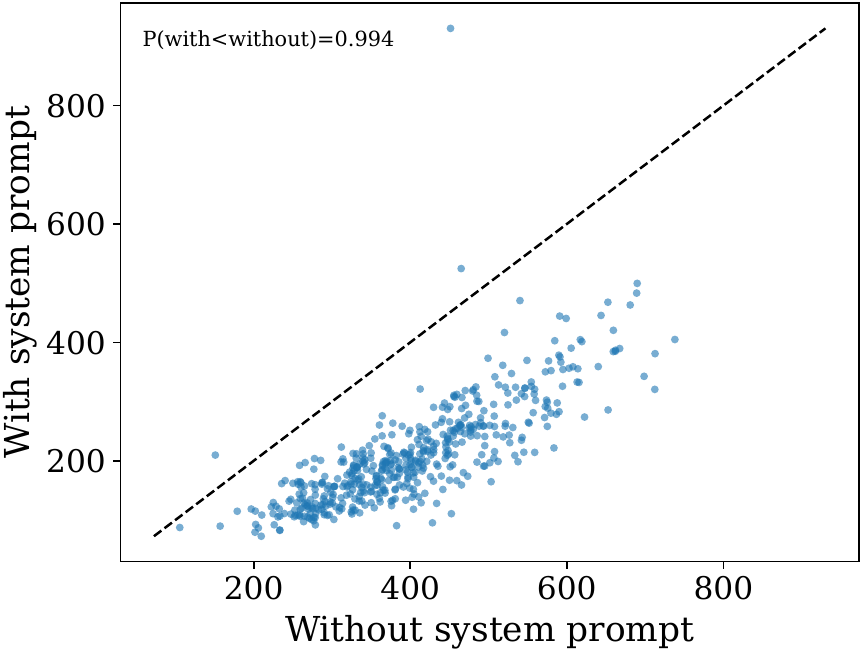}
        \caption{Mean length}
        \label{fig:sys_prompt_mean_paired}
    \end{subfigure}
    \hfill
    \begin{subfigure}[t]{0.32\linewidth}
        \centering
        \includegraphics[width=\linewidth]{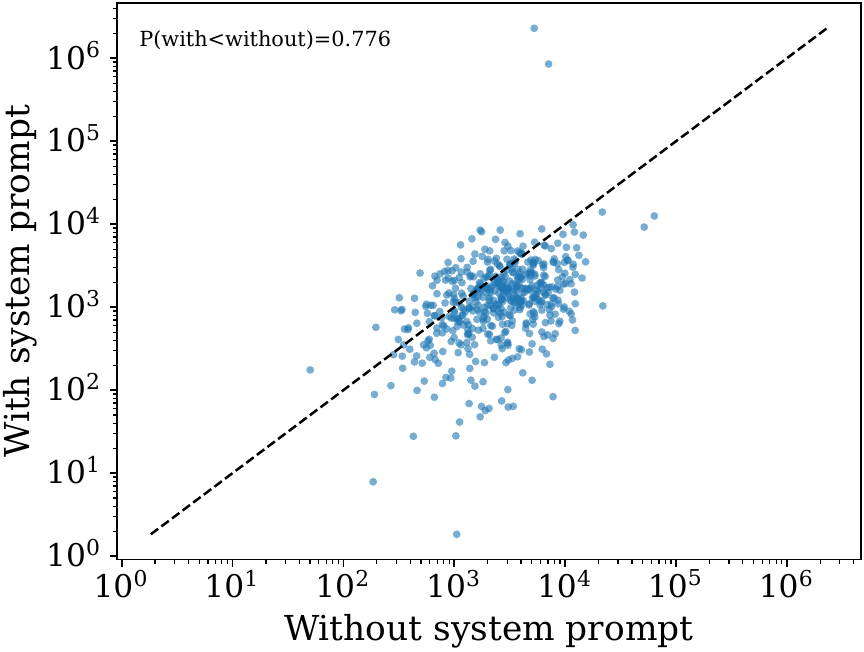}
        \caption{Length variance (log).}
        \label{fig:sys_prompt_var_paired_loglog}
    \end{subfigure}
    \hfill
    \begin{subfigure}[t]{0.32\linewidth}
        \centering
        \includegraphics[width=\linewidth]{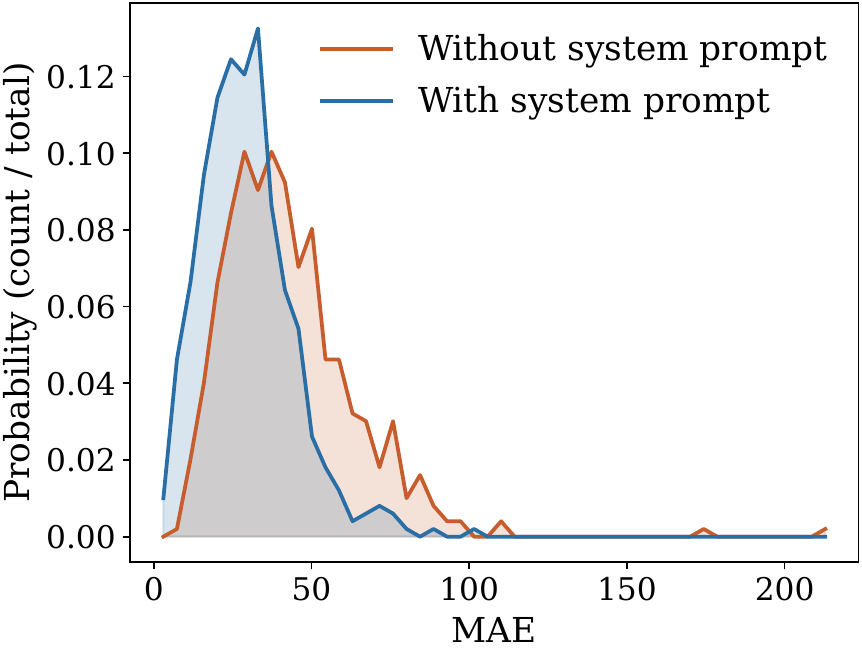}
        \caption{Mean-MAE distribution.}
        \label{fig:sys_prompt_mean_mae_dist}
    \end{subfigure}

    \vspace{0.6em}

    \begin{subfigure}[t]{0.32\linewidth}
        \centering
        \includegraphics[width=\linewidth]{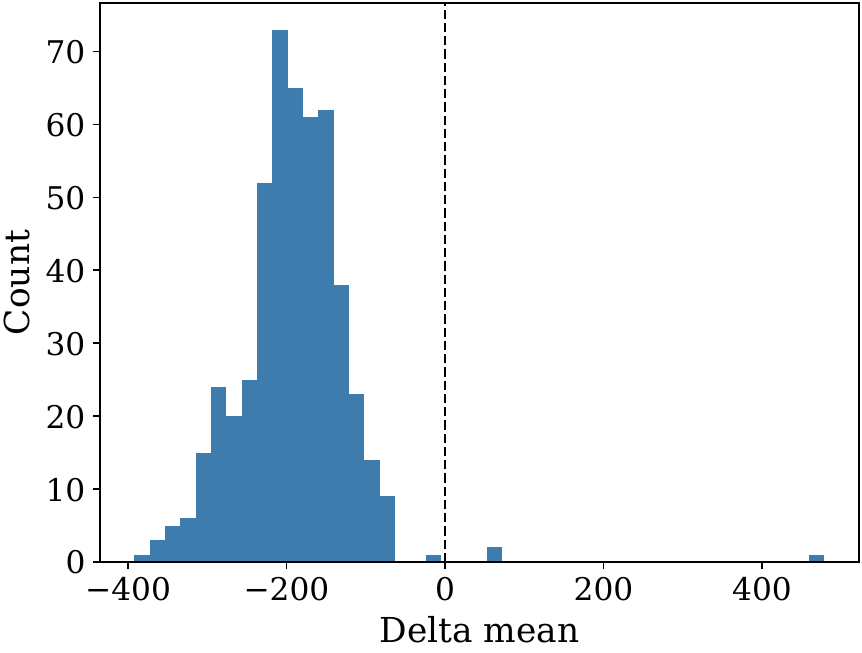}
        \caption{$\Delta$ mean length.}
        \label{fig:sys_prompt_delta_mean_hist}
    \end{subfigure}
    \hfill
    \begin{subfigure}[t]{0.32\linewidth}
        \centering
        \includegraphics[width=\linewidth]{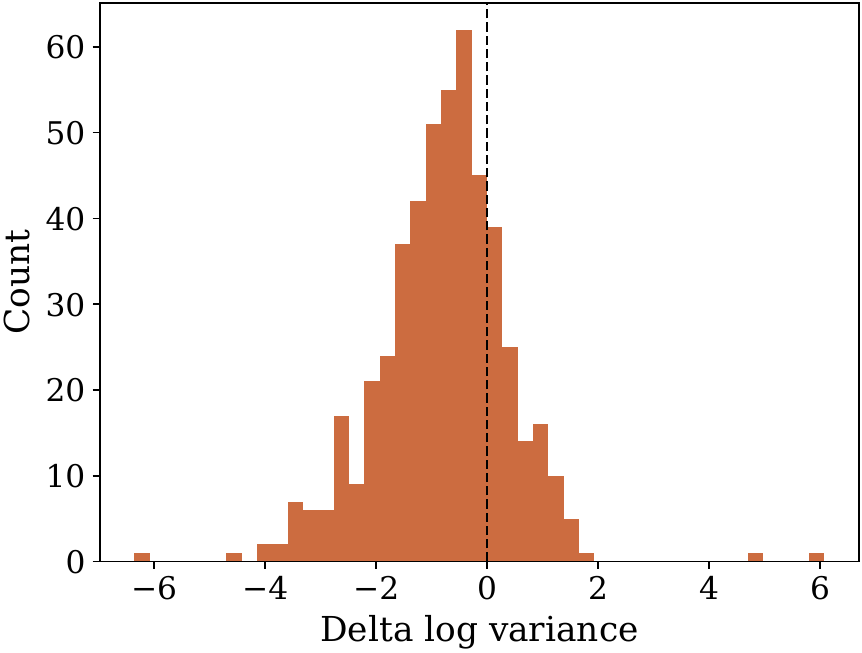}
        \caption{$\Delta \log$ variance.}
        \label{fig:sys_prompt_delta_logvar_hist}
    \end{subfigure}
    \hfill
    \begin{subfigure}[t]{0.32\linewidth}
        \centering
        \includegraphics[width=\linewidth]{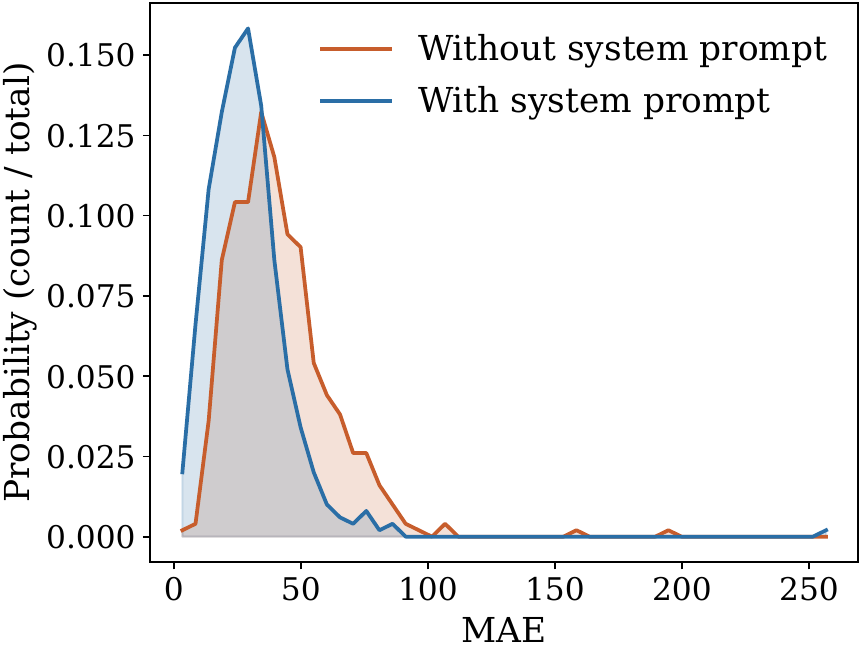}
        \caption{Median-MAE distribution.}
        \label{fig:sys_prompt_median_mae_dist}
    \end{subfigure}

    \vspace{0.6em}

    \begin{subfigure}[t]{\linewidth}
        \centering
        \includegraphics[width=0.98\linewidth]{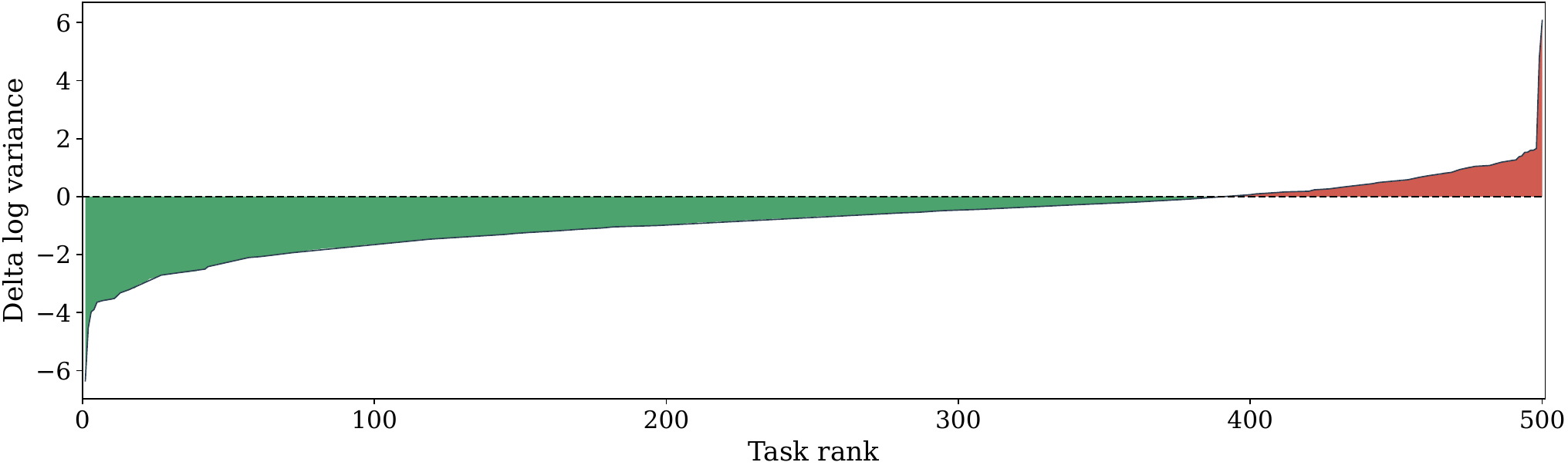}
        \caption{Waterfall of $\Delta \log$ variance per prompt (sorted).}
        \label{fig:sys_prompt_variance_waterfall}
    \end{subfigure}

    \caption{\textbf{System prompt reduces output-length randomness and MAE noise radius.}
    Qwen2.5-7B-Instruct on 500 MBPP prompts with 16 trials per prompt (8 with system prompt, 8 without).
    We compare per-prompt mean length, length variance, and two MAE-style dispersion measures (Mean-MAE / Median-MAE).
    Paired plots (Figure~\ref{fig:sys_prompt_mean_paired} and Figure~\ref{fig:sys_prompt_var_paired_loglog}) and shift summaries (Figure~\ref{fig:sys_prompt_delta_mean_hist}, Figure~\ref{fig:sys_prompt_delta_logvar_hist}, and Figure~\ref{fig:sys_prompt_variance_waterfall}) show that adding the system prompt typically shortens outputs and often reduces variance.
    MAE distributions (Figure~\ref{fig:sys_prompt_mean_mae_dist} and Figure~\ref{fig:sys_prompt_median_mae_dist}) shift left and become more concentrated, indicating a lower and tighter per-prompt MAE baseline.}
    \label{fig:1-to-many-mbpp}
\end{figure}

\subsection{Detailed Benchmark and Split Protocol}
\label{sec:app_benchmark_protocol}
We organize our benchmark into four semantic scenarios: chat, long-sequence, reasoning, and coding. Specifically, we use LMSYS-Chat-1M~\citep{ICLR'24:lmsys-chat-1m} represent chat scenario, LongBench~\citep{ACL'24:longbench} represent long-sequence scenario, GSM8K~\citep{arXiv'21:training-verifiers-math} represent reasoning scenario, and MBPP~\citep{arXiv'21:program-synthesis-llm} represent coding scenario.

\paragraph{Dataset split details.}
To ensure a fair evaluation, the main prompt-only benchmark freezes a single anchor dataset per scenario: GSM8K for Math, MBPP for Coding, LongBench for LongSequence, and LMSYS-Chat-1M for Chat. We use the official train/test splits for GSM8K ($7473/1319$) and MBPP ($374/500$). For LongBench, we derive deterministic train/test splits, which yields $3789/961$ on Qwen and $3780/970$ on Llama. For LMSYS-Chat-1M, we derive deterministic splits with $4070/930$ prompts on both models. 

\paragraph{Inference details.}
We use a fixed prompting protocol throughout all experiments. For each served model, we apply its official chat template together with the system prompt provided by EGTP~\citep{ICLR'26:EGTP}. Unless otherwise stated, all experiments use zero-shot prompting, temperature $0.8$, and $R=16$ repeated generations per prompt with different random seeds on both train and test. The point target for each prompt is the median of the $16$ sampled output lengths. ProD-D instead projects those $16$ lengths onto a predefined bin grid and learns from the resulting discrete empirical distribution, but it is still evaluated with test MAE against the same test-side median label. Noise radius reports the average absolute deviation of the $16$ test generations from their prompt-level median, and Constant Median predicts the train-split median length for all test prompts. GSM8K, MBPP, and LMSYS-Chat-1M use the standard repeated-sampling path, while LongBench uses a token-aware truncation path with reserved generation budget before the same downstream aggregation, split derivation, and predictor stages.

\begin{figure*}[t]
    \centering
    \begin{subfigure}[t]{0.49\linewidth}
        \centering
        \includegraphics[width=\linewidth]{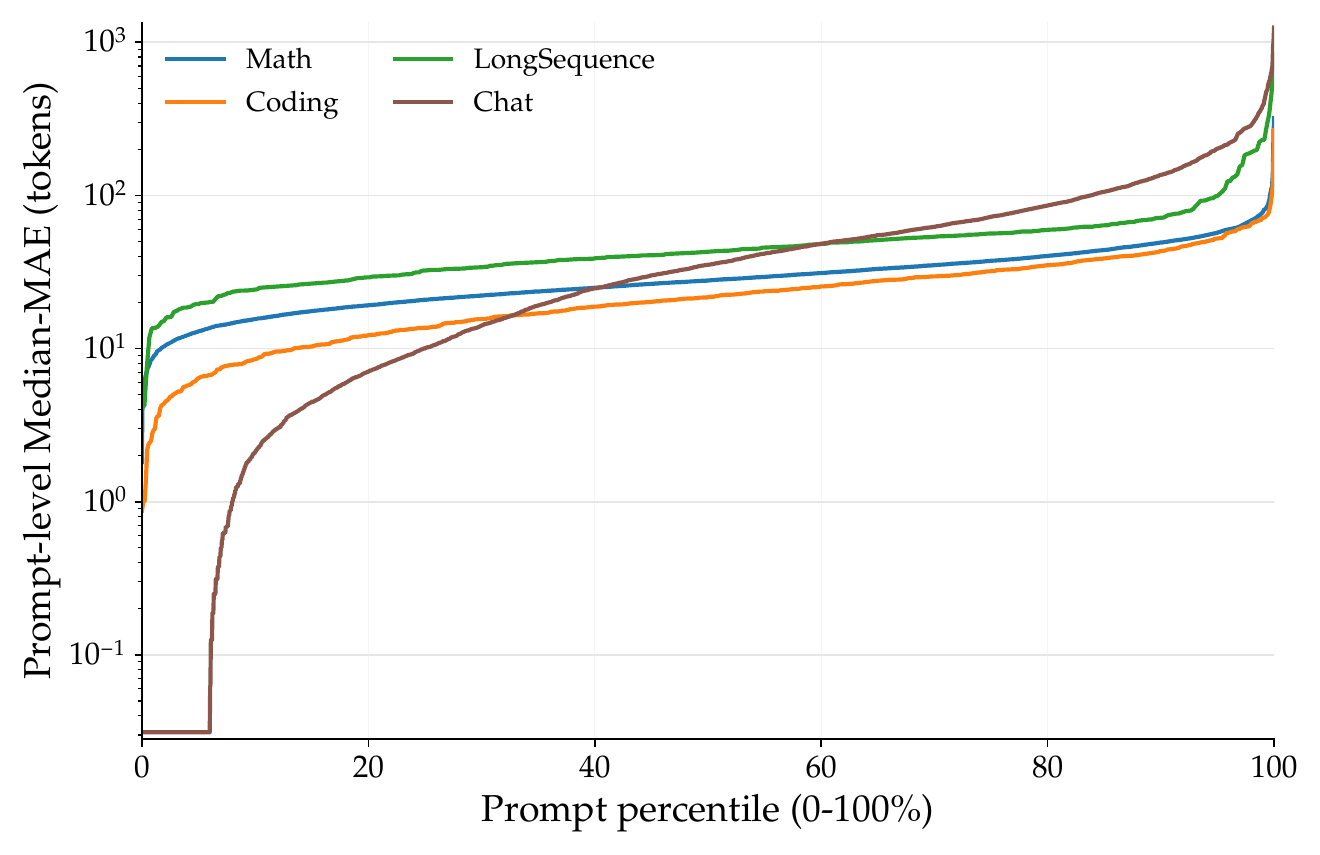}
        \caption{Qwen prompt-percentile waterfall.}
    \end{subfigure}
    \hfill
    \begin{subfigure}[t]{0.49\linewidth}
        \centering
        \includegraphics[width=\linewidth]{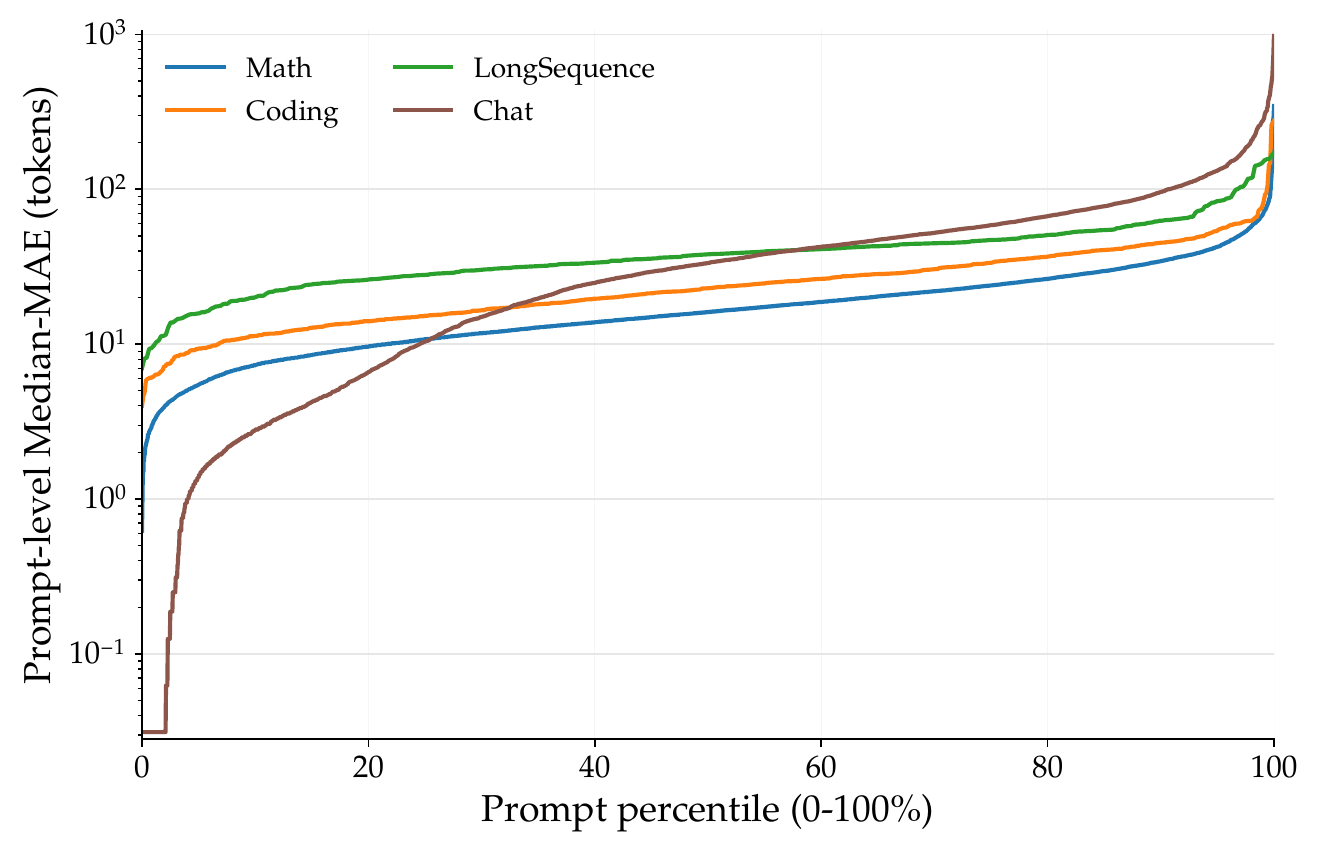}
        \caption{Llama prompt-percentile waterfall.}
    \end{subfigure}
    \caption{\textbf{Prompt-percentile noise-floor waterfalls.} Each curve sorts prompts by prompt-level Median-MAE within the corresponding model and plots the values on a log-scale y-axis against prompt percentile. The plotting floor is used only to visualize zero-valued prompts on the log scale: $300$ prompts for Qwen and $105$ prompts for Llama.}
    \label{fig:app_key_observations_waterfalls}
\end{figure*}

\paragraph{Predictor search spaces and provenance.}
\begingroup
\sloppy
For $S^3$, we sweep num\_bins over \{7, 10, 13, 15, 20\} and use a scene- and model-specific 10-point bin\_max grid. The Qwen bin\_max candidates are \{621, 690, 759, 828, 898, 967, 1036, 1105, 1174, 1243\} for GSM8K, \{400, 444, 488, 533, 577, 621, 666, 710, 755, 799\} for MBPP, \{1631, 1812, 1994, 2175, 2356, 2537, 2719, 2900, 3081, 3262\} for LongBench, and \{3296, 3663, 4029, 4395, 4761, 5128, 5494, 5860, 6226, 6593\} for LMSYS-Chat-1M. The corresponding Llama grids are \{469, 521, 573, 626, 678, 730, 782, 834, 886, 938\}, \{433, 481, 529, 577, 626, 674, 722, 770, 818, 866\}, \{1344, 1494, 1643, 1792, 1942, 2091, 2240, 2390, 2539, 2689\}, and \{2211, 2456, 2702, 2948, 3193, 3439, 3685, 3930, 4176, 4422\} in the same scenario order. The selected S3 configurations are (20, 759), (10, 400), (13, 1631), and (13, 3296) on Qwen, and (15, 678), (7, 481), (13, 1344), and (15, 2211) on Llama.

For EGTP, we sweep num\_bins over \{7, 10, 13, 15, 20\} and $\lambda$ over \{0.50, 0.55, 0.60, 0.65, 0.70, 0.75, 0.80, 0.85, 0.90, 0.95\}. The selected (num\_bins, $\lambda$) configurations are (7, 0.70), (20, 0.95), (10, 0.55), and (10, 0.55) on Qwen, and (7, 0.95), (7, 0.65), (15, 0.70), and (10, 0.95) on Llama, again in the order Math/Coding/LongSequence/Chat.
\par
\fussy
\endgroup

\subsection{Dataset Length Distribution and Noise Analysis}

\paragraph{The impact of system prompt.}
In this part, we verify that adding a fixed system prompt makes output length more predictable under repeated sampling. We use the system prompt from~\citet{ICLR'26:EGTP}, the MBPP dataset, and Qwen2.5-7B-Instruct. We randomly sample 500 prompts and run 16 independent generations per prompt, split into 8 trials \emph{with} the system prompt and 8 trials \emph{without}, keeping the decoding configuration the same. For each prompt, we compute (i) mean output length, (ii) length variance, and (iii) two MAE-based dispersion measures over the 16 sampled lengths: Mean-MAE (absolute deviation from the sample mean) and Median-MAE (absolute deviation from the sample median). Figure~\ref{fig:sys_prompt_mean_paired} shows that the system prompt typically shortens the mean output length, and Figure~\ref{fig:sys_prompt_var_paired_loglog} shows that it often reduces the variance of sampled lengths.
The corresponding shift histograms in Figure~\ref{fig:sys_prompt_delta_mean_hist} and Figure~\ref{fig:sys_prompt_delta_logvar_hist}, together with the waterfall in Figure~\ref{fig:sys_prompt_variance_waterfall}, indicate that these reductions hold for a large fraction of prompts.
More importantly for MAE-based length prediction, the Mean-MAE and Median-MAE distributions in Figure~\ref{fig:sys_prompt_mean_mae_dist} and Figure~\ref{fig:sys_prompt_median_mae_dist} shift left and become more concentrated when adding the system prompt.
Overall, adding a system prompt reduces sampling-induced randomness in output length and lowers the per-prompt MAE baseline, which increases the headroom for achieving lower MAE in downstream length predictors.

\begin{figure*}[t]
    \centering
    \begin{subfigure}[t]{0.24\linewidth}
        \centering
        \includegraphics[width=\linewidth]{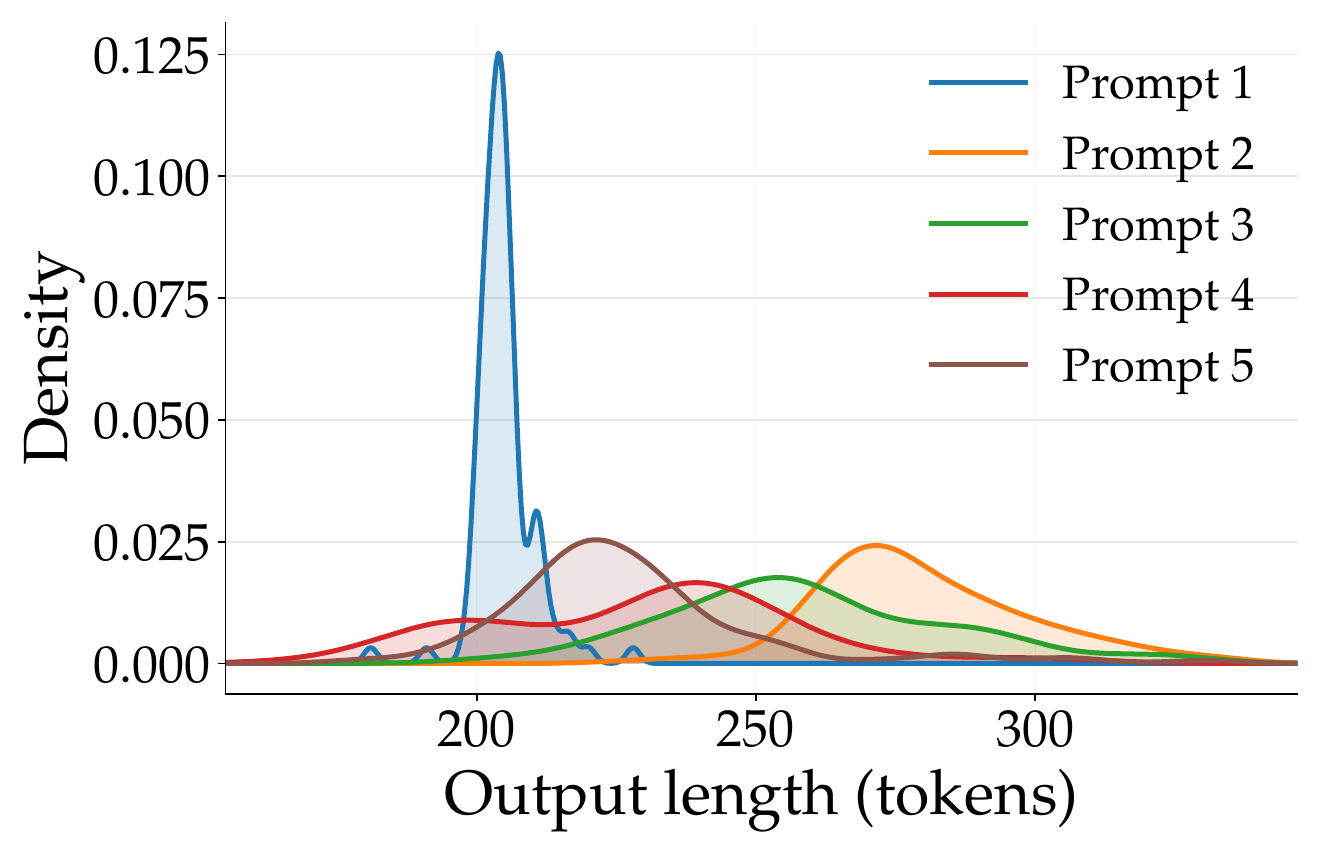}
        \caption{Math light5}
    \end{subfigure}
    \hfill
    \begin{subfigure}[t]{0.24\linewidth}
        \centering
        \includegraphics[width=\linewidth]{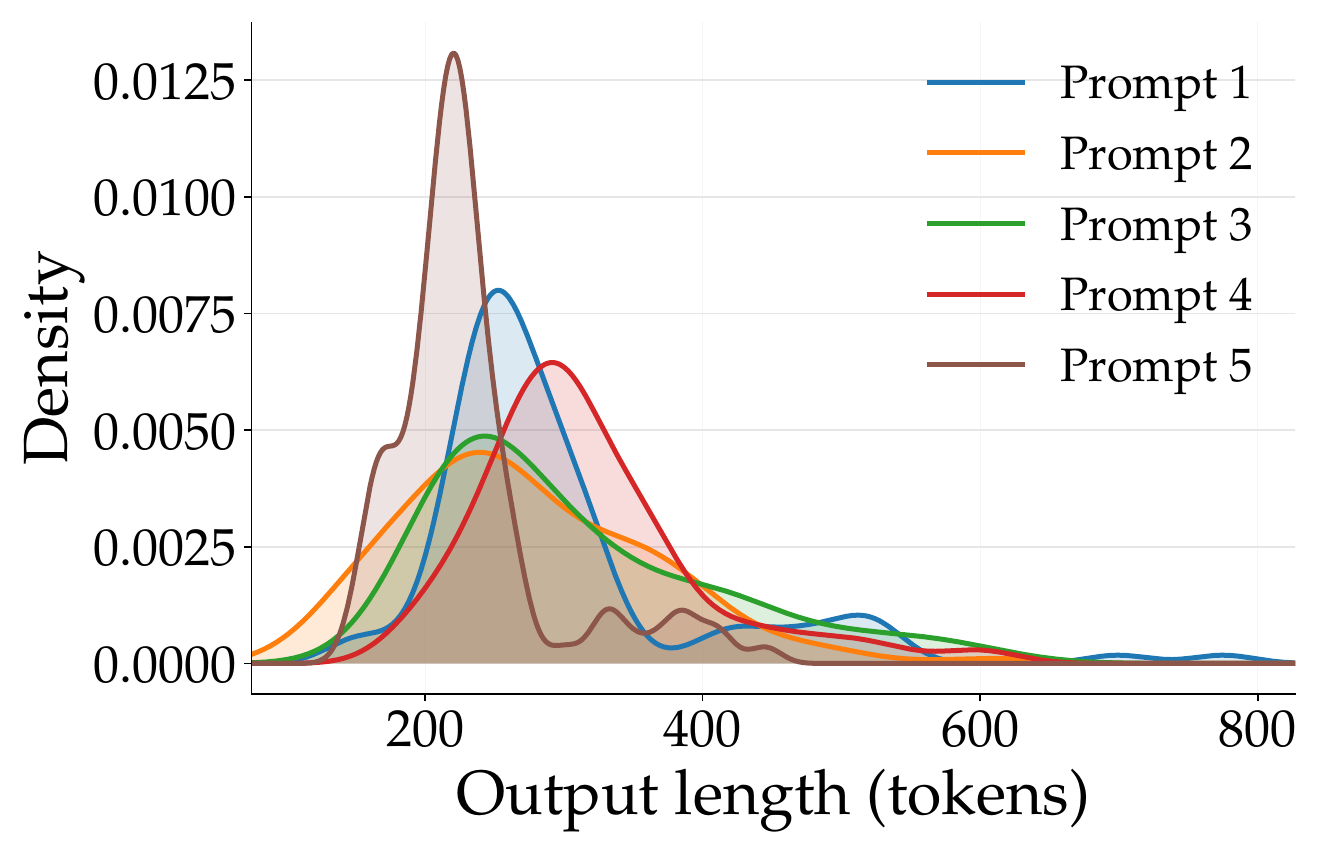}
        \caption{Math heavy5}
    \end{subfigure}
    \hfill
    \begin{subfigure}[t]{0.24\linewidth}
        \centering
        \includegraphics[width=\linewidth]{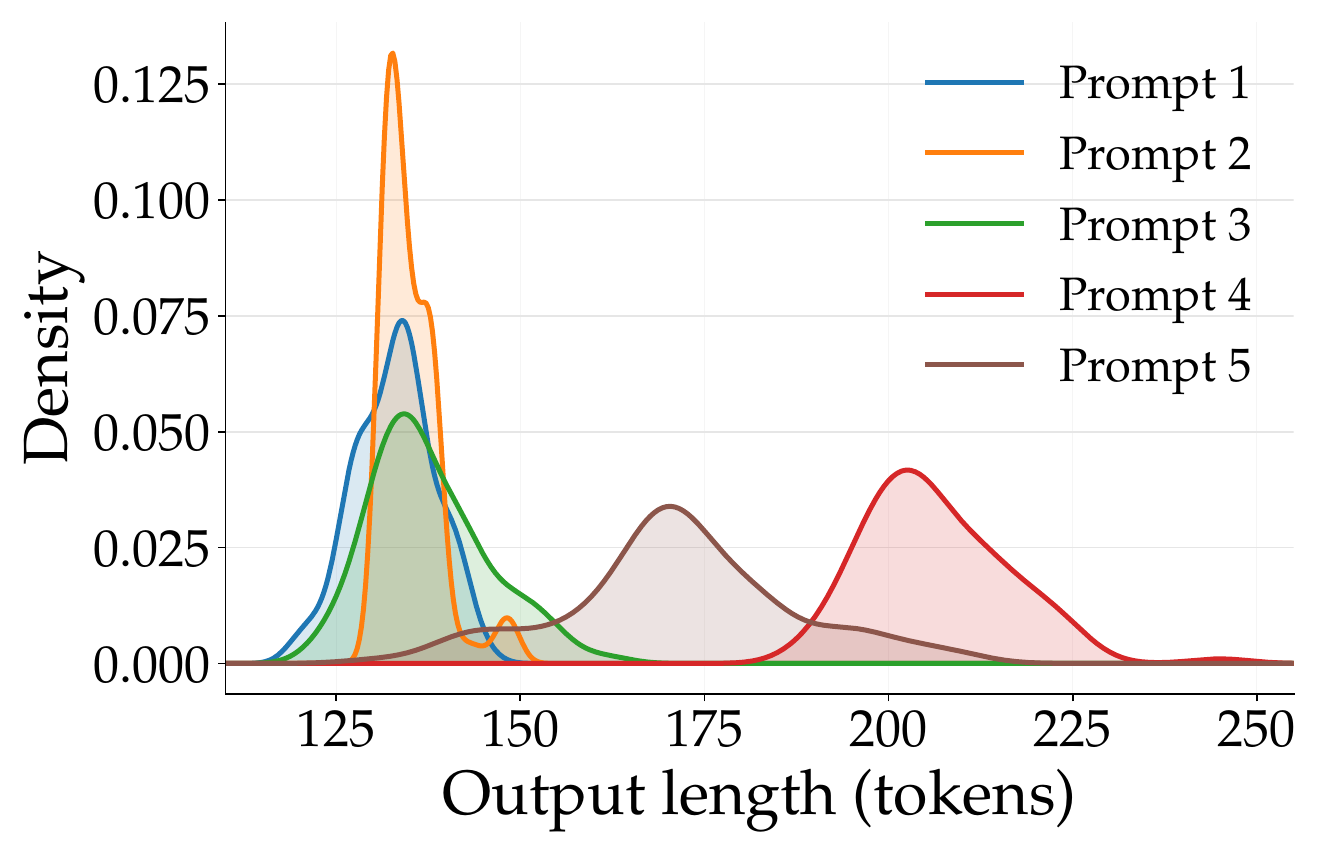}
        \caption{Coding light5}
    \end{subfigure}
    \hfill
    \begin{subfigure}[t]{0.24\linewidth}
        \centering
        \includegraphics[width=\linewidth]{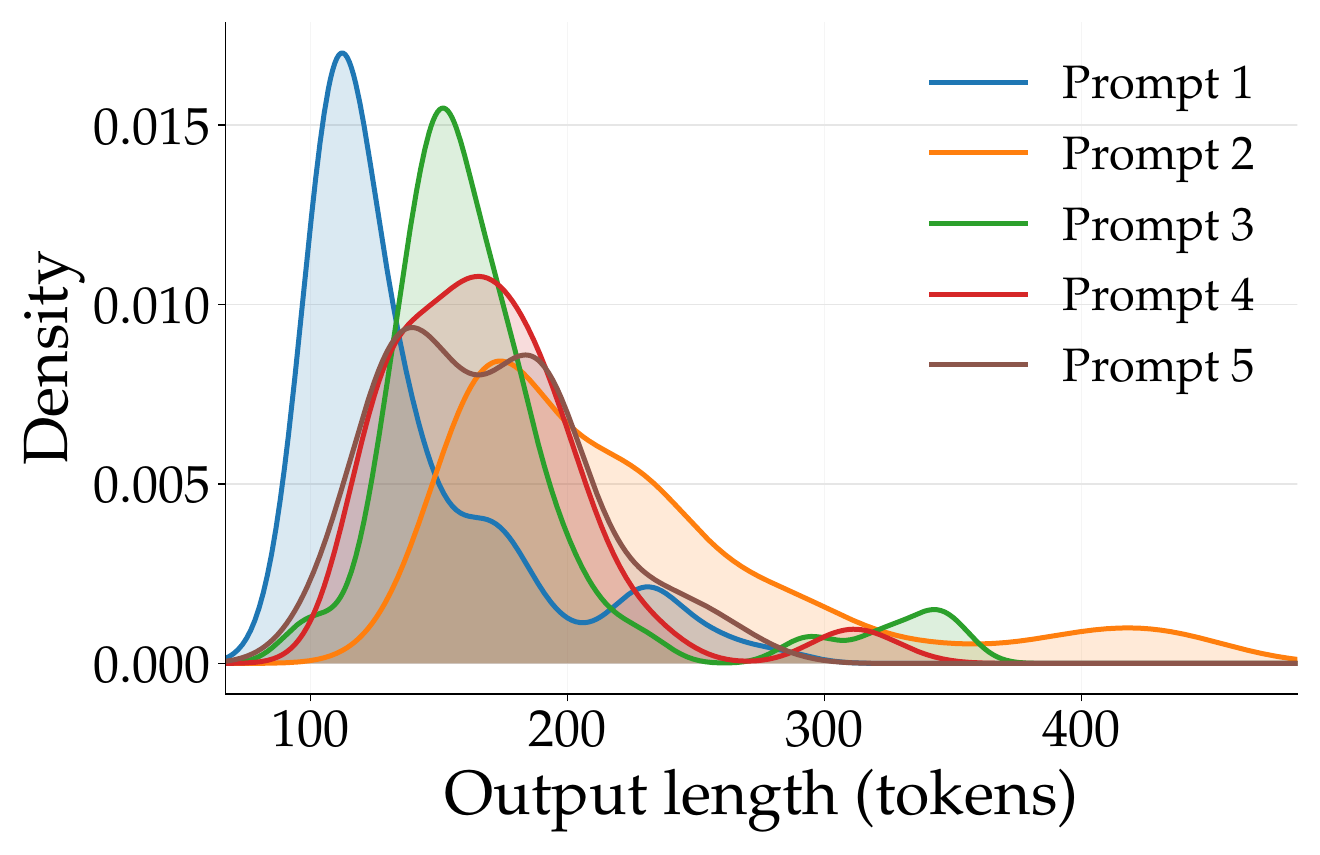}
        \caption{Coding heavy5}
    \end{subfigure}

    \vspace{0.5em}

    \begin{subfigure}[t]{0.24\linewidth}
        \centering
        \includegraphics[width=\linewidth]{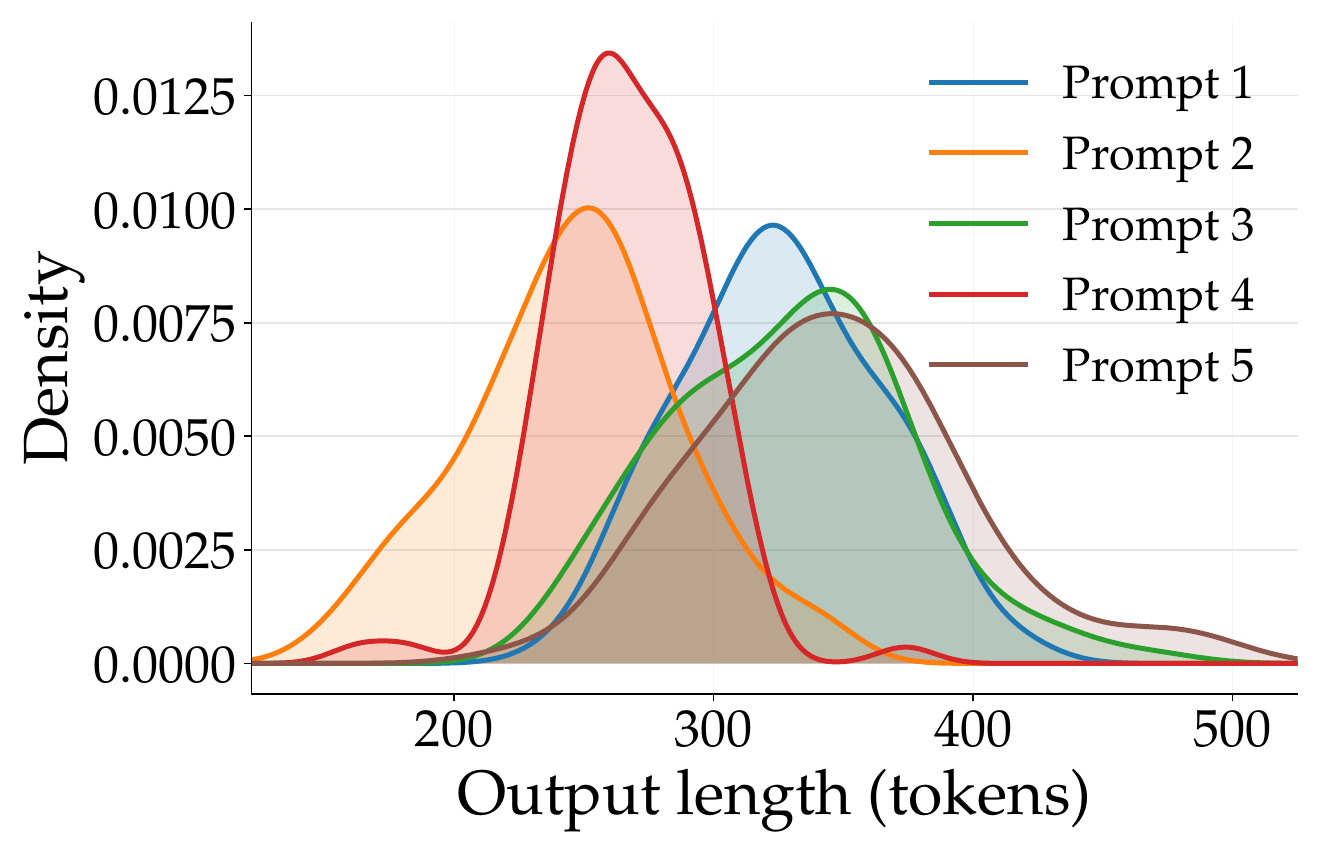}
        \caption{LongSequence light5}
    \end{subfigure}
    \hfill
    \begin{subfigure}[t]{0.24\linewidth}
        \centering
        \includegraphics[width=\linewidth]{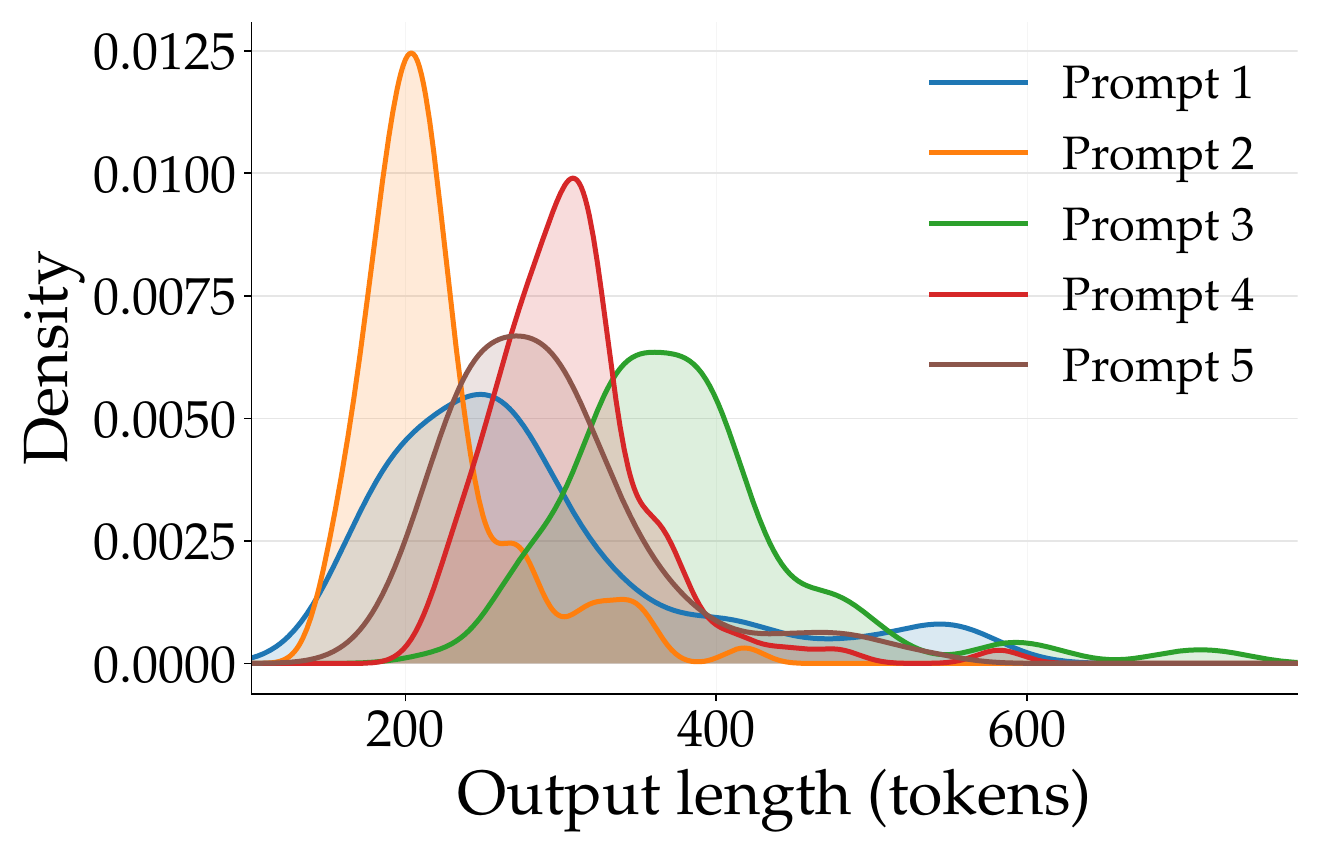}
        \caption{LongSequence heavy5}
    \end{subfigure}
    \hfill
    \begin{subfigure}[t]{0.24\linewidth}
        \centering
        \includegraphics[width=\linewidth]{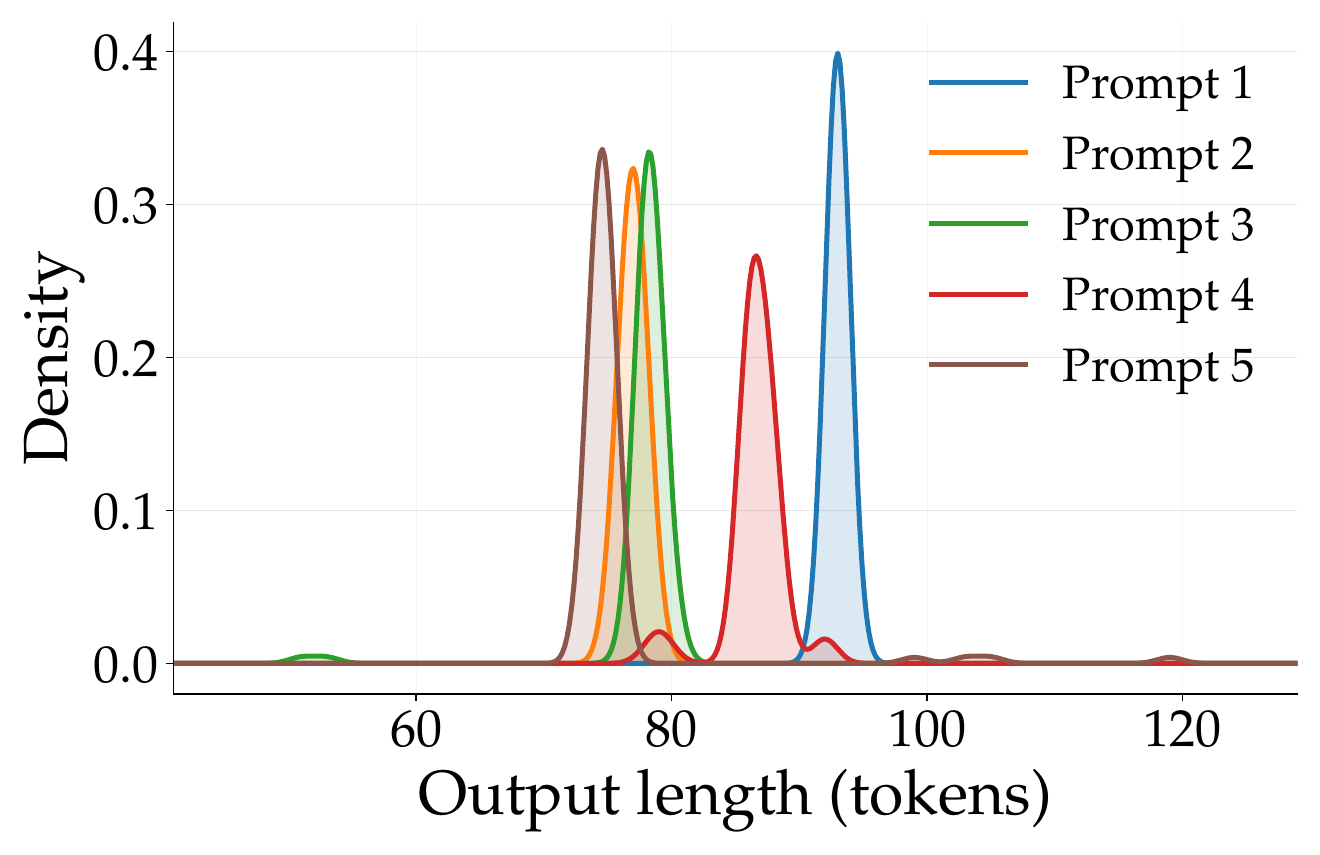}
        \caption{Chat light5}
    \end{subfigure}
    \hfill
    \begin{subfigure}[t]{0.24\linewidth}
        \centering
        \includegraphics[width=\linewidth]{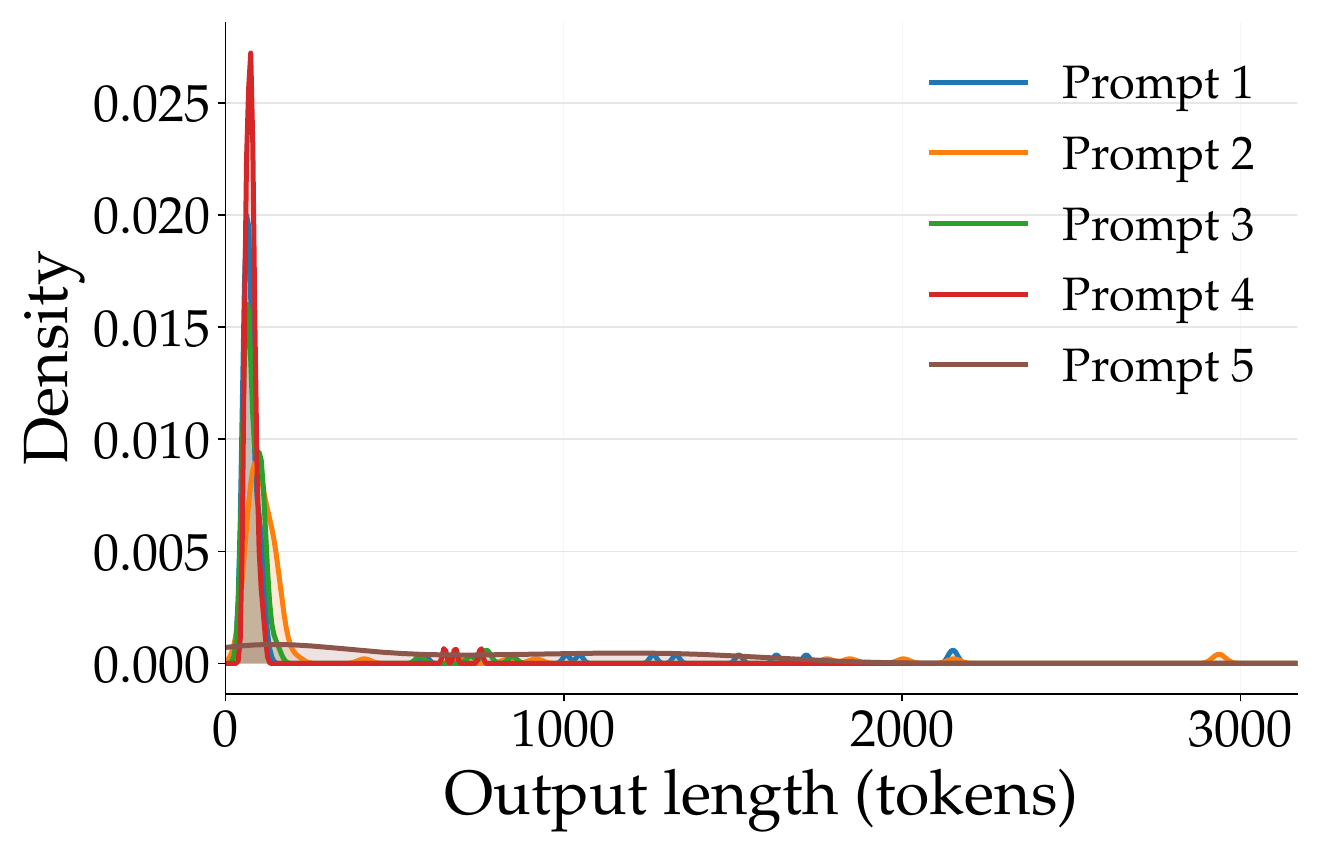}
        \caption{Chat heavy5}
    \end{subfigure}
    \caption{\textbf{Qwen per-setting light/heavy overlays.} For each setting, \emph{light5} and \emph{heavy5} denote the five prompts with the smallest and largest $\max(\mathrm{length}) / \mathrm{median}(\mathrm{length})$ among the ten repeated-sampling prompts.}
    \label{fig:app_key_observations_qwen_overlays}
\end{figure*}

\begin{figure*}[!htb]
    \centering
    \begin{subfigure}[t]{0.24\linewidth}
        \centering
        \includegraphics[width=\linewidth]{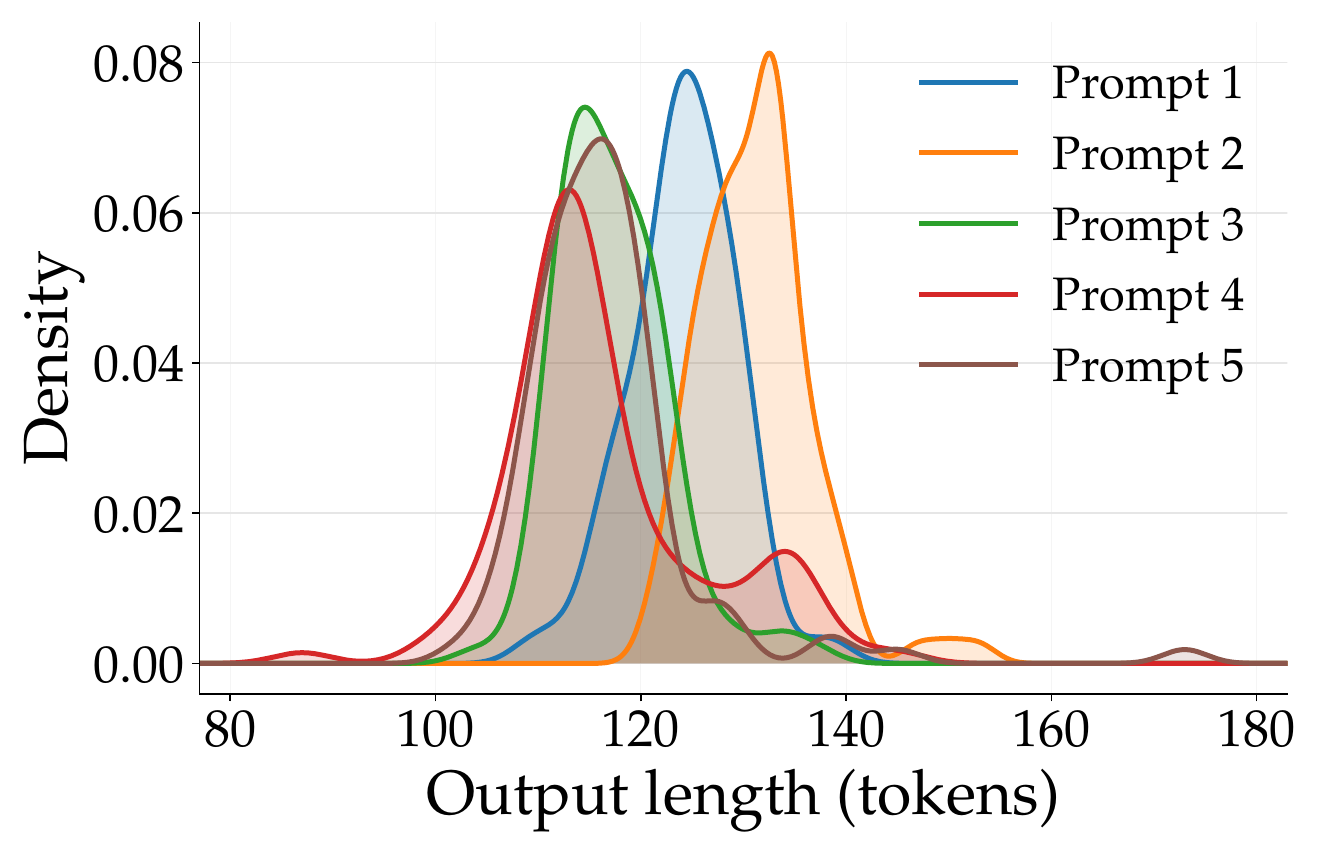}
        \caption{Math light5}
    \end{subfigure}
    \hfill
    \begin{subfigure}[t]{0.24\linewidth}
        \centering
        \includegraphics[width=\linewidth]{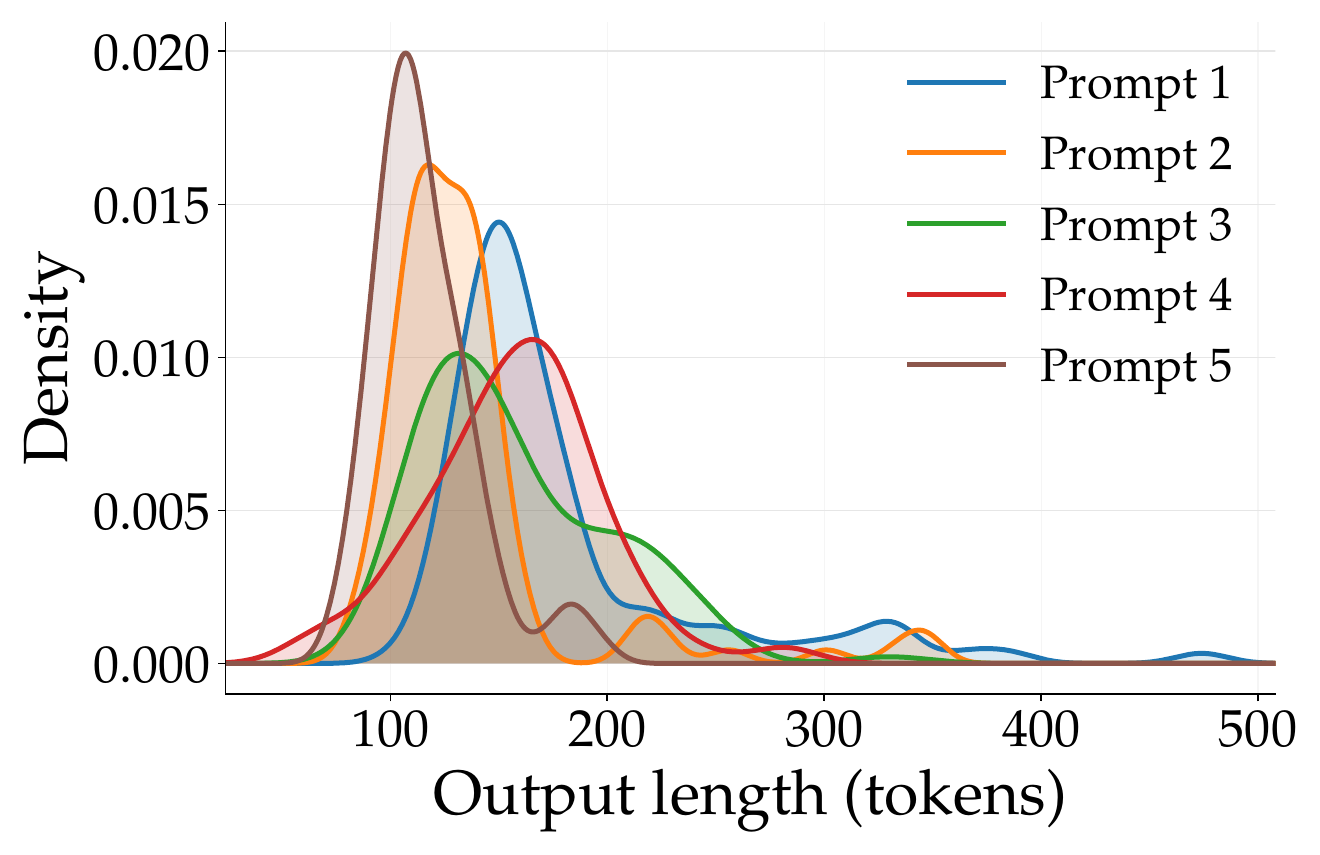}
        \caption{Math heavy5}
    \end{subfigure}
    \hfill
    \begin{subfigure}[t]{0.24\linewidth}
        \centering
        \includegraphics[width=\linewidth]{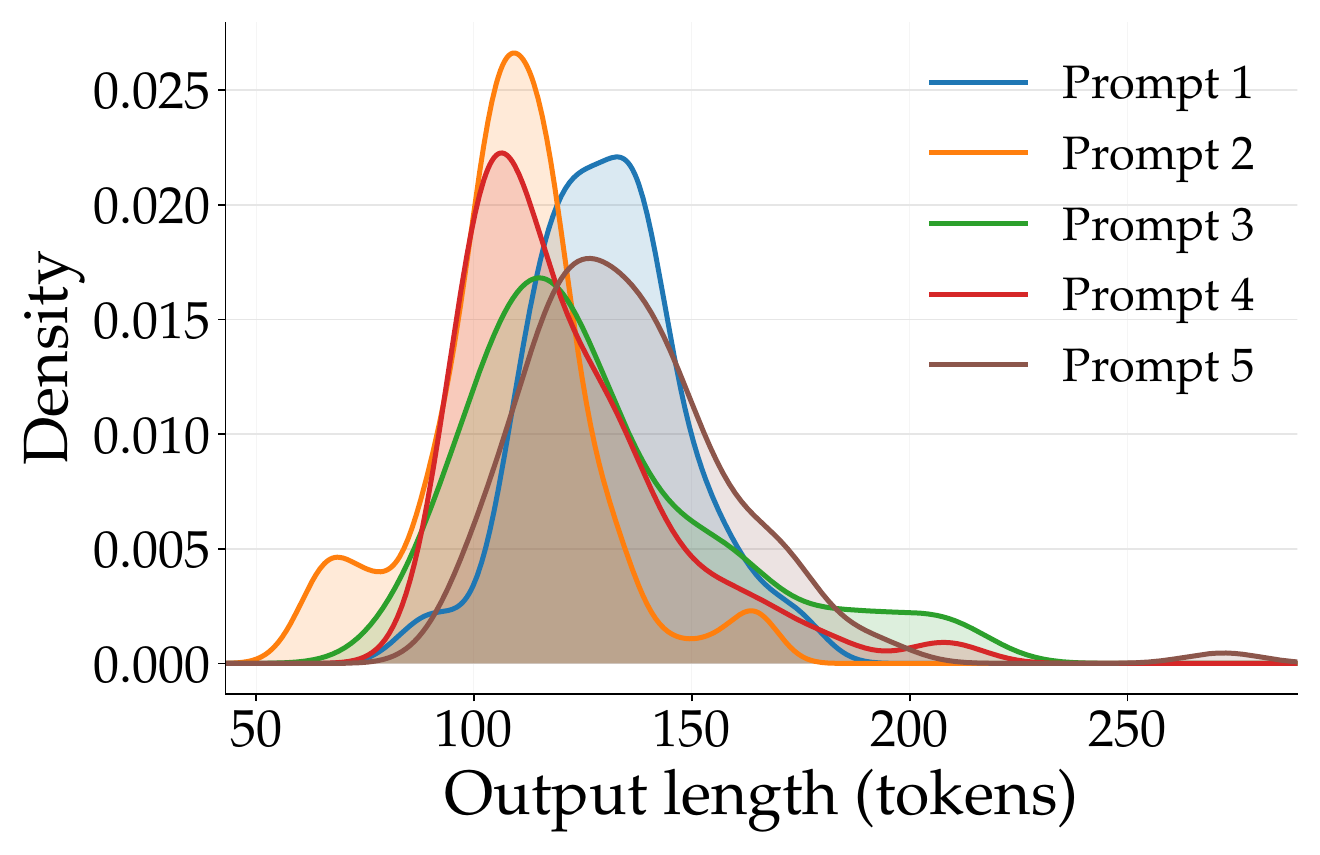}
        \caption{Coding light5}
    \end{subfigure}
    \hfill
    \begin{subfigure}[t]{0.24\linewidth}
        \centering
        \includegraphics[width=\linewidth]{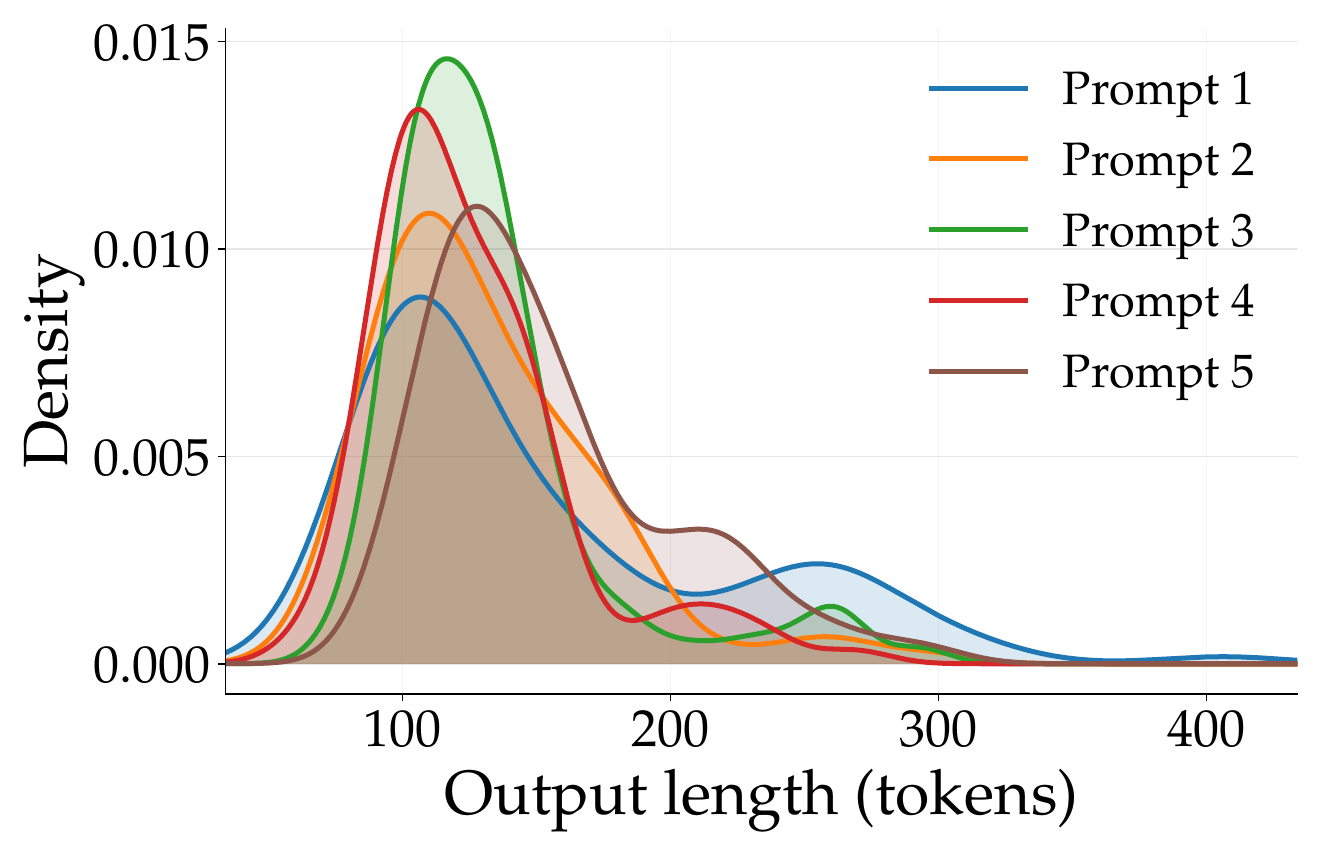}
        \caption{Coding heavy5}
    \end{subfigure}

    \vspace{0.5em}

    \begin{subfigure}[t]{0.24\linewidth}
        \centering
        \includegraphics[width=\linewidth]{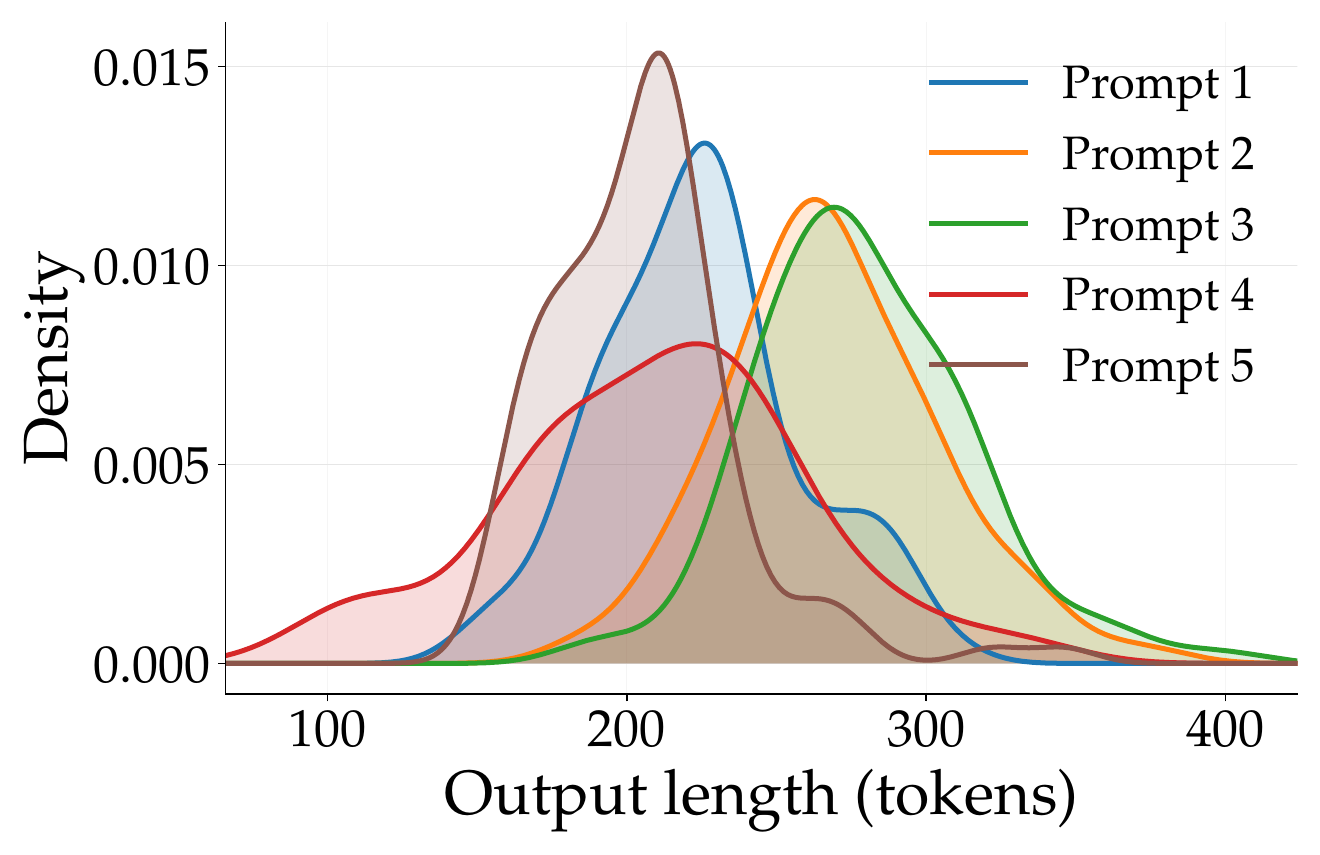}
        \caption{LongSequence light5}
    \end{subfigure}
    \hfill
    \begin{subfigure}[t]{0.24\linewidth}
        \centering
        \includegraphics[width=\linewidth]{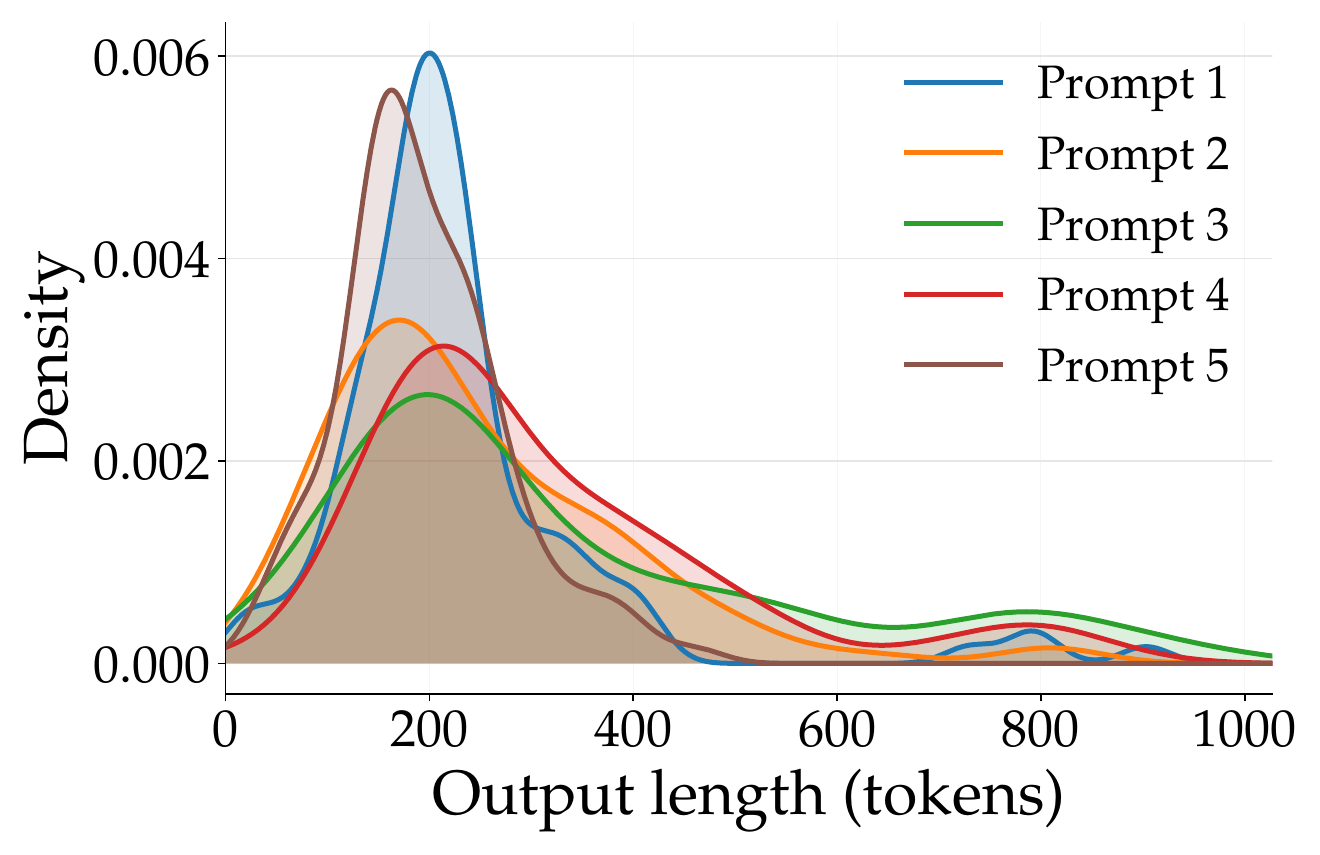}
        \caption{LongSequence heavy5}
    \end{subfigure}
    \hfill
    \begin{subfigure}[t]{0.24\linewidth}
        \centering
        \includegraphics[width=\linewidth]{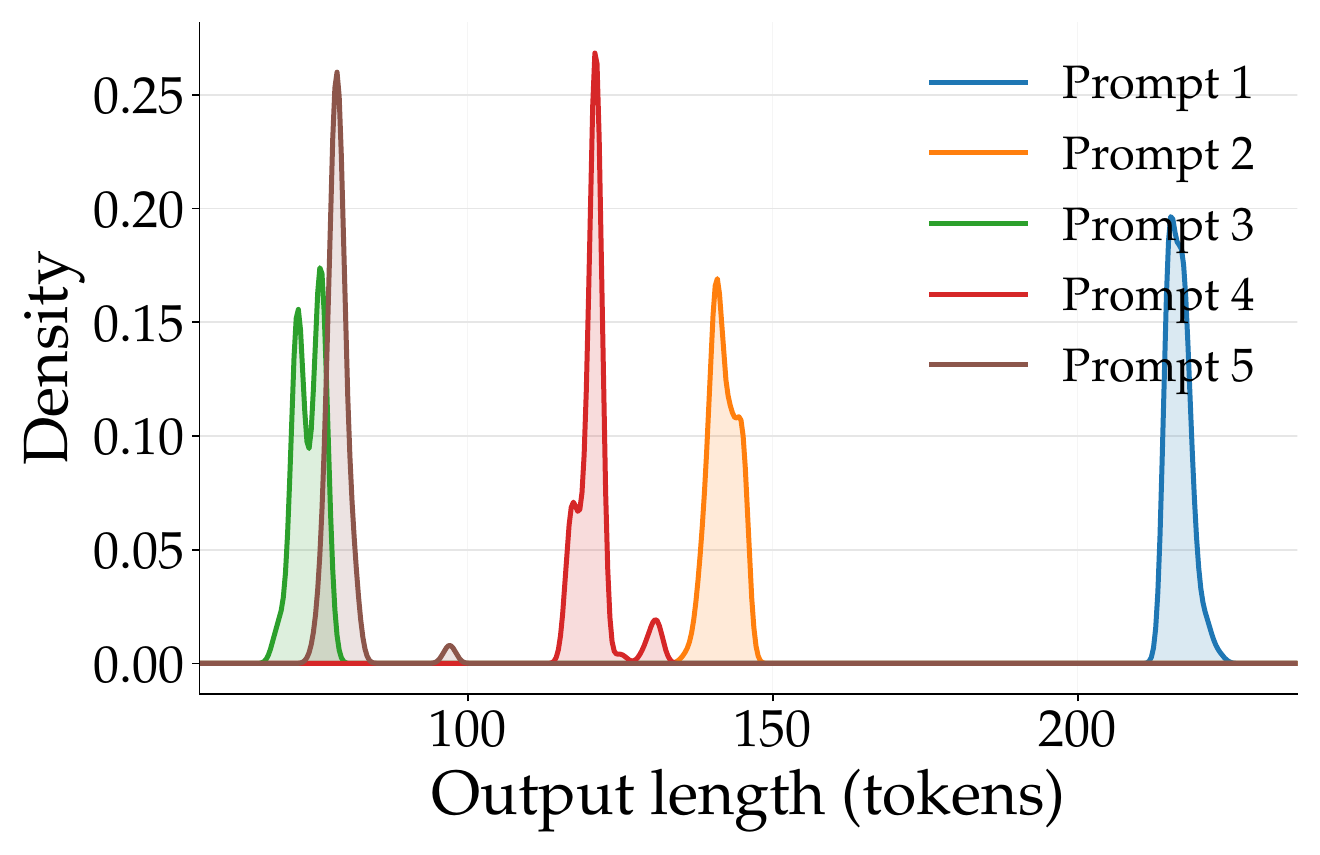}
        \caption{Chat light5}
    \end{subfigure}
    \hfill
    \begin{subfigure}[t]{0.24\linewidth}
        \centering
        \includegraphics[width=\linewidth]{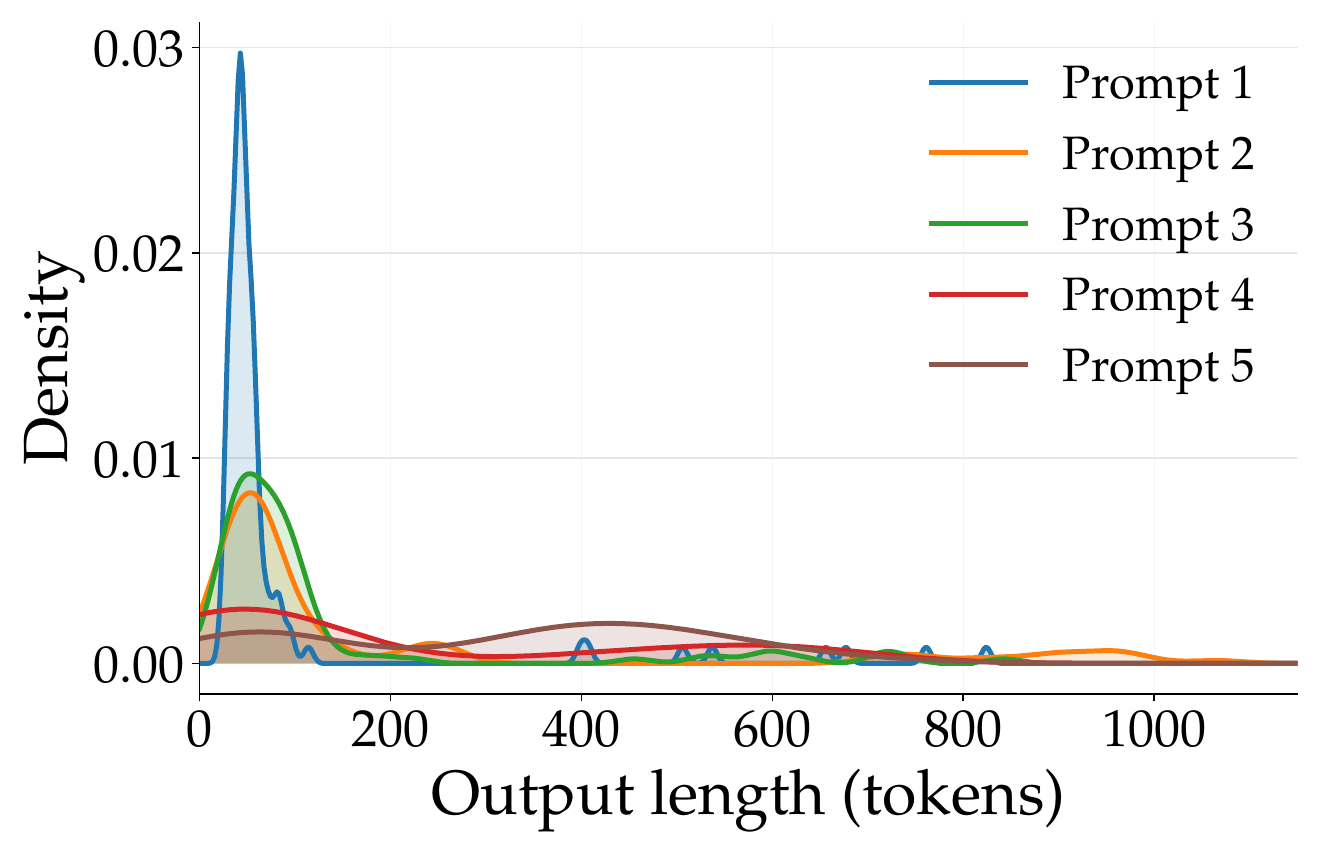}
        \caption{Chat heavy5}
    \end{subfigure}
    \caption{\textbf{Llama per-setting light/heavy overlays.} For each setting, \emph{light5} and \emph{heavy5} denote the five prompts with the smallest and largest $\max(\mathrm{length}) / \mathrm{median}(\mathrm{length})$ among the ten repeated-sampling prompts.}
    \label{fig:app_key_observations_llama_overlays}
\end{figure*}

\subsection{Detailed Key Observations Across Settings}
\label{sec:app_key_observations}

\paragraph{Setting.}
This section provides the detailed experimental results behind Figure~\ref{fig:key_observations}. We study four scenarios: Math = GSM8K, Coding = MBPP, LongSequence = LongBench, and Chat = LMSYS-Chat-1M. For the noise-floor summaries, GSM8K and MBPP pool their official train and test splits, whereas LongBench and LMSYS use the test side of the deterministic derived splits described above. Within each setting, we fix the served model, prompt format, and decoding setup, and then run $16$ repeated generations per prompt at temperature $0.8$ to compute prompt-level Median-MAE. For the heavy-tail diagnostics, we freeze a pool of $10$ representative train prompts per setting and run $100$ repeated generations for each selected prompt. GSM8K, MBPP, and LMSYS use the standard short-context serving path, whereas LongBench is reported under model-specific long-context budgets with explicit truncation metadata and reserved generation budget.

\paragraph{Detailed noise radius.}
Figure~\ref{fig:key_observations_noise_floor} in the main text provides the aggregated grouped boxplot, while Figure~\ref{fig:app_key_observations_waterfalls} shows how that variability is distributed across prompt percentiles. The setting-wise median prompt-level noise radius are $27.84$ and $16.13$ tokens for Qwen/Llama on Math, $21.66$ and $22.97$ on Coding, $42.94$ and $38.00$ on LongSequence, and $35.25$ and $33.38$ on Chat. The largest upper tails appear in the harder regimes: the setting-level $90$-th percentile reaches $71.4$ and $62.7$ tokens on Qwen/Llama LongSequence, and $135.9$ and $96.1$ on Qwen/Llama Chat. 

\paragraph{Heavy-tailed noise.}
Figures~\ref{fig:app_key_observations_qwen_overlays} and~\ref{fig:app_key_observations_llama_overlays} explain how we build the heavy-tail diagnostics behind the main-text examples. For each model--scenario pair, we first freeze $10$ representative prompts, then rank them by $\max(\mathrm{length}) / \mathrm{median}(\mathrm{length})$ under the $100$-repeat runs; \emph{light5} and \emph{heavy5} denote the bottom-$5$ and top-$5$ prompts under this ratio. Their max-to-median ratios are $2.91\times$, $2.42\times$, and $1.98\times$ for Qwen, and $2.99\times$, $3.27\times$, and $4.03\times$ for Llama. Across the appendix overlays, the heavy groups consistently show broader right tails than the light groups, which is why the main text treats these patterns as consistent with heavy-tailed behavior.

\section{Theoretical Analysis}
In this section, we provide the proof of Theorem \ref{theorem:estimation_error}.
\label{sec:app_theory}
\begin{proof}[Proof of Theorem \ref{theorem:estimation_error}]

\begin{equation*}
    \abs{\phi(x)^\top \theta_* - \phi(x)^\top \thetah_N}
    \le
    \norm{\phi(x)}_{V_{N}^{-1}}\norm{\theta_* - \thetah_N}_{V_{N}}.
\end{equation*}

Moreover, since the ridge estimator has a closed-form solution,
\begin{equation*}
    \thetah_N
    =
    V_N^{-1}
    \sbr{\sum_{i=1}^{N}\bar{L}_i\phi(x_i)},
    \qquad
    V_N \define \lambda I_d + \sum_{i=1}^{N}\phi(x_i)\phi(x_i)^\top.
\end{equation*}
The absolute prediction error can be bounded as follows:
\begin{align*}
    \thetah_N - \theta_*
    = {}& V_{N}^{-1}\sbr{\sum_{i=1}^{N}\bar{L}_i\phi(x_i)}- \theta_* \\
    = {}& V_{N}^{-1}\sbr{\sum_{i=1}^{N}\bar{L}_i\phi(x_i)}
    -V_{N}^{-1}\sbr{\lambda I_d + \sum_{i=1}^{N}\phi(x_i)\phi(x_i)^\top}\theta_* \\
    = {}& V_{N}^{-1}\sbr{\sum_{i=1}^{N}\sbr{\bar{L}_i - \phi(x_i)^\top\theta_*}\phi(x_i)-\lambda\theta_*}.
\end{align*}
Therefore, by the Cauchy--Schwarz inequality, for any $x \in \X$,
\begin{equation*}
\begin{split}
    \abs{\phi(x)^\top \theta_* - \phi(x)^\top \thetah_N}
    &\le
    \norm{\phi(x)}_{V_{N}^{-1}}\norm{\theta_* - \thetah_N}_{V_{N}} \\
    &\le
    \norm{\phi(x)}_{V_{N}^{-1}}
    \norm{\sum_{i=1}^{N}\sbr{\bar{L}_i - \phi(x_i)^\top\theta_*}\phi(x_i)-\lambda\theta_*}_{V_{N}^{-1}}.
\end{split}
\end{equation*}
Hence,
\begin{equation*}
\begin{split}
    \norm{\theta_* - \thetah_N}_{V_{N}}
    &\le
    \norm{\sum_{i=1}^{N}\sbr{\bar{L}_i - \phi(x_i)^\top\theta_*}\phi(x_i)-\lambda\theta_*}_{V_{N}^{-1}} \\
    &\le
    \norm{\sum_{i=1}^{N}\sbr{\bar{L}_i - \phi(x_i)^\top\theta_*}\phi(x_i)}_{V_{N}^{-1}}
    +
    \sqrt{\lambda}S.
\end{split}
\end{equation*}
We define $\bar{\eta}_i \define \bar{L}_i - \phi(x_i)^\top\theta_*$, and
\begin{equation*}
    S_i \define \sum_{j=1}^{i}\bar{\eta}_j\phi(x_j), \qquad S_0 \define 0.
\end{equation*}
For notational simplicity, let $\phi_i \define \phi(x_i)$. Since $V_i = V_{i-1} + \phi_i\phi_i^\top$, the Sherman--Morrison formula yields
\begin{equation*}
    V_i^{-1}
    =
    V_{i-1}^{-1}
    -
    \frac{V_{i-1}^{-1}\phi_i\phi_i^\top V_{i-1}^{-1}}
    {1+\phi_i^\top V_{i-1}^{-1}\phi_i}.
\end{equation*}
Define
\begin{equation*}
    a_i
    \define
    \frac{1}{1+\norm{\phi_i}_{V_{i-1}^{-1}}^2}
    =
    \frac{1}{1+\phi_i^\top V_{i-1}^{-1}\phi_i}.
\end{equation*}
Then, using $S_i = S_{i-1} + \bar{\eta}_i\phi_i$, we have
\begin{equation*}
\begin{split}
    \norm{S_i}_{V_i^{-1}}^2
    &=
    (S_{i-1}+\bar{\eta}_i\phi_i)^\top V_i^{-1}(S_{i-1}+\bar{\eta}_i\phi_i) \\
    &=
    \norm{S_{i-1}}_{V_{i-1}^{-1}}^2
    -
    a_i\sbr{\phi_i^\top V_{i-1}^{-1}S_{i-1}}^2
    +
    2a_i\bar{\eta}_i\phi_i^\top V_{i-1}^{-1}S_{i-1}
    +
    \norm{\phi_i}_{V_i^{-1}}^2\bar{\eta}_i^2.
\end{split}
\end{equation*}

Summing the above recursion from $i=1$ to $N$ and using $S_0 = 0$, we obtain
\begin{equation*}
\begin{split}
    \norm{S_{N}}_{V_{N}^{-1}}^2
    &=
    -\sum_{i=1}^{N} a_i\sbr{\phi(x_i)^\top V_{i-1}^{-1}S_{i-1}}^2
    +
    2\sum_{i=1}^{N} a_i\bar{\eta}_i\phi(x_i)^\top V_{i-1}^{-1}S_{i-1}
    +
    \sum_{i=1}^{N} \norm{\phi(x_i)}_{V_i^{-1}}^2\bar{\eta}_i^2.
\end{split}
\end{equation*}

Now define $\alpha_i \define a_i \phi(x_i)^\top V_{i-1}^{-1}S_{i-1}$ and
$\beta_i \define \norm{\phi(x_i)}_{V_{i-1}^{-1}}$, and let
$\alpha \define \sbr{\alpha_1,\dots,\alpha_{N}}$ and
$\beta \define \sbr{\beta_1,\dots,\beta_{N}}$.
Since
\begin{equation*}
    a_i\sbr{\phi(x_i)^\top V_{i-1}^{-1}S_{i-1}}^2
    =
    \frac{\alpha_i^2}{a_i}
    \ge \alpha_i^2,
\end{equation*}
and
\begin{equation*}
    \norm{\phi(x_i)}_{V_i^{-1}}^2
    =
    \frac{\beta_i^2}{1+\beta_i^2}
    \le \beta_i^2,
\end{equation*}
we further obtain
\begin{equation*}
\begin{split}
    \norm{S_{N}}_{V_{N}^{-1}}^2 \le
    -\sum_{i=1}^{N}\alpha_i^2
    +
    2\sum_{i=1}^{N}\alpha_i\bar{\eta}_i
    +
    \sum_{i=1}^{N}\beta_i^2\bar{\eta}_i^2  =
    -\norm{\alpha}_2^2
    +
    2\sum_{i=1}^{N}\alpha_i\bar{\eta}_i
    +
    \sum_{i=1}^{N}\beta_i^2\bar{\eta}_i^2.
\end{split}
\end{equation*}

Therefore, by Lemma~\ref{lemma:adapted_linear_explicit_R} and Lemma~\ref{lemma:adapted_quadratic_explicit_R}, with probability at least
\begin{equation*}
    1-\delta-4Ne^{-r/8},
\end{equation*}
the following two bounds hold simultaneously:
\begin{equation*}
    \sum_{i=1}^{N}\alpha_i\bar{\eta}_i
    \le
    \rho\norm{\alpha}_{1+\epsilon},
    \qquad
    \sum_{i=1}^{N}\beta_i^2\bar{\eta}_i^2
    \le
    C\rho\norm{\beta}_{1+\epsilon}^2,
\end{equation*}
where
\begin{equation*}
    C \define (4v)^{\frac{1}{1+\epsilon}},
    \qquad
    \rho \define 2C\ln\sbr{\frac{8N}{\delta}} + 4C^{-\epsilon}v.
\end{equation*}
Hence,
\begin{equation*}
    \norm{S_{N}}_{V_{N}^{-1}}^2
    \le
    -\norm{\alpha}_2^2
    +
    2\rho\norm{\alpha}_{1+\epsilon}
    +
    C\rho\norm{\beta}_{1+\epsilon}^2.
\end{equation*}

By H\"older's inequality,
\begin{equation*}
    \norm{\alpha}_{1+\epsilon}
    \le
    N^{\frac{1-\epsilon}{2(1+\epsilon)}}\norm{\alpha}_2,
    \qquad
    \norm{\beta}_{1+\epsilon}^2
    \le
    N^{\frac{1-\epsilon}{1+\epsilon}}\norm{\beta}_2^2.
\end{equation*}
Thus,
\begin{equation*}
\begin{split}
    \norm{S_{N}}_{V_{N}^{-1}}^2
    &\le
    -\norm{\alpha}_2^2
    +
    2\rho N^{\frac{1-\epsilon}{2(1+\epsilon)}}\norm{\alpha}_2
    +
    C\rho N^{\frac{1-\epsilon}{1+\epsilon}}\norm{\beta}_2^2.
\end{split}
\end{equation*}
Using $-u^2 + 2au \le a^2$ with
\begin{equation*}
    u = \norm{\alpha}_2,
    \qquad
    a = \rho N^{\frac{1-\epsilon}{2(1+\epsilon)}},
\end{equation*}
we get
\begin{equation*}
    \norm{S_{N}}_{V_{N}^{-1}}^2
    \le
    \rho^2 N^{\frac{1-\epsilon}{1+\epsilon}}
    +
    C\rho N^{\frac{1-\epsilon}{1+\epsilon}}\norm{\beta}_2^2.
\end{equation*}
Applying Lemma~\ref{lemma:potential_lemma},
\begin{equation*}
    \norm{\beta}_2^2
    =
    \sum_{i=1}^{N}\beta_i^2
    =
    \sum_{i=1}^{N}\norm{\phi(x_i)}_{V_{i-1}^{-1}}^2
    \le
    2d\log\sbr{1+\frac{N}{\lambda d}}.
\end{equation*}
Therefore, with probability at least $1-\delta-4Ne^{-r/8}$,
we have
\begin{equation*}
    \norm{S_{N}}_{V_{N}^{-1}}^2
    \le
    \rho^2 N^{\frac{1-\epsilon}{1+\epsilon}}
    +
    2C\rho d N^{\frac{1-\epsilon}{1+\epsilon}}
    \log\sbr{1+\frac{N}{\lambda d}}.
\end{equation*}
Combining this bound with
\begin{equation*}
    \norm{\theta_* - \thetah_N}_{V_{N}}
    \le
    \norm{S_{N}}_{V_{N}^{-1}}+\sqrt{\lambda}S,
\end{equation*}
with probability at least $1-\delta-4Ne^{-r/8}$, we have
\begin{equation*}
    \norm{\theta_* - \thetah_N}_{V_{N}}
    \le
    \sqrt{\lambda}S
    +
    \sqrt{
        \rho^2 N^{\frac{1-\epsilon}{1+\epsilon}}
        +
        2C\rho d N^{\frac{1-\epsilon}{1+\epsilon}}
        \log\sbr{1+\frac{N}{\lambda d}}
    }.
\end{equation*}
Consequently, for any $x \in \X$, with probability at least $1-\delta-4Ne^{-r/8}$,
\begin{equation*}
\begin{split}
    \abs{\phi(x)^\top \theta_* - \phi(x)^\top \thetah_N}
    &\le
    \norm{\phi(x)}_{V_{N}^{-1}}
    \sbr{
        \sqrt{\lambda}S
        +
        \sqrt{
            \rho^2 N^{\frac{1-\epsilon}{1+\epsilon}}
            +
            2C\rho d N^{\frac{1-\epsilon}{1+\epsilon}}
            \log\sbr{1+\frac{N}{\lambda d}}
        }
    }.
\end{split}
\end{equation*}

\end{proof}

\subsection{Useful Lemmas}

\begin{myLemma}[Adapted linear concentration with explicit repeated-sampling probability]
    \label{lemma:adapted_linear_explicit_R}
    Suppose Assumption~\ref{assumption:noise_symmetric} holds. For any fixed dataset size $N$ and any fixed coefficients $\alpha_1,\dots,\alpha_{N} \in \R$, let
    \begin{equation*}
        \alpha \define \sbr{\alpha_1,\dots,\alpha_{N}},
        \qquad
        C \define (4v)^{\frac{1}{1+\epsilon}},
        \qquad
        \rho \define 2C\ln\sbr{\frac{8N}{\delta}} + 4C^{-\epsilon}v.
    \end{equation*}
    Then,
    \begin{equation*}
        \sum_{i=1}^{N}\alpha_i\bar{\eta}_i
        \le
        \rho\norm{\alpha}_{1+\epsilon}
    \end{equation*}
    holds with probability at least
    \begin{equation*}
        1-\frac{\delta}{2}-2Ne^{-r/8}.
    \end{equation*}
    \end{myLemma}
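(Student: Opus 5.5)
The plan is a truncation-plus-Chernoff argument applied to the median labels $\bar\eta_i = \operatorname{median}[\eta_{i,1},\dots,\eta_{i,r}]$. The $2Ne^{-r/8}$ term will pay for truncation failures. The $\delta/2$ term will pay for the concentration of the truncated sum.

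\emph{Step 1: properties of a single median label.} By Assumption~\ref{assumption:noise_symmetric} and Markov's inequality, $\P(|\eta|>C) \le v/C^{1+\epsilon} = 1/4$, using $C^{1+\epsilon}=4v$.
\begin{itemize}
\item The event $\bar\eta_i > C$ requires at least $r/2$ of the $r$ i.i.d.\ draws to exceed $C$. Each draw does so with probability at most $1/4$, so Hoeffding gives $\P(\bar\eta_i > C) \le e^{-2r(1/4)^2} = e^{-r/8}$. The lower tail is symmetric.
\item A union bound over the two sides and over $i\le N$ gives the event $E=\{|\bar\eta_i|\le C\ \forall i\}$ with $\P(E^c)\le 2Ne^{-r/8}$.
\item Since the $\eta_{i,j}$ are symmetric and i.i.d., $\bar\eta_i$ is symmetric.
\item For any $t$, $\P(|\bar\eta_i|>t)$ is at most the probability that at least $r/2$ draws have $|\eta|>t$. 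By Markov on the count this is at most $2\P(|\eta|>t)$. Integrating the tails gives $\E|\bar\eta_i|^{1+\epsilon}\le 2v$.
\end{itemize}

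\emph{Step 2: truncation and Chernoff.} Normalize $w_i=\alpha_i/\norm{\alpha}_{1+\epsilon}$, so that $|w_i|\le 1$ and $\sum_i|w_i|^{1+\epsilon}=1$. Let $Y_i=\bar\eta_i\ind{|\bar\eta_i|\le C}$; by symmetry $\E Y_i=0$. On $E$ we have $\sum_i w_i\bar\eta_i=\sum_i w_iY_i$.
\begin{itemize}
\item Take $\lambda = 1/(2C)$. Then $|\lambda w_iY_i|\le 1/2$, so $e^{x}\le 1+x+x^2$ applies.
\item The second moment satisfies $\E[w_i^2Y_i^2]\le |w_i|^{1+\epsilon}C^{1-\epsilon}\E|\bar\eta_i|^{1+\epsilon}\le 2v\,C^{1-\epsilon}|w_i|^{1+\epsilon}$.
\item Hence $\E\exp(\lambda\sum_i w_iY_i)\le \exp\bigl(2v\lambda^2C^{1-\epsilon}\bigr)$, using independence across $i$ and $\sum_i|w_i|^{1+\epsilon}=1$.
\item Markov's inequality then gives, with probability at least $1-\delta/2$,
\[
\sum_i w_iY_i\le 2v\lambda C^{1-\epsilon}+\lambda^{-1}\ln(2/\delta)=vC^{-\epsilon}+2C\ln(2/\delta)\le\rho .
\]
\end{itemize}

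\emph{Step 3: combine.} Intersect this event with $E$ and multiply back by $\norm{\alpha}_{1+\epsilon}$. This gives the claim with probability at least $1-\delta/2-2Ne^{-r/8}$. The constant $\rho$ as stated, with $\ln(8N/\delta)$ and $4C^{-\epsilon}v$, is looser than what Step 2 produces, which leaves slack.

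\emph{Main obstacle.} In the theorem the coefficients $\alpha_i$ depend on $S_{i-1}$, and hence on $\bar\eta_1,\dots,\bar\eta_{i-1}$, rather than being fixed. So the product-of-MGFs step should be redone as a supermartingale argument. Conditioning on $\mathcal F_{i-1}$, the same one-step bound holds because $\bar\eta_i$ is independent of $\mathcal F_{i-1}$ and $\alpha_i$ is $\mathcal F_{i-1}$-measurable.

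The normalization by $\norm{\alpha}_{1+\epsilon}$ is then no longer predictable. I would handle this by bounding with unnormalized weights, choosing $\lambda=1/(2C\max_i|\alpha_i|)$, and using $\max_i|\alpha_i|\le\norm{\alpha}_{1+\epsilon}$. If that is insufficient, I would fall back to a union or peeling argument over scales; this is where the extra $\ln N$ in $\rho$ plausibly comes from.
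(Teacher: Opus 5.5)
Your proof is correct for the lemma as stated, with fixed coefficients. Its skeleton is the paper's: per-draw Markov gives $\P(\abs{\eta}>C)\le 1/4$, Hoeffding on the exceedance count gives $e^{-r/8}$ per side and $2Ne^{-r/8}$ after a union bound, the median moment bound gives $\E\abs{\bar\eta_i}^{1+\epsilon}\le 2v$, symmetry gives a zero-mean truncated variable, and a concentration bound controls the truncated sum.

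You differ in the truncation level and in the concentration step. The paper truncates at the coefficient-dependent level $\abs{\alpha_i\bar\eta_i}\le C\norm{\alpha}_{1+\epsilon}$. It then cites CRMM's Lemma~9 as a black box with $v_1=2v$ and $\delta'=\delta/(4N)$, which is how $\ln(8N/\delta)=\ln(2/\delta')$ enters $\rho$. You instead truncate at the uniform level $\abs{\bar\eta_i}\le C$. This is a smaller good event, but it has the same $e^{-r/8}$ failure bound, since $\P(\abs{\eta}>C)\le 1/4$ holds without reference to $\alpha_i$. You then prove the truncated bound directly from the MGF, using the Chernoff parameter $1/(2C)$, $e^x\le 1+x+x^2$, and $w_i^2\le\abs{w_i}^{1+\epsilon}$.

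Your route is self-contained and gives the sharper constant $2C\ln(2/\delta)+C^{-\epsilon}v$, with no $\ln N$. Also, your event $E$ lies inside the good event of both this lemma and the quadratic lemma for every choice of coefficients, so the two could share it. That would improve the theorem's $4Ne^{-r/8}$ to $2Ne^{-r/8}$. The paper's route buys brevity and a transparent diff against CRMM, whose Lemma~9 is phrased with the filtration $\mathcal F_{i-1}$.

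Your closing ``main obstacle'' remark concerns how the theorem applies the lemma, where $\alpha_i$ depends on $S_{i-1}$, not the lemma itself. The fix you sketch, a Chernoff parameter $1/(2C\max_i\abs{\alpha_i})$, does not work. $\max_i\abs{\alpha_i}$ depends on future noise, so it is not $\mathcal F_{i-1}$-measurable and cannot serve in a supermartingale argument. A uniform-over-scales argument, i.e.\ peeling or a union over a grid of Chernoff parameters as in your fallback, is the natural tool for that case.
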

    
    \begin{proof}
    The proof follows the same decomposition as Lemma 10 in CRMM~\citep{NeurIPS'23:hvtglb}, with two modifications.
    
    First, in the proof of CRMM Lemma 10, the quantity $r$ enters through the control of the bad event
    \begin{equation*}
        \abs{\alpha_i\bar{\eta}_i} > C\norm{\alpha}_{1+\epsilon}.
    \end{equation*}
    For each fixed $i$, write the $r$ repeated noises as $\eta_{i,1},\dots,\eta_{i,r}$, and recall that
    \begin{equation*}
        \bar{\eta}_i = \operatorname{median}\mbr{\eta_{i,1},\dots,\eta_{i,r}}.
    \end{equation*}
    By Markov's inequality and $\E\mbr{\abs{\eta_{i,j}}^{1+\epsilon}\mid x_i}\le v$, we have
    \begin{equation*}
        \mathbb{P}\sbr{\abs{\alpha_i\eta_{i,j}} > C\norm{\alpha}_{1+\epsilon}}
        \le
        \frac{\abs{\alpha_i}^{1+\epsilon}v}{C^{1+\epsilon}\norm{\alpha}_{1+\epsilon}^{1+\epsilon}}
        \le
        \frac{1}{4},
    \end{equation*}
    where the last step uses $C=(4v)^{\frac{1}{1+\epsilon}}$.
    
    Now define
    \begin{equation*}
        B_{i,j}^+ \define \ind{\alpha_i\eta_{i,j} > C\norm{\alpha}_{1+\epsilon}},
        \qquad
        B_{i,j}^- \define \ind{\alpha_i\eta_{i,j} < -C\norm{\alpha}_{1+\epsilon}}.
    \end{equation*}
    If $\alpha_i\bar{\eta}_i > C\norm{\alpha}_{1+\epsilon}$, then at least half of the repeated samples satisfy
    \begin{equation*}
        \alpha_i\eta_{i,j} > C\norm{\alpha}_{1+\epsilon},
    \end{equation*}
    that is,
    \begin{equation*}
        \sum_{j=1}^r B_{i,j}^+ \ge \frac{r}{2}.
    \end{equation*}
    Since $\E[B_{i,j}^+] \le \frac14$, Hoeffding's inequality gives
    \begin{equation*}
        \mathbb{P}\sbr{\alpha_i\bar{\eta}_i > C\norm{\alpha}_{1+\epsilon}}
        \le
        \exp\!\sbr{-\frac{r}{8}}.
    \end{equation*}
    By the same argument,
    \begin{equation*}
        \mathbb{P}\sbr{\alpha_i\bar{\eta}_i < -C\norm{\alpha}_{1+\epsilon}}
        \le
        \exp\!\sbr{-\frac{r}{8}}.
    \end{equation*}
    Therefore,
    \begin{equation*}
        \mathbb{P}\sbr{\abs{\alpha_i\bar{\eta}_i} > C\norm{\alpha}_{1+\epsilon}}
        \le
        2e^{-r/8}.
    \end{equation*}
    Summing over $i=1,\dots,N$, we obtain
    \begin{equation*}
        \sum_{i=1}^{N}\mathbb{P}\sbr{\abs{\alpha_i\bar{\eta}_i} > C\norm{\alpha}_{1+\epsilon}}
        \le
        2Ne^{-r/8}.
    \end{equation*}
    
    Second, in the truncated part, the original proof of CRMM Lemma 10 invokes its Lemma 8 to claim that the median term has $(1+\epsilon)$-th moment bounded by $rv$. Here we replace that step by Lemma~\ref{lemma:median_moment}, which yields the sharper bound
    \begin{equation*}
        \E\mbr{\abs{\bar{\eta}_i}^{1+\epsilon}\mid x_i} \le 2v.
    \end{equation*}
    Moreover, by Assumption~\ref{assumption:noise_symmetric}, the conditional distribution of $\bar{\eta}_i$ is symmetric around zero, hence
    \begin{equation*}
        \E\mbr{\bar{\eta}_i \mid \mathcal{F}_{i-1}} = 0.
    \end{equation*}
    Therefore, Lemma 9 in CRMM can be applied to the truncated sum with
    \begin{equation*}
        v_1 = 2v,
        \qquad
        \delta' = \frac{\delta}{4N}.
    \end{equation*}
    This gives
    \begin{equation*}
        \sum_{i=1}^{N}\alpha_i\bar{\eta}_i \ind{\abs{\alpha_i\bar{\eta}_i}\le C\norm{\alpha}_{1+\epsilon}}
        \le
        \rho \norm{\alpha}_{1+\epsilon}
    \end{equation*}
    with probability at least $1-\frac{\delta}{2}$, where
    \begin{equation*}
        \rho = 2C\ln\sbr{\frac{8N}{\delta}} + 4C^{-\epsilon}v.
    \end{equation*}
    
    Combining the truncated bound with the above control of the bad event finishes the proof. All remaining steps are identical to the proof of Lemma 10 in CRMM.
    \end{proof}

    \begin{myLemma}[Adapted quadratic concentration with explicit repeated-sampling probability]
    \label{lemma:adapted_quadratic_explicit_R}
    Suppose Assumptions~\ref{assumption:noise_symmetric} holds. For any fixed dataset size $N$ and any fixed coefficients $\beta_1,\dots,\beta_{N} \ge 0$, let
    \begin{equation*}
        \beta \define \sbr{\beta_1,\dots,\beta_{N}},
        \qquad
        C \define (4v)^{\frac{1}{1+\epsilon}},
        \qquad
        \rho \define 2C\ln\sbr{\frac{8N}{\delta}} + 4C^{-\epsilon}v.
    \end{equation*}
    Then,
    \begin{equation*}
        \sum_{i=1}^{N}\beta_i^2\bar{\eta}_i^2
        \le
        C\rho\norm{\beta}_{1+\epsilon}^2
    \end{equation*}
    holds with probability at least
    \begin{equation*}
        1-\frac{\delta}{2}-2Ne^{-r/8}.
    \end{equation*}
    \end{myLemma}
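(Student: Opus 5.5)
The plan is to mirror the proof of Lemma~\ref{lemma:adapted_linear_explicit_R}, adapting the quadratic concentration argument of CRMM (the analogue of its Lemma~10 for $\sum_i \beta_i^2\eta_i^2$) to the median noises $\bar{\eta}_i$. I would split each term by truncation at level $C\norm{\beta}_{1+\epsilon}$:
\begin{equation*}
    \sum_{i=1}^{N}\beta_i^2\bar{\eta}_i^2
    =
    \sum_{i=1}^{N}\beta_i^2\bar{\eta}_i^2\ind{\abs{\beta_i\bar{\eta}_i}\le C\norm{\beta}_{1+\epsilon}}
    +
    \sum_{i=1}^{N}\beta_i^2\bar{\eta}_i^2\ind{\abs{\beta_i\bar{\eta}_i}> C\norm{\beta}_{1+\epsilon}}.
\end{equation*}
The second sum vanishes unless some bad event $\abs{\beta_i\bar{\eta}_i}> C\norm{\beta}_{1+\epsilon}$ occurs.

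\textbf{Bad event.} This step is handled exactly as in Lemma~\ref{lemma:adapted_linear_explicit_R}. Markov's inequality with the $(1+\epsilon)$-moment bound and $C^{1+\epsilon}=4v$ gives, for each repeat $j$, $\P(\abs{\beta_i\eta_{i,j}}>C\norm{\beta}_{1+\epsilon})\le 1/4$. If the median exceeds the threshold on one side, at least $r/2$ of the $r$ indicator variables fire. Hoeffding's inequality then bounds each side by $e^{-r/8}$. A union bound over $i$ and both signs gives total probability at most $2Ne^{-r/8}$.

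\textbf{Truncated part.} Set $w_i \define \beta_i^2\bar{\eta}_i^2\ind{\cdot}/(C\norm{\beta}_{1+\epsilon}^2)$. Each $w_i$ lies in $[0,1]$, since $\beta_i^2\bar\eta_i^2\le C^2\norm{\beta}^2_{1+\epsilon}$ on the truncation. By Lemma~\ref{lemma:median_moment}, $\E[\abs{\bar{\eta}_i}^{1+\epsilon}]\le 2v$. Using $u^2\le u^{1+\epsilon}(C\norm{\beta}_{1+\epsilon})^{1-\epsilon}$ for $u\le C\norm{\beta}_{1+\epsilon}$, I bound the sum of means:
\begin{equation*}
    \sum_i \E[w_i] \le \frac{2vC^{1-\epsilon}\norm{\beta}_{1+\epsilon}^{1-\epsilon}\sum_i\beta_i^{1+\epsilon}}{C\norm{\beta}_{1+\epsilon}^2} = 2C^{-\epsilon}v.
\end{equation*}
A Bernstein/Freedman-type inequality for $[0,1]$-valued variables (CRMM's Lemma~9, or simply $\E e^{w}\le e^{(e-1)\E w}$) then gives $\sum_i w_i\le (e-1)\cdot 2C^{-\epsilon}v+\ln(2/\delta)$ with probability $1-\delta/2$.

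\textbf{Final bound.} Multiplying back by $C\norm{\beta}_{1+\epsilon}^2$ yields a bound of the form $C\norm{\beta}_{1+\epsilon}^2(\ln(2/\delta)+4C^{-\epsilon}v)$. Since $C\ge 1$ is not guaranteed, I need care when comparing $\ln(2/\delta)+4C^{-\epsilon}v$ with $\rho = 2C\ln(8N/\delta)+4C^{-\epsilon}v$. The main obstacle is exactly this constant bookkeeping: making the deviation term fit under $2C\ln(8N/\delta)$ when $C<1$. My fallback is to normalize by $C$ rather than by $C\norm{\beta}_{1+\epsilon}^2$. That is, I would apply the Bernstein step to the variables $\beta_i^2\bar\eta_i^2\ind{\cdot}/(C^2\norm{\beta}^2_{1+\epsilon})$, which lie in $[0,1]$, and then multiply back by $C^2\norm{\beta}_{1+\epsilon}^2 = C\norm{\beta}_{1+\epsilon}^2\cdot C$. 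The deviation then becomes $C\ln(\cdot)$ and the mean term stays of order $C^{-\epsilon}v$, matching $\rho$. The generous $\ln(8N/\delta)\ge\ln(2/\delta)$ absorbs any residual slack.

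A union bound over the bad event and the truncated-part event gives probability at least $1-\delta/2-2Ne^{-r/8}$.

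Throughout, I treat the $\beta_i$ as fixed, or predictable with respect to the natural filtration. Nonnegativity of the summands replaces the symmetry/zero-mean requirement used in Lemma~\ref{lemma:adapted_linear_explicit_R}.
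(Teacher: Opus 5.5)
Your proposal is correct and follows the paper's route: truncation at $C\norm{\beta}_{1+\epsilon}$, the Markov-plus-Hoeffding control of the median bad event giving $2Ne^{-r/8}$, and concentration of the truncated sum via the median moment bound $2v$ from Lemma~\ref{lemma:median_moment}. The paper delegates that last step to CRMM's Lemma~12, whereas you carry it out with an explicit Chernoff bound. One correction: under your first normalization $w_i$ lies only in $[0,C]$, not $[0,1]$, so that version breaks when $C>1$ (not only when $C<1$). Use the fallback normalization by $C^2\norm{\beta}_{1+\epsilon}^2$ directly, which gives $C\norm{\beta}_{1+\epsilon}^2\bigl(C\ln(2/\delta)+2(e-1)C^{-\epsilon}v\bigr)\le C\rho\norm{\beta}_{1+\epsilon}^2$ for every $C>0$.
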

    
    \begin{proof}
    The proof follows Lemma 11 in CRMM, again with the same two modifications as above.
    
    We first control the bad event
    \begin{equation*}
        \abs{\beta_i\bar{\eta}_i} > C\norm{\beta}_{1+\epsilon}.
    \end{equation*}
    For each repeated sample $\eta_{i,j}$, Markov's inequality yields
    \begin{equation*}
        \mathbb{P}\sbr{\abs{\beta_i\eta_{i,j}} > C\norm{\beta}_{1+\epsilon}}
        \le
        \frac{\beta_i^{1+\epsilon}v}{C^{1+\epsilon}\norm{\beta}_{1+\epsilon}^{1+\epsilon}}
        \le
        \frac14,
    \end{equation*}
    where $C=(4v)^{\frac{1}{1+\epsilon}}$.
    As in the proof of Lemma~\ref{lemma:adapted_linear_explicit_R}, if
    \begin{equation*}
        \beta_i\bar{\eta}_i > C\norm{\beta}_{1+\epsilon},
    \end{equation*}
    then more than half of the repeated samples must satisfy
    \begin{equation*}
        \beta_i\eta_{i,j} > C\norm{\beta}_{1+\epsilon},
    \end{equation*}
    and Hoeffding's inequality gives
    \begin{equation*}
        \mathbb{P}\sbr{\beta_i\bar{\eta}_i > C\norm{\beta}_{1+\epsilon}}
        \le
        e^{-r/8}.
    \end{equation*}
    The same bound holds for the lower tail, so
    \begin{equation*}
        \mathbb{P}\sbr{\abs{\beta_i\bar{\eta}_i} > C\norm{\beta}_{1+\epsilon}}
        \le
        2e^{-r/8}.
    \end{equation*}
    Summing over $i=1,\dots,N$ yields
    \begin{equation*}
        \sum_{i=1}^{N}\mathbb{P}\sbr{\abs{\beta_i\bar{\eta}_i} > C\norm{\beta}_{1+\epsilon}}
        \le
        2Ne^{-r/8}.
    \end{equation*}
    
    For the truncated part, the proof of CRMM Lemma 11 invokes its auxiliary quadratic truncation lemma (their Lemma 12), and the only place where the median moment enters is through the bound on
    \begin{equation*}
        \E\mbr{\abs{\bar{\eta}_i}^{1+\epsilon}\mid x_i}.
    \end{equation*}
    Using Lemma~\ref{lemma:median_moment}, we replace the original bound $rv$ by the sharper estimate
    \begin{equation*}
        \E\mbr{\abs{\bar{\eta}_i}^{1+\epsilon}\mid x_i} \le 2v.
    \end{equation*}
    Therefore, Lemma 12 of CRMM~\citep{NeurIPS'23:hvtglb} applies with
    \begin{equation*}
        v_1 = 2v,
        \qquad
        \delta' = \frac{\delta}{4N}.
    \end{equation*}
    This yields
    \begin{equation*}
        \sum_{i=1}^{N}\beta_i^2\bar{\eta}_i^2
        \ind{\beta_i^2\bar{\eta}_i^2 \le C^2\norm{\beta}_{1+\epsilon}^2}
        \le
        C\rho \norm{\beta}_{1+\epsilon}^2
    \end{equation*}
    with probability at least $1-\frac{\delta}{2}$, where
    \begin{equation*}
        \rho = 2C\ln\sbr{\frac{8N}{\delta}} + 4C^{-\epsilon}v.
    \end{equation*}
    
    Combining the truncated part with the above control of the bad event proves the claim. Apart from replacing the original median moment bound $rv$ by the sharper estimate $2v$, and keeping the repeated-sampling probability explicitly as $e^{-r/8}$ instead of substituting a concrete value of $r$, all other steps follow exactly the proof of Lemma 11 in CRMM.
    \end{proof}

\begin{myLemma}[Moment bound for the repeated-sampling median]
    \label{lemma:median_moment}
    Let $X_1,\dots,X_r$ be independent copies of a real-valued random variable $X$ such that $\E\mbr{\abs{X}^{1+\epsilon}} \le v.$ Define $\bar{X} \define \operatorname{median}\mbr{X_1,\dots,X_r}$. Then we have $\E\mbr{\abs{\bar{X}}^{1+\epsilon}} \le 2v$.
\end{myLemma}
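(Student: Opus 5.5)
The plan is a layer-cake (tail-integral) argument: reduce the moment of the median to tail probabilities, and control each tail probability with Markov's inequality applied to a count of large samples. No concentration bound beyond first-moment Markov should be needed. I expect this to yield the constant $2$ exactly.

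\textbf{Step 1 (tail event of the median).} For any $t>0$, I claim
\begin{equation*}
\{\abs{\bar X} > t\} \subseteq \Big\{ N_t \ge \tfrac{r}{2} \Big\},
\qquad
N_t \define \sum_{j=1}^r \ind{\abs{X_j} > t}.
\end{equation*}
This is the step where care is needed, because the definition of the median depends on the parity of $r$.
\begin{itemize}
\item For odd $r$, $\bar X = X_{((r+1)/2)}$. If $\bar X > t$, then the $(r+1)/2$ order statistics from the median upward all exceed $t$. If $\bar X < -t$, then the $(r+1)/2$ order statistics from the median downward are all below $-t$.
\item For even $r$, $\bar X$ is the average of $X_{(r/2)}$ and $X_{(r/2+1)}$. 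If this average exceeds $t$, then $X_{(r/2+1)} > t$, so at least $r/2$ samples exceed $t$. If it is below $-t$, then $X_{(r/2)} < -t$, so at least $r/2$ samples lie below $-t$.
\end{itemize}
In every case at least $r/2$ of the $\abs{X_j}$ exceed $t$, which gives the inclusion.

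\textbf{Step 2 (Markov on the count).} Since the $X_j$ are identically distributed, $\E[N_t] = r\,\P(\abs{X}>t)$. Markov's inequality then gives
\begin{equation*}
\P(\abs{\bar X}>t) \le \P\big(N_t \ge r/2\big) \le \frac{r\,\P(\abs{X}>t)}{r/2} = 2\,\P(\abs X > t).
\end{equation*}
This step uses only identical distribution; independence is not needed.

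\textbf{Step 3 (integrate).} Write the moment as a tail integral and apply Step 2 at the threshold $t = s^{1/(1+\epsilon)}$:
\begin{equation*}
\E\mbr{\abs{\bar X}^{1+\epsilon}} = \int_0^\infty \P\big(\abs{\bar X}>s^{1/(1+\epsilon)}\big)\,\diff s \le 2\int_0^\infty \P\big(\abs{X}>s^{1/(1+\epsilon)}\big)\,\diff s = 2\,\E\mbr{\abs X^{1+\epsilon}} \le 2v.
\end{equation*}

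The whole argument is elementary. The only place I expect an obstacle is the even-$r$ convention in Step 1, which is handled by the observation that the average of the two middle order statistics exceeding $t$ forces the larger one to exceed $t$ (and symmetrically for $-t$).
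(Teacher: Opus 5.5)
Your proof is correct and follows essentially the same route as the paper: a tail-event inclusion for the median, Markov's inequality applied to a count of large samples, and then the tail-integral (layer-cake) identity. The differences are cosmetic. The paper counts the upper and lower tails separately with threshold $k=\lceil r/2\rceil$, which gives the factor $r/k\le 2$. You instead use a single count of $\abs{X_j}>t$ with threshold $r/2$, and you make the even-$r$ averaging convention explicit, which the paper glosses over.
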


\begin{proof}
    Let $k \define \ceil{\frac{r}{2}}$. For any $u \ge 0$, define
    \begin{equation*}
        N_u^+ \define \sum_{j=1}^r \ind{X_j > u}, \qquad
        N_u^- \define \sum_{j=1}^r \ind{X_j < -u}.
    \end{equation*}
    If $\bar{X} > u$, then at least $k$ samples are strictly larger than $u$, and hence
    \begin{equation*}
        \bbr{\bar{X} > u} \subseteq \bbr{N_u^+ \ge k}.
    \end{equation*}
    Similarly, if $\bar{X} < -u$, then at least $k$ samples are strictly smaller than $-u$, and hence
    \begin{equation*}
        \bbr{\bar{X} < -u} \subseteq \bbr{N_u^- \ge k}.
    \end{equation*}
    Therefore, by Markov's inequality,
    \begin{equation*}
        \mathbb{P}\sbr{\bar{X} > u} \le \mathbb{P}\sbr{N_u^+ \ge k} \le \frac{\E\sbr{N_u^+}}{k} = \frac{1}{k} \sum_{j=1}^r \mathbb{P}\sbr{X_j > u} = \frac{r}{k}\,\mathbb{P}\sbr{X_1 > u}.
    \end{equation*}
    where the last step uses the i.i.d.\ assumption. By the same argument,
    \begin{equation*}
        \mathbb{P}\sbr{\bar{X} < -u}
        \le \frac{r}{k}\,\mathbb{P}\sbr{X_1 < -u}.
    \end{equation*}
    Summing the two bounds, we obtain
    \begin{equation*}
        \mathbb{P}\sbr{\abs{\bar{X}} > u} \le \mathbb{P}\sbr{\bar{X} > u} + \mathbb{P}\sbr{\bar{X} < -u} \le \frac{r}{k}\,\mathbb{P}\sbr{\abs{X_1} > u}.
    \end{equation*}

    Now apply the tail-integral identity:
    \begin{equation*}
        \begin{split}
            \E\mbr{\abs{\bar{X}}^{1+\epsilon}} &= (1+\epsilon)\int_0^\infty u^\epsilon \mathbb{P}\sbr{\abs{\bar{X}} > u} \diff u\le \frac{r}{k}(1+\epsilon)\int_0^\infty u^\epsilon \mathbb{P}\sbr{\abs{X_1} > u} \diff u \\
            &= \frac{r}{k} \E\mbr{\abs{X_1}^{1+\epsilon}} \le \frac{r}{k}v \le 2v.
        \end{split}
    \end{equation*}
    This completes the proof.
\end{proof}

\begin{myLemma}[Potential lemma]
    \label{lemma:potential_lemma}
    For
    \begin{equation*}
        V_N = \lambda I_d + \sum_{i=1}^N \phi(x_i)\phi(x_i)^\top,
    \end{equation*}
    we have
    \begin{equation*}
        \sum_{i=1}^{N}\norm{\phi(x_i)}_{V_{i-1}^{-1}}^2
        \le
        2\log \frac{\det(V_{N})}{\det(\lambda I_d)}
        \le
        2d\log\sbr{1+\frac{N}{\lambda d}}.
    \end{equation*}
    \end{myLemma}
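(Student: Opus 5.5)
The plan is the standard elliptical-potential argument, built on the matrix determinant lemma applied along the rank-one updates $V_i = V_{i-1} + \phi_i\phi_i^\top$, with $\phi_i \define \phi(x_i)$ and $V_0 = \lambda I_d$.

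\textbf{Step 1 (telescoping the determinant).} First I will show
\begin{equation*}
\det(V_i) = \det(V_{i-1})\sbr{1+\norm{\phi_i}_{V_{i-1}^{-1}}^2}.
\end{equation*}
This follows by factoring $V_i = V_{i-1}^{1/2}\sbr{I_d + V_{i-1}^{-1/2}\phi_i\phi_i^\top V_{i-1}^{-1/2}}V_{i-1}^{1/2}$. The middle matrix has eigenvalues $1+\norm{\phi_i}^2_{V_{i-1}^{-1}}$ and $1$ (the latter with multiplicity $d-1$). Telescoping from $i=1$ to $N$ then gives
\begin{equation*}
\log\frac{\det(V_N)}{\det(\lambda I_d)} = \sum_{i=1}^N \log\sbr{1+\norm{\phi_i}^2_{V_{i-1}^{-1}}}.
\end{equation*}

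\textbf{Step 2 (comparing $x$ with $\log(1+x)$).} Set $x_i \define \norm{\phi_i}^2_{V_{i-1}^{-1}}$. Since $V_{i-1}\succeq \lambda I_d$ and $\norm{\phi_i}_2\le 1$, we have $x_i \le 1/\lambda$. On $[0,1]$ the elementary inequality $x \le 2\log(1+x)$ holds: the function $2\log(1+x)-x$ vanishes at $0$, is concave, and equals $2\log 2 - 1>0$ at $x=1$. Hence if $\lambda \ge 1$, summing over $i$ gives the first inequality directly.

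This is the only delicate point, because the lemma does not state $\lambda\ge1$. I would first adopt $\lambda \ge 1$ as the implicit standing regime, as is usual in this literature. Alternatively, the proof can be stated for $\sum_i \min\{1,x_i\}$, or one can use $x \le \frac{1/\lambda}{\log(1+1/\lambda)}\log(1+x)$ on $[0,1/\lambda]$, which changes the constant $2$ when $\lambda<1$. I will flag this caveat explicitly rather than hide it.

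\textbf{Step 3 (bounding the determinant).} For the second inequality I will use AM--GM on the eigenvalues of $V_N$:
\begin{equation*}
\det(V_N) \le \sbr{\frac{\operatorname{tr}(V_N)}{d}}^d.
\end{equation*}
Since $\operatorname{tr}(V_N) = \lambda d + \sum_i \norm{\phi_i}_2^2 \le \lambda d + N$, this yields
\begin{equation*}
\log\frac{\det(V_N)}{\lambda^d} \le d\log\sbr{1+\frac{N}{\lambda d}}.
\end{equation*}
Multiplying by $2$ completes the chain. Steps 1 and 3 are routine; Step 2 is the main obstacle, because it is where the normalization $\lambda\ge 1$ (together with $\norm{\phi}_2\le1$) is genuinely needed.
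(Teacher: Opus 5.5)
Your argument is the standard elliptical-potential proof, and it is correct. The determinant identity in Step 1, the bound $x_i\le 1/\lambda$ from $V_{i-1}\succeq\lambda I_d$ and $\norm{\phi_i}_2\le 1$, the concavity argument for $x\le 2\log(1+x)$ on $[0,1]$, and the AM--GM trace bound in Step 3 all check out. The paper itself states this lemma without proof, implicitly importing it from the linear-bandit literature, and what you wrote is the proof it relies on.

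Your caveat is real, not an artifact of your route. Take $d=N=1$, $\phi_1=1$ and $\lambda=0.01$. The left-hand side is $100$, while $2\log(\det V_1/\det V_0)=2\log 101\approx 9.2$, so the first inequality is false as stated for small $\lambda$.

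The sharp condition, if the inequality is to hold for every feature sequence with $\norm{\phi_i}_2\le1$, is $1/\lambda\le y^\ast$. Here $y^\ast\approx 2.51$ is the positive root of $y=2\log(1+y)$. So $\lambda\ge 1$ is a clean sufficient condition but not strictly necessary, and your phrase ``genuinely needed'' slightly overstates it.

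This matters downstream. Theorem~\ref{theorem:estimation_error} allows any $\lambda>0$ and sums the unclipped quantities $\beta_i^2=\norm{\phi(x_i)}_{V_{i-1}^{-1}}^2$. Of your two repairs, only the rescaled constant $\frac{1/\lambda}{\log(1+1/\lambda)}$ can be plugged into that proof; this constant is at most $2$ exactly when $1/\lambda\le y^\ast$. The $\min\{1,x_i\}$ version would not suffice there.
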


\end{document}